\newtheorem{theorem}{Theorem}
\newtheorem{definition}{Definition}
\newtheorem{lemma}{Lemma}
\newcommand{\framework}{PatchCleanser\xspace}
\algnewcommand{\LeftCommenta}[1]{\Statex \hspace{1.3em} \(\triangleright\) #1}
\algnewcommand{\LeftCommentb}[1]{\Statex \hspace{2.7em} \(\triangleright\) #1}
\def\BibTeX{{\rm B\kern-.05em{\sc i\kern-.025em b}\kern-.08em
    T\kern-.1667em\lower.7ex\hbox{E}\kern-.125emX}}
\newcommand{\MohNote}[1]{}%{{\color{red} [{\bf Mohammad:}  #1]}}
\newcommand{\SanNote}[1]{}%{{\color{red} [{\bf Sanjam:}  #1]}}
\newcommand{\SomNote}[1]{}%{{\color{red} [{\bf Somesh:}  #1]}}
\newcommand{\SaeedNote}[1]{}%{{\color{green} [{\bf Saeed:}  #1]}}
\newcommand{\AbhrNote}[1]{}%{{\color{red} [{\bf Abhradeep:}  #1]}}
\newcommand{\SamNote}[1]{}%{{\color{orange} [{\bf Sam:}  #1]}}
\newcommand{\fhat}[2]{\ifthenelse{\equal{#2}{}}{\hat{f}(#1)}{\ifthenelse{\equal{#2}{0}}{\hat{f}(\emptyset)}{\hat{f}(#1_{\leq #2})}}}
\newcommand{\ftild}[2]{\ifthenelse{\equal{#2}{}}{\tilde{f}(#1)}{\ifthenelse{\equal{#2}{0}}{\tilde{f}(\emptyset)}{\tilde{f}(#1_{\leq #2})}}}
\newcommand{\ftildstar}[2]{\ifthenelse{\equal{#2}{}}{\tilde{f^*}(#1)}{\ifthenelse{\equal{#2}{0}}{\tilde{f^*}(\emptyset)}{\tilde{f^*}(#1_{\leq #2})}}}
\newcommand{\ghat}[2]{\ifthenelse{\equal{#2}{}}{\hat{g}(#1)}{\ifthenelse{\equal{#2}{0}}{\hat{g}(\emptyset)}{\hat{g}(#1_{\leq #2})}}}
\newcommand{\mfix}[2]{\ifthenelse{\equal{#2}{}}{m(#1)}{\ifthenelse{\equal{#2}{0}}{m(\emptyset)}{m(#1_{\leq #2})}}}
\newcommand{\fapp}[2]{\ifthenelse{\equal{#2}{}}{\tilde{f}(#1)}{\ifthenelse{\equal{#2}{0}}{\tilde{f}(\emptyset)}{\tilde{f}(#1_{\leq #2})}}}
\newcommand{\parag}[1]{\noindent{\bf #1}}
\renewcommand{\paragraph}[1]{\parag{#1}}
\newcommand{\aSF}{\mathsf{a}}
\newcommand{\gSF}{\mathsf{g}}
\newcommand{\hSF}{\mathsf{h}}
\newcommand{\avr}[2]{\ifthenelse{\equal{#2}{}}{\aSF({#1})}{\ifthenelse{\equal{#2}{0}}{\aSF(\emptyset)}{\aSF({#1}_{\leq #2})}}}
\newcommand{\avrMax}[2]{\ifthenelse{\equal{#2}{}}{\aSF^*({#1})}{\ifthenelse{\equal{#2}{0}}{\aSF^*(\emptyset)}{\aSF^*({#1}_{\leq #2})}}}
\newcommand{\avrApp}[2]{\ifthenelse{\equal{#2}{}}{\tilde{\aSF}({#1})}{\ifthenelse{\equal{#2}{0}}{\tilde{\aSF}(\emptyset)}{\tilde{\aSF}({#1}_{\leq #2})}}}
\newcommand{\avrAppMax}[2]{\ifthenelse{\equal{#2}{}}{\tilde{\aSF}^*({#1})}{\ifthenelse{\equal{#2}{0}}{\tilde{\aSF}^*(\emptyset)}{\tilde{\aSF}^*({#1}_{\leq #2})}}}
\newcommand{\ArgMax}[2]{\ifthenelse{\equal{#2}{}}{\hSF({#1})}{\ifthenelse{\equal{#2}{0}}{\hSF(\emptyset)}{\hSF({#1}_{\leq #2})}}}
\newcommand{\AppArgMax}[2]{\ifthenelse{\equal{#2}{}}{\tilde{\hSF}({#1})}{\ifthenelse{\equal{#2}{0}}{\tilde{\hSF}(\emptyset)}{\tilde{\hSF}({#1}_{\leq #2})}}}
\newcommand{\gain}[2]{\ifthenelse{\equal{#2}{}}{\gSF(#1)}{\gSF(#1_{\leq #2})}}
\newcommand{\gainMax}[2]{\ifthenelse{\equal{#2}{}}{\gSF^*(#1)}{\gSF^*(#1_{\leq #2})}}
\newcommand{\gainApp}[2]{\ifthenelse{\equal{#2}{}}{\tilde{\gSF}(#1)}{\tilde{\gSF}(#1_{\leq #2})}}
\newcommand{\gainAppMax}[2]{\ifthenelse{\equal{#2}{}}{\tilde{\gSF}^*(#1)}{\tilde{\gSF}^*(#1_{\leq #2})}}
\newcommand{\pr}[2][]{\Pr_{\ifthenelse{\isempty{#1}}{}{{#1}}}\left[{#2}\right]}
\newcommand{\remove}[1]{}
\newcommand{\st}{\mathrm{~~s.t.~~}}
\newcommand{\cA}{{\mathcal A}}
\newcommand{\cL}{{\mathcal L}}
\newcommand{\cM}{{\mathcal M}}
\newcommand{\cP}{{\mathcal P}}
\newcommand{\cR}{{\mathcal R}}
\newcommand{\cS}{{\mathcal S}}
\newtheorem*{claim}{Claim}
\newcommand{\sdotfill}{\textcolor[rgb]{0.8,0.8,0.8}{\dotfill}} %change to \cdotfill later
\def\th@protocol{%
    \normalfont % body font
    \setbeamercolor{block title example}{bg=orange,fg=white}
    \setbeamercolor{block body example}{bg=orange!20,fg=black}
    \def\inserttheoremblockenv{exampleblock}
  }
\newtheorem{proto}[theorem]{Protocol}
\newtheorem{protoc}[theorem]{Protocol}
\newcommand{\namedref}[2]{#1~\ref{#2}}
\newcommand{\torestate}[3]{%
\expandafter \def \csname BBRESTATE #2 \endcsname{#3}
\theoremstyle{plain}
\newtheorem{BBRESTATETHMNUM#2}[theorem]{#1}
\begin{BBRESTATETHMNUM#2}\label{#2}\csname BBRESTATE #2 \endcsname   \end{BBRESTATETHMNUM#2}
\newtheorem*{BBRESTATETHMNONNUM#2}{\namedref{#1}{#2}}
}
\newcommand{\restate}[1]{\begin{BBRESTATETHMNONNUM#1}[Restated] \csname BBRESTATE #1 \endcsname
\end{BBRESTATETHMNONNUM#1}}
\begin{document}

\title{\framework: Certifiably Robust Defense against Adversarial Patches \\for Any Image Classifier}

\author{{\rm Chong Xiang}\\Princeton University\and{\rm Saeed Mahloujifar}\\Princeton University\and{\rm Prateek Mittal}\\Princeton University}
\maketitle

\begin{abstract}
The adversarial patch attack against image classification models aims to inject adversarially crafted pixels within a restricted image region (i.e., a patch) for inducing model misclassification. This attack can be realized in the physical world by printing and attaching the patch to the victim object; thus, it imposes a real-world threat to computer vision systems. To counter this threat, we design \framework as a certifiably robust defense against adversarial patches. In \framework, we perform two rounds of \textit{pixel masking} on the input image to neutralize the effect of the adversarial patch. This image-space operation makes \framework compatible with any state-of-the-art image classifier for achieving high accuracy. Furthermore, we can prove that \framework will always predict the correct class labels on certain images against any adaptive white-box attacker within our threat model, achieving certified robustness. We extensively evaluate \framework on the ImageNet, ImageNette, CIFAR-10, CIFAR-100, SVHN, and Flowers-102 datasets and demonstrate that our defense achieves similar clean accuracy as state-of-the-art classification models and also significantly improves certified robustness from prior works. Remarkably, \framework achieves 83.9\% top-1 clean accuracy and 62.1\% top-1 certified robust accuracy against a 2\%-pixel square patch anywhere on the image for the 1000-class ImageNet dataset.\footnote{Our source code is available at \url{https://github.com/inspire-group/PatchCleanser}.}
\end{abstract}

\section{Introduction}\label{sec-introduction}
The adversarial patch attack~\cite{brown2017adversarial,karmon2018lavan,yang2020patchattack} against image classification models aims to induce test-time misclassification. A patch attacker injects adversarially crafted pixels within a localized and restricted region (i.e., a patch) and can realize a physical-world attack by printing and attaching the patch to the victim object. The physically realizable nature of patch attacks imposes a significant threat to real-world computer vision systems. 

To secure the deployment of critical computer vision systems, there has been an active research thread on certifiably robust defenses against adversarial patches~\cite{chiang2020certified,zhang2020clipped,levine2020randomized,xiang2021patchguard,metzen2021efficient,cropping}. These defenses aim to provide a certifiable guarantee on making correct predictions on certain images, even in the presence of an adaptive white-box attacker. This strong robustness property provides a pathway toward ending the arms race between attackers and defenders.

\begin{figure*}
    \centering
    \includegraphics[width=\linewidth]{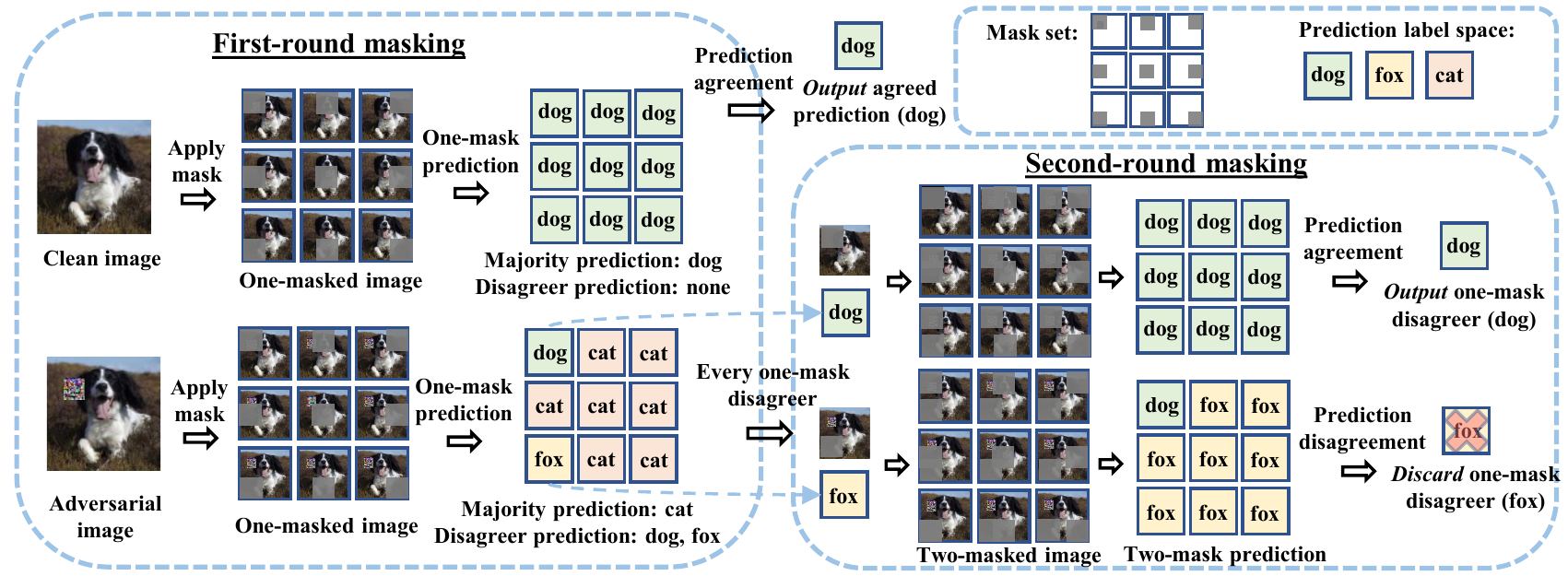}
    \vspace{-1.5em}
    \caption{\textbf{Overview of double-masking defense.} The defense applies masks to the input image and evaluates model prediction on every masked image. \textit{Clean image:} all one-mask predictions typically agree on the correct label (``dog"); our defense outputs the agreed prediction. \textit{Adversarial image:} one-mask predictions have a disagreement; we aim to recover the benign prediction. We first categorize all one-mask predictions into the \textit{majority prediction} (the one with the highest prediction label occurrence; the label ``cat" in this example) and \textit{disagreer predictions} (the ones that disagree with the majority; the labels ``dog" and ``fox"). For every mask that leads to a disagreer prediction, we add a set of second masks and evaluate two-mask predictions. If all two-mask predictions agree with this one-mask disagreer, we \textit{output} its prediction label (the label ``dog"; illustrated in the upper row of the second-round masking); otherwise, we \textit{discard} it (the label ``fox"; in the lower row of the second-round masking).}% We discuss the algorithm details in Section~\ref{sec-two-round} and prove its robustness for certain images against any adaptive white-box attacker within our threat model in Section~\ref{sec-provable}.}
    \label{fig-overview}
\end{figure*}

\textbf{Limitation of prior works: the dependence on specific model architectures.} While prior works have made significant contributions to certifiable robustness, their defense performance is hindered by their dependence on specific model architectures. The most common architecture constraint of state-of-the-art certifiably robust defenses against patch attacks~\cite{zhang2020clipped,levine2020randomized,xiang2021patchguard,metzen2021efficient,cropping} is the dependence on small receptive fields (receptive field is the region of the input image that an extracted feature is looking at, or affected by). The small receptive field bounds the number of features that can be corrupted by the adversarial patch but also limits the information received by each feature. As a result, defenses with small receptive fields are limited in their classification accuracy: for example, the best top-1 clean accuracy on ImageNet~\cite{deng2009imagenet} achieved by prior certifiably robust defenses is around 55\%~\cite{xiang2021patchguard,cropping} while state-of-the-art undefended classification models can attain an accuracy of 80\%-90\%~\cite{bit,wightman2021resnet,vit,resmlp}. The poor clean accuracy discourages the real-world deployment of proposed defenses and also limits the achievable robustness (since the robust accuracy can be no higher than the clean accuracy).\footnote{Chiang et al.~\cite{chiang2020certified} proposed the first certifiably robust defense against adversarial patches via Interval Bound Propagation~\cite{gowal2018effectiveness,mirman2018differentiable}. This defense does not rely on small receptive fields but requires extremely expensive model training. As a result, it is only applicable to small classification models (with limited performance) and low-resolution images.}

\textbf{Limitation of prior works: abstention from predictions.} Minority Reports (MR)~\cite{mccoyd2020minority} is the only certifiably robust defense with no assumption on the model architecture; however, it suffers from a weaker security guarantee of being able to only \textit{detect} a patch attack (i.e., alert when an attack is detected). As a result, an attacker can force the model to always alert and abstain from making a prediction. The inability of a model to make a prediction can compromise functionality in applications where human fallback is unavailable (e.g., level-5 autonomous vehicles without human drivers).

\textbf{\framework: architecture-agnostic certifiably robust image classification (without abstention).} In order to overcome the limitations of prior works, we propose \framework as a certifiably robust image classification (without any abstention) defense that is compatible with any image classifier. The high-level idea of \framework is to robustly remove/mask all adversarial pixels on the input image so that we can obtain accurate predictions (on the masked images) from \textit{any} state-of-the-art image classifier. 

However, the key question is: \textit{How can we mask out the patch if the patch location is unknown?} An intuitive idea is to place a mask at all possible image locations and evaluate model predictions on every masked image. If the mask is large enough, then at least one masked image is benign (i.e., no adversarial pixels) and is likely to give a correct prediction (a similar intuition is used in MR for attack detection~\cite{mccoyd2020minority}). Unfortunately, despite the existence of one benign (and usually correct) masked prediction, it is challenging to robustly distinguish this benign prediction from other masked predictions that can be adversarially manipulated by an adaptive attacker. To solve this challenge, we propose a \textit{double-masking} algorithm that achieves certifiable robustness.

We provide a defense overview in Figure~\ref{fig-overview}. The double-masking algorithm involves two rounds of pixel masking. In the first round of masking (left of the figure), we apply every mask from a \textit{mask set} to the input image and evaluate model predictions on \textit{one-masked} images. The mask set is constructed in a way that at least one mask can remove the entire patch (regardless of the patch location) and give a benign (and usually correct) masked prediction. When the algorithm operates on a clean image, all one-mask predictions usually reach a unanimous agreement, and \framework will output the agreed label (top of Figure~\ref{fig-overview}). On the other hand, for an adversarial image, since at least one mask can remove the patch and recover the benign prediction, we will see a disagreement between the benign prediction and malicious predictions (left bottom of Figure~\ref{fig-overview}). To robustly identify the benign one-mask prediction, we perform a second round of masking: we apply a set of second masks to every \textit{one-masked} image and use inconsistencies in model predictions on a set of \textit{two-masked} images to filter out all malicious one-mask predictions (right of the figure). We will present the details of our double-masking defense in Section~\ref{sec-two-round} and demonstrate that it provides certifiable robustness for certain images against \textit{any patch attacker within our threat model} in Section~\ref{sec-provable}.

\textbf{Evaluation: state-of-the-art clean accuracy and certified robust accuracy.} We instantiate \framework with three representative state-of-the-art architectures for image classification: ResNet~\cite{resnet}, Vision Transformer (ViT)~\cite{vit}, and ResMLP~\cite{resmlp}. We evaluate our defense performance on six image datasets: ImageNet~\cite{deng2009imagenet}, ImageNette~\cite{imagenette}, CIFAR-10~\cite{cifar}, CIFAR-100~\cite{cifar}, SVHN~\cite{svhn}, and Flowers-102~\cite{flowers}.
We demonstrate that \framework achieves state-of-the-art (clean) classification accuracy and also greatly improves the certified robust accuracy from prior works~\cite{chiang2020certified,zhang2020clipped,levine2020randomized,xiang2021patchguard,metzen2021efficient}. In Figure~\ref{fig-main-comparison}, we plot the clean accuracy and certified robust accuracy of different defenses on the ImageNet dataset~\cite{deng2009imagenet} to visualize our significant performance improvements.
Our contributions can be summarized as follows:
\begin{itemize}
    \setlength\itemsep{0em}
    \item We present \framework's double-masking defense that is compatible with any image classifier to mitigate the threat of adversarial patch attacks.
    \item We formally prove the certifiable robustness of \framework for certain images against any adaptive white-box attacker within our threat model.
    \item We evaluate \framework on three state-of-the-art classification models and six benchmark datasets and demonstrate the significant improvements in clean accuracy and certified robust accuracy (e.g., Figure~\ref{fig-main-comparison}).
\end{itemize}

\begin{figure}[t]
    \centering
    \includegraphics[width=0.7\linewidth]{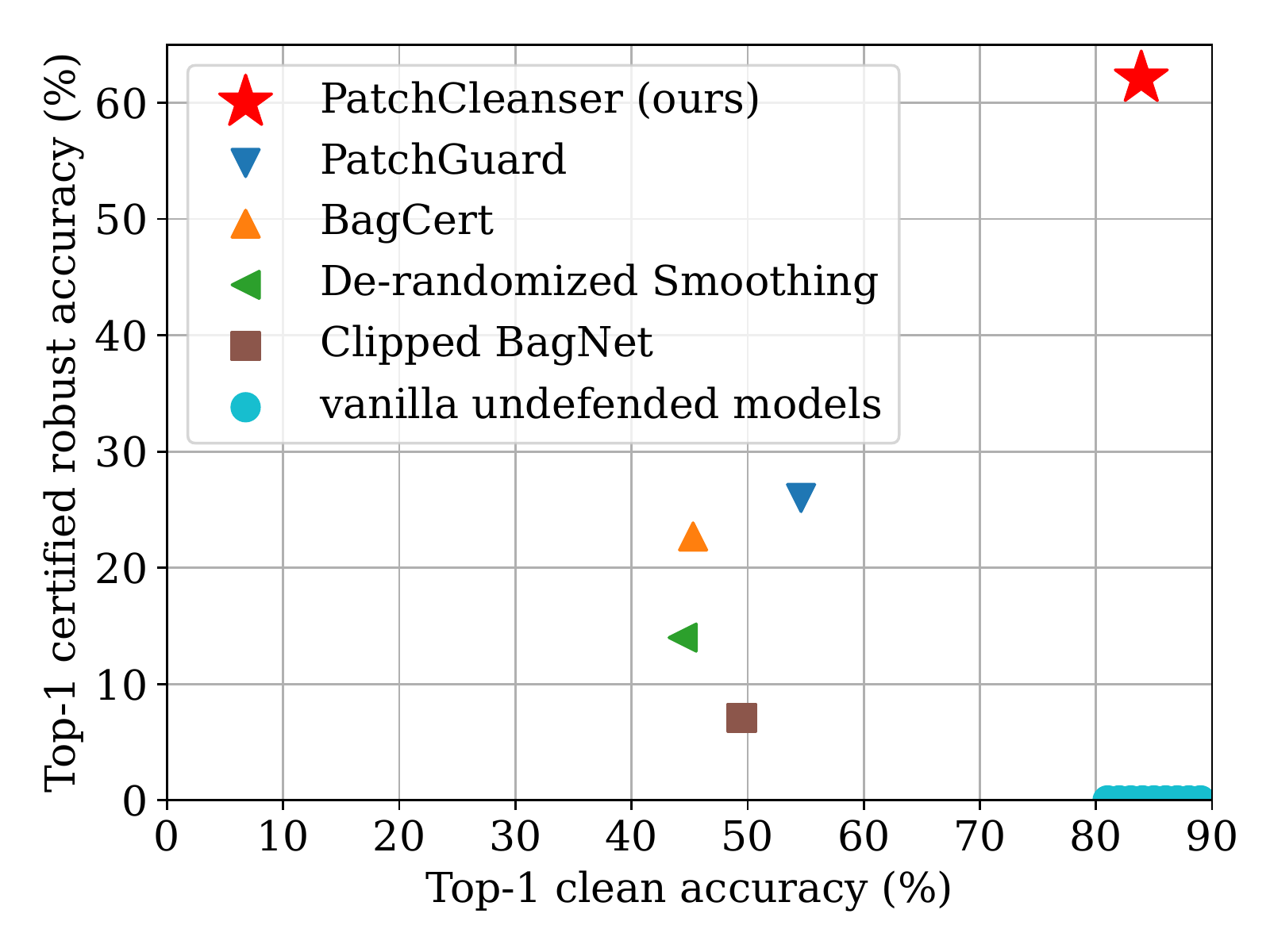}
    \vspace{-1.5em}
    \caption{ImageNet~\cite{deng2009imagenet} clean accuracy  and certified robust accuracy of \framework and PatchGuard~\cite{xiang2021patchguard}, BagCert~\cite{metzen2021efficient}, De-randomized Smoothing~\cite{levine2020randomized}, and Clipped BagNet~\cite{zhang2020clipped}; 
    the certified robust accuracy is evaluated against a 2\%-pixel square patch anywhere over the image.}
    \label{fig-main-comparison}
  
\end{figure}

\section{Problem Formulation}\label{sec-formulation}
In this section, we formulate image classification models, attack threat models, and our defense objectives.

\subsection{Image Classification Model}
In this paper, we focus on the image classification problem. We use $\mathcal{X}\subset [0,1]^{W\times H\times C}$ to denote the image space, where each image has width $W$, height $H$, number of channels $C$, and the pixels are re-scaled to $[0,1]$. We further denote the label space as $\mathcal{Y}$. An image classification model is denoted as $\mathbb{F}:\mathcal{X}\rightarrow \mathcal{Y}$, which takes an image $\mathbf{x}\in \mathcal{X}$ as input and predicts the class label $y\in\mathcal{Y}$. 

We do not make any assumption on the architecture of the image classification model $\mathbb{F}$. Our defense is compatible with any popular model such as ResNet~\cite{resnet}, Vision Transformer~\cite{vit}, and ResMLP~\cite{resmlp}. 

\subsection{Threat Model}
\textbf{Attack objective.} We focus on test-time evasion attacks against image classification models. Given a model $\mathbb{F}$, an image $\mathbf{x}$, and its true class label $y$, the attacker aims to find an image $\mathbf{x}^\prime \in \mathcal{A}(\mathbf{x}) \subset \mathcal{X}$ satisfying a constraint $\mathcal{A}$ such that $\mathbb{F}(\mathbf{x}^\prime) \neq y$. The constraint $\mathcal{A}$ is defined by the attacker's threat model, which we discuss next. 

\textbf{Attacker capability.} The patch attacker has \textit{arbitrary} control over the image pixels in a \textit{restricted} region, and this region can be \textit{anywhere} on the image.
%, even over the salient object. 
Formally, we use a binary tensor $\mathbf{r} \in \{0,1\}^{W\times H}$ to represent the restricted region, where the \textit{pixels within the region are set to $0$} and others are set to $1$. We further use $\mathcal{R}$ to denote a set of regions $\mathbf{r}$ (i.e., a set of patches at different locations). Then, we can express the patch attacker's constraint set $\mathcal{A}_{\mathcal{R}}(\mathbf{x})$ as $\{\mathbf{r}\odot \mathbf{x} + (\mathbf{1}-\mathbf{r}) \odot \mathbf{x}^{\prime} \ |\  \mathbf{x},\mathbf{x}^\prime \in \mathcal{X}, \mathbf{r} \in \mathcal{R}\}$, where $\odot$ refers to the element-wise multiplication operator. When clear from the context, we drop $\mathcal{R}$ and use $\mathcal{A}$ instead of $\mathcal{A}_{\mathcal{R}}$.

\textbf{An open research question: one single square patch at any image location.}
In this paper, we primarily focus on a popular open research question where $\mathbf{r}$ represents \textit{one square region that can be anywhere on the image} and the defender has a \textit{conservative estimation of the patch size}.\footnote{We note that similar assumptions on defender's knowledge are also commonly used in defenses against conventional global $L_p$ perturbations. For example, verifiably robust network training~\cite{gowal2018effectiveness,mirman2018differentiable} and empirical adversarial training~\cite{goodfellow2014explaining,madry2017towards} need to know the norm and magnitude of the perturbations.} This enables a performance comparison with prior works that also focus on this setting~\cite{chiang2020certified,zhang2020clipped,levine2020randomized,xiang2021patchguard,metzen2021efficient} (Section~\ref{sec-evaluation}). Moreover, we note that designing high-performance certifiably robust defenses under this setting is extremely challenging due to attacker's \textit{arbitrary control} over the patch location and patch content as well as \textit{full knowledge} of the defense setup.

\textbf{Flexibility of \framework.}
Nevertheless, our defense design is general and can be easily adapted for even stronger attackers. In addition to evaluating our defense under the setting of \textit{one single square patch} in Section~\ref{sec-evaluation}, we also quantitatively analyze our defense against attackers who can use \textit{a set of different patch shapes} (e.g., all possible rectangle shapes covering a certain area at any image location) and who can apply \textit{multiple patches} (e.g., two patches at any image location) in Section~\ref{sec-discussion-multiple-shape-patch}.

\subsection{Defense Objective}\label{sec-formulation-defense}
We design \framework with four major objectives.

\textbf{Robust classification.} We aim to build a defended model $\mathbb{D}$ for \textit{robust classification}. That is, we want to have $\mathbb{D}(\mathbf{x}^\prime)=\mathbb{D}(\mathbf{x})=y$ for a clean data point $(\mathbf{x},y)\in\mathcal{X}\times\mathcal{Y}$ and any adversarial example $\mathbf{x}^\prime \in \mathcal{A}(\mathbf{x})$. Note that we aim to \emph{recover the correct prediction without any abstention}, which is harder than merely detecting an attack (e.g., Minority Reports~\cite{mccoyd2020minority}).

\textbf{Certifiable robustness.} We aim to design defenses with certifiable robustness~\cite{chiang2020certified,zhang2020clipped,levine2020randomized,xiang2021patchguard,metzen2021efficient}: given a clean data point $(\mathbf{x},y)$, the defended model can always make a correct prediction for any adversarial example within the threat model, i.e., $\mathbb{D}(\mathbf{x}^\prime)=\mathbb{D}(\mathbf{x})=y,\ \forall\ \mathbf{x}^\prime\in\mathcal{A}(\mathbf{x})$. %\footnote{Heuristic-based defenses like Digital Watermarking~\cite{hayes2018visible} and Local Gradient Smoothing~\cite{naseer2019local} have been shown broken against an adaptive white-box attacker who has full knowledge of the defense setup~\cite{chiang2020certified}. Therefore, it is important to design defenses that have certifiable security guarantees.} 
We will design a robustness certification procedure, which takes a clean data point $(\mathbf{x},y)$ and threat model $\mathcal{A}$ as inputs, to check if the robustness can be certified. The certification procedure should account for all possible attackers within the threat model $\mathcal{A}$, who could have full knowledge of our defense and full access to our model parameters. We note that the certification provides a \textit{provable lower bound} for model robustness against adaptive attacks. This is a significant improvement over traditional empirical defenses~\cite{hayes2018visible,naseer2019local,wu2019defending,rao2020adversarial,Mu2021defending,cosgrove2020robustness}, whose robustness could be undermined by an adaptive attacker. 

We note that we only discuss robustness certification for \textit{labeled} images because the certification procedure needs ground-truth labels to check the correctness of model predictions. In our evaluation, we apply our certification procedure to labeled test sets and calculate the fraction of certified images, termed  as \textit{certified robust accuracy}, as our robustness metric. This accuracy indicates the estimated robustness (against the strongest adaptive attacks) when we deploy the defense; we do not aim to guarantee robustness/correctness for individual images in the wild.

\textbf{Compatibility with any model architecture.} As discussed in Section~\ref{sec-introduction}, prior works~\cite{chiang2020certified,zhang2020clipped,levine2020randomized,xiang2021patchguard,metzen2021efficient} on certifiably robust image classification suffer from their dependence on the model architecture (e.g., small receptive fields). Such dependence limits the model performance and hinders the practical deployment of the defense. In \framework, we aim to design a defense that is compatible with any state-of-the-art model architecture to achieve high defense performance (recall Figure~\ref{fig-main-comparison}) and benefit from any advancement in image classification research.

\textbf{Scalability to high-resolution images.} Finally, we aim to design our defense to be efficient enough to scale to high-resolution images. Since state-of-the-art models usually use high-resolution images for better classification performance~\cite{resnet,bit,vit,resmlp}, defenses that only work for low-resolution images~\cite{chiang2020certified} have limited applicability.

\section{\framework Design}\label{sec-defense}

In this section, we introduce our \framework defense, which is agnostic to model architectures and achieves certifiable robustness. \framework performs two rounds of pixel masking (i.e., \textit{double-masking)} on the input image to neutralize the effect of the adversarial patch (without knowing the location and content of the patch). We present our formulation of pixel masks in Section~\ref{sec-pixel-mask} and then discuss the details of double-masking algorithm in Section~\ref{sec-two-round}. We prove the robustness of our double-masking defense for certain images in Section~\ref{sec-provable}. Finally, we discuss implementation details and present an end-to-end \framework defense pipeline in Section~\ref{sec-defense-implementation}. We provide a summary of important notation in Table~\ref{tab-notation}.

\subsection{Pixel Mask Set}\label{sec-pixel-mask}
\framework aims to mask out the entire patch on the image and obtain accurate predictions from any state-of-the-art classification model. In this subsection, we introduce the concept of a mask set used in our masking operations.

\textbf{Mask set formulation.} We represent each mask as a binary tensor $\mathbf{m} \in \{0,1\}^{W\times H}$ in the same shape as the $W \times H$ images; \textit{the elements within the mask take values of $0$}, and others are $1$. We further denote a set of masks as $\cM$ (these are similar to the definitions of $\mathbf{r}$ and $\mathcal{R}$). We require the mask set $\cM$ to have the $\cR$-covering property as defined below.

\begin{definition}[$\mathcal{R}$-covering]A mask set $\cM$ is $\mathcal{R}$-covering if, for any patch in the patch region set $\cR$, at least one mask from the mask set $\cM$ can cover the entire patch, i.e., 
$$\forall\ \mathbf{r}\in \mathcal{R},\ \exists\ \mathbf{m} \in \mathcal{M} \st \mathbf{m}[i,j]\leq \mathbf{r}[i,j], \  \forall (i,j)$$
\end{definition}

\noindent For a particular patch region set $\cR$, there are multiple valid $\cR$-covering mask sets $\cM$ with a variable number of masks and different mask sizes/shapes. We will discuss a general approach for $\cR$-covering mask set generation in Section~\ref{sec-defense-implementation}. In the next subsection, we introduce how to perform our double-masking defense with a $\cR$-covering mask set.

\subsection{Double-masking for Robust Prediction}\label{sec-two-round}

The double-masking algorithm is the core module of \framework; it performs two rounds of masking with an $\cR$-covering mask set to robustly recover the correct prediction label. Our defense is based on the intuition that model predictions on images without adversarial pixels are generally correct and invariant to the masking operation: in Figure~\ref{fig-overview}, we can visually recognize the dog even with one or two masks on the image.\footnote{A similar intuition is used in existing works~\cite{hayes2018visible,mccoyd2020minority,chou2020sentinet}, but we are the first to design a certifiably robust image classification defense without abstention.} In this subsection, we first introduce the high-level defense design and then explain the algorithm details.

\begin{table}[t]
    \centering
    \caption{Summary of important notation}
    %\vspace{-1em}
 \resizebox{\linewidth}{!}
  { \begin{tabular}{l|l|l|l}
    \toprule
    \textbf{Notation} & \textbf{Description} & \textbf{Notation} & \textbf{Description} 
    \\
    \midrule
    $\mathbb{F}$ &Undefended model& $\bar{p},\bar{p}_0\times\bar{p}_1$ & Estimated patch size\\
 $\mathbf{x}\in\mathcal{X}$ & Input image &  $p,p_0\times p_1$ &Actual patch size \\
 $y,\bar{y}\in\mathcal{Y}$ &  Class label &  $k,k_0\times k_1$ &Budget of  \#masks   \\
     $\mathbf{m}\in\mathcal{M}$ & Pixel mask & $m,m_0\times m_1$ & Mask size\\
     $\mathbf{r}\in\mathcal{R}$ & Patch region & $s,s_0\times s_1$ & Mask stride\\
  $\mathcal{P}\subset\mathcal{M}\times\mathcal{Y}$ &Masked prediction set &$n,n_0\times n_1$&Image size\\
      \bottomrule
    \end{tabular}}
    \label{tab-notation}
\end{table}

\textbf{First-round masking: detecting a prediction disagreement.} Recall that Figure~\ref{fig-overview} gives an overview of our double-masking algorithm. In the first round of masking, we apply every mask $\mathbf{m}$ from the $\cR$-covering mask set $\mathcal{M}$ to the input image and evaluate all \textit{one-mask predictions} (left of Figure~\ref{fig-overview}). In the clean setting, all one-mask predictions are likely to reach a unanimous agreement on the correct label, and we will output the agreed prediction (top of Figure~\ref{fig-overview}). In the adversarial setting, at least one mask will remove all adversarial pixels; thus, at least one one-mask prediction is benign and likely to be correct (bottom left of Figure~\ref{fig-overview}). In this case, we will detect a disagreement in one-mask predictions (benign versus malicious); we will then perform a second round of masking to settle this disagreement.

\textbf{Second-round masking: settling the prediction disagreement.} We first divide all one-mask prediction labels into two groups: the \textit{majority prediction} (the prediction label with the highest occurrence) and the \textit{disagreer predictions} (other labels that disagree with the majority). We need to decide which prediction label to trust (i.e., the majority or one of the disagreers). To solve this problem, we iterate over every \textit{disagreer} prediction, get its corresponding first-round mask, and add a second mask from our mask set $\cM$ to compute a set of \textit{two-mask predictions} (right of Figure~\ref{fig-overview}). If the first-round disagreer mask removes the patch, every second-round mask is applied to a ``clean" image, and thus all two-mask predictions (evaluated with one first-round mask and different second-round masks) are likely to have a unanimous agreement. We can trust and return this agreed prediction. On the other hand, if the first-round disagreer mask does not remove the patch, the one-masked image is still ``adversarial", and the second-round mask will cause a disagreement in two-mask predictions (when one of the second masks covers the patch). %, the two-mask prediction changes to the correct label and causes a disagreement)
In this case, we discard this one-mask disagreer. Finally, if we try all one-mask disagreer predictions and no prediction label is returned, we trust and return the one-mask majority prediction as the default exit case.

\textbf{Algorithm details.} We provide the defense pseudocode in Algorithm~\ref{alg-prediction}. The defense takes an image $\mathbf{x}$, an undefended model $\mathbb{F}$, and an $\cR$-covering mask set $\mathcal{M}$ as inputs and outputs a robust prediction $\bar{y}$. Line~\ref{ln-single-masking}-\ref{ln-first-mask-e} illustrates the first-round masking; Line~\ref{ln-double-masking-s}-\ref{ln-double-masking-e} demonstrates the second-round masking. 

\textit{Details of first-round masking.} In Algorithm~\ref{alg-prediction}, we first call the masking sub-procedure $\textsc{MaskPred}(\cdot)$ using the mask set $\mathcal{M}$ (Line~\ref{ln-single-masking}). The mask set $\mathcal{M}$ needs to ensure that at least one mask can remove the entire patch (i.e., $\cR$-covering); we will discuss the mask set generation approach in Section~\ref{sec-defense-implementation}.

In $\textsc{MaskPred}(\cdot)$, we aim to collect all masked predictions and determine the majority prediction label (i.e., the label with the highest occurrence) as well as disagreer predictions (i.e., other predictions). We first generate a set $\mathcal{P}$ for holding all mask-prediction pairs. Next, for each mask $\mathbf{m}$ in the mask set $\mathcal{M}$, we evaluate the masked prediction via $\bar{y}\gets\mathbb{F}(\mathbf{x}\odot\mathbf{m})$; here $\odot$ is the element-wise multiplication operator. We then add the mask-prediction pair ($\mathbf{m},\bar{y}$) to the set $\mathcal{P}$. After gathering all masked predictions, we identify the label with the highest prediction occurrence as majority prediction $\bar{y}_{\text{maj}}$ (Line~\ref{ln-get-maj-pred}). Furthermore, we construct a disagreer prediction set $\mathcal{P}_{\text{dis}}$, whose elements are disagreer mask-prediction pairs (Line~\ref{ln-get-dis-set}). Finally, we return the majority prediction label $\bar{y}_{\text{maj}}$ and the disagreer prediction set $\mathcal{P}_{\text{dis}}$.
%count the occurrence of every prediction label and take

After the first call of $\textsc{MaskPred}(\cdot)$, we check if one-mask predictions reach a unanimous agreement (i.e., the disagreer prediction set $\mathcal{P}_{\text{dis}}$ is empty; Line~\ref{ln-one-mask-agreement}). If $\mathcal{P}_{\text{dis}}$ is empty, we consider the input image likely as a clean image and return the agreed/majority prediction (\textit{Case I: agreed prediction}; Line~\ref{ln-case-1}). On the other hand, a non-empty disagreer set implies a first-round prediction disagreement, and the algorithm proceeds to the second-round masking to settle the disagreement.

\begin{algorithm}[t]
    \centering
    \caption{Double-masking defense of \framework}\label{alg-prediction}
    \begin{algorithmic}[1]
    \renewcommand{\algorithmicrequire}{\textbf{Input:}}
    \renewcommand{\algorithmicensure}{\textbf{Output:}}
    \Require Image $\mathbf{x}$, vanilla prediction model $\mathbb{F}$, mask set $\mathcal{M}$
    \Ensure  Robust prediction $\bar{y}$ 
    \Procedure{DoubleMasking}{$\mathbf{x},\mathbb{F},\mathcal{M}$}
    \State $\bar{y}_{\text{maj}},\mathcal{P}_{\text{dis}}\gets\textsc{MaskPred}(\mathbf{x},\mathbb{F},\mathcal{M})$\Comment{First-rnd. mask}\label{ln-single-masking}
    \If{$\mathcal{P}_{\text{dis}}=\varnothing$}\label{ln-first-mask-empty-condition} \label{ln-one-mask-agreement}%\Comment{One-mask pred. agreement}\
    \State \Return $\bar{y}_{\text{maj}}$ \Comment{\textit{Case I: agreed prediction}}\label{ln-case-1}
    \EndIf  \label{ln-first-mask-e}
    
    \For{each $(\mathbf{m}_{\text{dis}},\bar{y}_{\text{dis}})\in\mathcal{P}_{\text{dis}}$}\label{ln-double-masking-s}\Comment{Second-rnd. mask}
       \State $\bar{y}^\prime,\mathcal{P}^\prime\gets\textsc{MaskPred}(\mathbf{x}\odot\mathbf{m}_{\text{dis}},\mathbb{F},\mathcal{M})$ \label{ln-double-masking}
       \If{$\mathcal{P}^\prime = \varnothing$}\label{ln-two-condition-s} %\Comment{Two-mask agreement}
       \State \Return $\bar{y}_{\text{dis}}$ \Comment{\textit{Case II: disagreer prediction}}\label{ln-case-2}
       \EndIf\label{ln-two-condition-e}
    \EndFor\label{ln-double-masking-e}
    \State\Return $\bar{y}_{\text{maj}}$\Comment{\textit{Case III: majority prediction}}\label{ln-case-3}
    \EndProcedure
\item[]
    \Procedure{MaskPred}{$\mathbf{x},\mathbb{F},\mathcal{M}$}
    \State$\mathcal{P}\gets \varnothing$\Comment{A set for mask-prediction pairs}
    \For{$\mathbf{m}\in\mathcal{M}$} \Comment{Enumerate every mask $\mathbf{m}$}
    \State $\bar{y}\gets \mathbb{F}(\mathbf{x}\odot\mathbf{m})$ \Comment{Evaluate masked prediction}
    \State $\mathcal{P}\gets\mathcal{P}\ \bigcup \ \{(\mathbf{m},\bar{y})\}$ \Comment{Update set $\mathcal{P}$}
    %\EndIf
    \EndFor
    \State $\bar{y}_{\text{maj}}\gets\arg\max_{{y}^*}|\{(\mathbf{m},\bar{y})\in\mathcal{P}\ |\ \bar{y}=y^*\}|$\label{ln-get-maj-pred}\Comment{Majority}
    \State $\mathcal{P}_{\text{dis}}\gets\{(\mathbf{m},\bar{y})\in\mathcal{P} \ | \ \bar{y}\neq\bar{y}_{\text{maj}}\}$\label{ln-get-dis-set}\Comment{Disagreers}
    \State\Return $\bar{y}_{\text{maj}},\mathcal{P}_{\text{dis}}$
    \EndProcedure
    \end{algorithmic}

\end{algorithm}

\textit{Details of second-round masking.} The pseudocode of the second-round masking is in Line~\ref{ln-double-masking-s}-\ref{ln-double-masking-e}. We will look into \textit{every} one-mask disagreer prediction in  $\mathcal{P}_{\text{dis}}$ (Line~\ref{ln-double-masking-s}). For each $(\mathbf{m}_{\text{dis}},\bar{y}_{\text{dis}})$, we apply the disagreer mask $\mathbf{m}_{\text{dis}}$ to the image and feed the masked image $\mathbf{x}\odot\mathbf{m}_{\text{dis}}$ to the masking sub-procedure $\textsc{MaskPred}(\cdot)$ for the second-round masking (Line~\ref{ln-double-masking}). If all two-mask predictions reach a unanimous agreement (i.e., $\mathcal{P}^\prime=\varnothing$), we consider that the first-round mask $\mathbf{m}_{\text{dis}}$ has already removed the adversarial perturbations. Our algorithm returns this one-mask disagreer prediction (\textit{Case II: disagreer prediction}; Line~\ref{ln-case-2}). 
On the other hand, if two-mask predictions disagree (i.e., $\mathcal{P}^\prime\neq\varnothing$), we consider that the disagreer mask $\mathbf{m}_{\text{dis}}$ has not removed the patch. In this case, we discard this one-mask disagreer prediction and move to the next one. In the end, if we return no prediction in the second-round masking, we trust and return the one-mask majority prediction $\bar{y}_{\text{maj}}$ (\textit{Case III: majority prediction}; Line~\ref{ln-case-3}).

\textbf{Remark: defense complexity.} When the number of disagreer predictions is bounded by a small constant $C$ (which is the usual case in the clean setting), the defense complexity is $O(|\cM|+C\cdot|\cM|)$. However, its worst-case complexity is $O(|\cM|^2)$ (doing all two-mask predictions). In Appendix~\ref{apx-new-inference}, we discuss and evaluate another defense algorithm that has the same robustness guarantees and a better worst-case inference complexity $O(|\cM|)$, at the cost of a drop in clean accuracy.

\subsection{Robustness Certification for Double-Masking Defense}\label{sec-provable}
In this subsection, we discuss how to certify the robustness of our double-masking algorithm for a given image. Recall that we say our defense is certifiably robust for a given image if our model prediction is always correct against any adaptive white-box attacker within our threat model $\cA_\cR$. The certification only applies to labeled images since we need ground-truth labels to check the prediction correctness.

First, we define a concept of \textit{two-mask correctness}, which we claim is a sufficient condition for certified robustness. 
\begin{definition}[two-mask correctness]
A model $\mathbb{F}$ has two-mask correctness for a mask set $\cM$ and a clean image data point $(\mathbf{x},y)$, if model predictions on all possible two-masked images are correct, i.e., 
$$\mathbb{F}(\mathbf{x}\odot\mathbf{m}_0\odot\mathbf{m}_1)=y\ ,\forall \ \mathbf{m}_0\in\mathcal{M},\forall\ \mathbf{m}_1\in\mathcal{M}$$
\end{definition}

\noindent Next, we present our theorem stating that two-mask correctness for a clean image (and an $\cR$-covering mask set) implies the certifiable robustness of our defense to adversarial patches (constrained by $\cA_\cP$) on that image.

\begin{theorem}\label{thm-pred}
Given a clean data point $(\mathbf{x},y)$, a classification model $\mathbb{F}$, a mask set $\mathcal{M}$, and the threat model $\mathcal{A}_\cR$, if $\cM$ is $\cR$-covering and $\mathbb{F}$ has two-mask correctness for $\cM$ and $(\mathbf{x},y)$, then our double-masking defense (Algorithm~\ref{alg-prediction}) will always return a correct label, i.e.,  $\textsc{DoubleMasking}(\mathbf{x}^\prime,\mathbb{F},\mathcal{M})=y ,\ \forall\  \mathbf{x}^\prime\in\mathcal{A}_\cR(\mathbf{x})$.
\end{theorem}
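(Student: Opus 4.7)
Fix any adversarial example $\mathbf{x}^\prime\in\mathcal{A}_{\mathcal{R}}(\mathbf{x})$ with patch region $\mathbf{r}\in\mathcal{R}$. By the $\mathcal{R}$-covering hypothesis, there is a mask $\mathbf{m}^*\in\mathcal{M}$ with $\mathbf{m}^*[i,j]\leq\mathbf{r}[i,j]$ at every pixel. Since $\mathbf{x}^\prime$ agrees with $\mathbf{x}$ outside the patch, this pointwise inequality yields the ``unmasking identity''
\[\mathbf{x}^\prime\odot\mathbf{m}^*\odot\mathbf{m}=\mathbf{x}\odot\mathbf{m}^*\odot\mathbf{m}\qquad\text{for every }\mathbf{m}\in\mathcal{M}.\]
Combined with two-mask correctness this gives $\mathbb{F}(\mathbf{x}^\prime\odot\mathbf{m}^*\odot\mathbf{m})=y$ for every $\mathbf{m}\in\mathcal{M}$; specializing to $\mathbf{m}=\mathbf{m}^*$ (so $\mathbf{m}^*\odot\mathbf{m}^*=\mathbf{m}^*$) shows that the first-round prediction of $\mathbf{x}^\prime$ under $\mathbf{m}^*$ is exactly $y$.

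With this identity in hand, I would verify that each exit of \textsc{DoubleMasking} returns $y$. In Case~I (Line~\ref{ln-case-1}) first-round predictions are unanimous, and since $\mathbf{m}^*$ already produces $y$ the unanimous label must be $y$. In Case~II (Line~\ref{ln-case-2}) the algorithm returns a disagreer $(\mathbf{m}_{\text{dis}},\bar{y}_{\text{dis}})$ only if every two-mask prediction $\mathbb{F}(\mathbf{x}^\prime\odot\mathbf{m}_{\text{dis}}\odot\mathbf{m})$ equals $\bar{y}_{\text{dis}}$; instantiating $\mathbf{m}=\mathbf{m}^*$ and invoking the identity forces $\bar{y}_{\text{dis}}=y$.

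Case~III (Line~\ref{ln-case-3}) is the only subtle step, which I would handle by contradiction. If $\bar{y}_{\text{maj}}\neq y$, then $(\mathbf{m}^*,y)$ lies in $\mathcal{P}_{\text{dis}}$ and is processed at some iteration of the loop at Line~\ref{ln-double-masking-s}; but for that iteration the second-round procedure yields $\mathbb{F}(\mathbf{x}^\prime\odot\mathbf{m}^*\odot\mathbf{m})=y$ for every $\mathbf{m}\in\mathcal{M}$, so $\mathcal{P}^\prime=\varnothing$ and the algorithm would have exited in Case~II rather than reaching Case~III.

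The main obstacle is the unspecified iteration order over $\mathcal{P}_{\text{dis}}$: a different, wrong-labeled disagreer could in principle trigger the Case~II exit before the loop reaches $(\mathbf{m}^*,y)$. This is resolved by the Case~II analysis above, which shows that \emph{any} disagreer passing the second-round test is forced to carry label $y$, so correctness holds uniformly across all iteration orders rather than depending on $(\mathbf{m}^*,y)$ ``winning the race.''
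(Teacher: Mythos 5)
Your proof is correct and takes essentially the same route as the paper's: your ``unmasking identity'' $\mathbf{x}^\prime\odot\mathbf{m}^*\odot\mathbf{m}=\mathbf{x}\odot\mathbf{m}^*\odot\mathbf{m}$ is exactly the content of the paper's three preliminary claims (including the idempotence trick $\mathbf{m}^*\odot\mathbf{m}^*=\mathbf{m}^*$ used to reduce two-mask to one-mask correctness), and your three exit-case analyses---in particular the contradiction argument that a wrong majority in Case~III would force $(\mathbf{m}^*,y)$ to trigger a Case~II exit---mirror the paper's three lemmas one for one. The only cosmetic differences are that you argue Cases~I and~II directly where the paper argues by contradiction, and that you make explicit the iteration-order point that the paper handles implicitly via its ``never returns an incorrect label through any case'' framing.
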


\renewcommand{\proofname}{Proof}

\begin{proof}
First, we present three useful claims for our proof.

\begin{claim}
Given the same conditions of Theorem~\ref{thm-pred} ($\cR$-covering and two-mask correctness), we have: 
\begin{enumerate}\setlength\itemsep{0em}
    \item There is at least one correct one-mask prediction in the first-round masking (Line~\ref{ln-single-masking} of Algorithm~\ref{alg-prediction}).
    \item There is at least one correct two-mask prediction in every iteration of the second-round masking (Line~\ref{ln-double-masking} of Algorithm~\ref{alg-prediction}).
    \item If a first-round mask removes the patch, then all its second-round two-mask predictions (Line~\ref{ln-double-masking} of Algorithm~\ref{alg-prediction}) are correct.
\end{enumerate}

\end{claim}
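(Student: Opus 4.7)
The plan rests on a single pixel-level identity. Any $\mathbf{x}' \in \cA_\cR(\mathbf{x})$ is obtained by pasting adversarial content inside some specific patch region $\mathbf{r} \in \cR$, so $\mathbf{x}'$ and $\mathbf{x}$ agree on every pixel where $\mathbf{r} = 1$. Consequently, if a mask $\mathbf{m} \in \{0,1\}^{W\times H}$ satisfies $\mathbf{m}[i,j] \le \mathbf{r}[i,j]$ pointwise (i.e., $\mathbf{m}$ ``covers'' the patch), then $\mathbf{x}' \odot \mathbf{m} = \mathbf{x} \odot \mathbf{m}$: wherever $\mathbf{m} = 0$ both products are zero, and wherever $\mathbf{m} = 1$ we must have $\mathbf{r} = 1$ and the two images already coincide. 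I would record this observation once at the start, together with the trivial fact $\mathbf{m} \odot \mathbf{m} = \mathbf{m}$ for any binary $\mathbf{m}$, and invoke both throughout.

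All three sub-claims then reduce to combining $\cR$-covering with two-mask correctness. For sub-claim 1, $\cR$-covering applied to the attacker's chosen $\mathbf{r}$ produces some $\mathbf{m}^\star \in \cM$ with $\mathbf{m}^\star \le \mathbf{r}$; the identity gives $\mathbf{x}' \odot \mathbf{m}^\star = \mathbf{x} \odot \mathbf{m}^\star = \mathbf{x} \odot \mathbf{m}^\star \odot \mathbf{m}^\star$, and two-mask correctness with $\mathbf{m}_0 = \mathbf{m}_1 = \mathbf{m}^\star$ forces $\mathbb{F}(\mathbf{x}' \odot \mathbf{m}^\star) = y$, which is a correct first-round prediction. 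For sub-claim 2, fix any iteration of the second-round loop with first-round mask $\mathbf{m}_{\text{dis}}$; the same $\mathbf{m}^\star$ yields $\mathbf{x}' \odot \mathbf{m}_{\text{dis}} \odot \mathbf{m}^\star = \mathbf{x} \odot \mathbf{m}_{\text{dis}} \odot \mathbf{m}^\star$, and two-mask correctness with $(\mathbf{m}_0, \mathbf{m}_1) = (\mathbf{m}_{\text{dis}}, \mathbf{m}^\star)$ produces a correct two-mask prediction inside that iteration. For sub-claim 3, the hypothesis is exactly $\mathbf{m}_{\text{dis}} \le \mathbf{r}$, so the identity upgrades to $\mathbf{x}' \odot \mathbf{m}_{\text{dis}} = \mathbf{x} \odot \mathbf{m}_{\text{dis}}$; then for every second-round mask $\mathbf{m}_1 \in \cM$ the two-masked adversarial input coincides with its clean counterpart, and two-mask correctness returns $y$ uniformly.

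There is no substantive obstacle; the argument is really just bookkeeping around the covering identity. The only steps requiring a bit of care are (i) noting that the attacker's region $\mathbf{r}$ is fixed once $\mathbf{x}'$ is fixed, so the covering mask $\mathbf{m}^\star$ used in sub-claims 1 and 2 is a single well-defined element of $\cM$ rather than a per-iteration choice; and (ii) exploiting the binary nature of masks via $\mathbf{m}^\star \odot \mathbf{m}^\star = \mathbf{m}^\star$ so that two-mask correctness, which is literally a statement about double-masked inputs, can be used to conclude correctness of a single-masked prediction in sub-claim 1.
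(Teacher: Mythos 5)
Your proposal is correct and follows essentially the same route as the paper's own proof: use $\cR$-covering to produce a mask $\mathbf{m}^\star$ covering the attacker's region, then invoke two-mask correctness, with the diagonal case $\mathbf{m}_0=\mathbf{m}_1=\mathbf{m}^\star$ handling the one-mask prediction in sub-claim 1 (the paper states this as ``two-mask correctness reduces to one-mask correctness when the two masks coincide''). Your pixel-level identity $\mathbf{x}'\odot\mathbf{m} = \mathbf{x}\odot\mathbf{m}$ for covering masks and the idempotency $\mathbf{m}\odot\mathbf{m}=\mathbf{m}$ simply make explicit what the paper leaves informal as ``the mask removes the patch,'' so this is a tightening of the same argument rather than a different one.
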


\begin{proof}
The proof of three claims follows from the definitions of $\cR$-covering and two-mask correctness.
\begin{enumerate}\setlength\itemsep{0em}
    \item The first claim holds since at least one first-round mask removes the patch (due to $\cR$-covering) and recovers the correct prediction (due to two-mask correctness; note that two-mask correctness reduces to ``one-mask" correctness when two masks are at the same locations).
    \item The second claim holds since at least one second-round mask removes the patch (due to $\cR$-covering) and recovers the correct prediction (due to two-mask correctness). 
    \item The third claim holds since all the two-mask predictions are benign and correct when the patch is removed by the first-round mask (due to two-mask correctness).
\end{enumerate}
\vspace{-2em}
\end{proof}

Next, we use these three claims to prove three lemmas, which together show that our double-masking algorithm (Algorithm~\ref{alg-prediction}) will never return an incorrect prediction (given $\cR$-covering and two-mask correctness). %\textit{All lemmas are under the same conditions of Theorem~\ref{thm-pred}.}

\begin{lemma}\label{lemma-case-i}
Given the same conditions of Theorem~\ref{thm-pred}, 
Algorithm~\ref{alg-prediction} will never return an incorrect label via \textit{Case I} (Line~\ref{ln-case-1} of Algorithm~\ref{alg-prediction}).
\end{lemma}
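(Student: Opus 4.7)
The plan is to invoke the first item of the Claim established just above: under $\mathcal{R}$-covering together with two-mask correctness, the first-round masking (Line~\ref{ln-single-masking}) necessarily produces at least one one-mask prediction that equals the true label $y$. This is the only ingredient needed for Case~I.

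Next, I would observe that \textit{Case~I} is entered exactly when $\mathcal{P}_{\text{dis}} = \varnothing$, i.e., when every one-mask prediction coincides with the majority label $\bar{y}_{\text{maj}}$. In this scenario the set of predicted labels over the first-round masks is a singleton $\{\bar{y}_{\text{maj}}\}$. Combining this with the existence of a correct one-mask prediction from Claim~1, that correct prediction must itself equal $\bar{y}_{\text{maj}}$, hence $\bar{y}_{\text{maj}} = y$. Therefore the label returned on Line~\ref{ln-case-1} is $y$, which is correct.

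This argument is essentially syntactic once Claim~1 is in hand; no additional reasoning about the second round, the disagreer loop, or the precise structure of $\mathcal{A}_{\mathcal{R}}(\mathbf{x})$ is needed. The only subtlety to be careful about is that the statement holds for \emph{any} $\mathbf{x}' \in \mathcal{A}_{\mathcal{R}}(\mathbf{x})$: the first item of Claim~1 is proved for the possibly-adversarial input (an $\mathcal{R}$-covering mask still removes the patch wherever it is placed within $\mathcal{R}$), so the conclusion transports immediately to the adversarial setting without modification. The main (and only) obstacle is simply to cite Claim~1 cleanly and note that a singleton prediction set that contains a correct label is itself correct.
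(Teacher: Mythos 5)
Your proposal is correct and follows essentially the same route as the paper: both rest entirely on the first item of the Claim (at least one correct one-mask prediction) together with the observation that Case~I forces all one-mask predictions to agree with $\bar{y}_{\text{maj}}$. The only difference is presentational—you argue directly that the singleton label set must equal $\{y\}$, while the paper phrases it as a contradiction—so the two proofs are mathematically identical.
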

\begin{proof}
If Algorithm~\ref{alg-prediction} returns an incorrect label ($\bar{y}_{\text{maj}}\neq y$) via \textit{Case I}, it means that $\cP_{\text{dis}}=\varnothing$ and all one-mask predictions in the first-round masking are incorrect as $\bar{y}_{\text{maj}}$. This leads to a contradiction because our first claim indicates that at least one one-mask prediction is correct.
\end{proof}

\begin{lemma}\label{lemma-case-ii}
Given the same conditions of Theorem~\ref{thm-pred}, 
Algorithm~\ref{alg-prediction} will never return an incorrect label via \textit{Case II} (Line~\ref{ln-case-2} of Algorithm~\ref{alg-prediction}).
\end{lemma}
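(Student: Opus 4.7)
The plan is to prove Lemma~\ref{lemma-case-ii} by contradiction, using Claim~2 (the existence of a correct two-mask prediction in every second-round iteration) together with the idempotence of mask application. Suppose for contradiction that Algorithm~\ref{alg-prediction} returns an incorrect label $\bar{y}_{\text{dis}} \neq y$ via \textit{Case II}. Then the algorithm must have reached Line~\ref{ln-case-2} on some iteration with disagreer pair $(\mathbf{m}_{\text{dis}}, \bar{y}_{\text{dis}}) \in \cP_{\text{dis}}$ satisfying $\cP' = \varnothing$ after the second-round call $\textsc{MaskPred}(\mathbf{x}\odot\mathbf{m}_{\text{dis}}, \mathbb{F}, \mathcal{M})$.

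First I would observe that $\cP' = \varnothing$ means all two-mask predictions $\mathbb{F}(\mathbf{x}\odot\mathbf{m}_{\text{dis}}\odot\mathbf{m})$ for $\mathbf{m} \in \mathcal{M}$ agree on some common label $\bar{y}'$ (the majority output by the sub-procedure). The key technical step is to identify $\bar{y}'$ with $\bar{y}_{\text{dis}}$: since $\mathbf{m}_{\text{dis}} \in \mathcal{M}$ and the binary masks are idempotent under element-wise multiplication, choosing the second-round mask $\mathbf{m} = \mathbf{m}_{\text{dis}}$ yields $\mathbf{x}\odot\mathbf{m}_{\text{dis}}\odot\mathbf{m}_{\text{dis}} = \mathbf{x}\odot\mathbf{m}_{\text{dis}}$, so the corresponding two-mask prediction equals the first-round one-mask prediction $\bar{y}_{\text{dis}}$. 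Hence the common label is $\bar{y}' = \bar{y}_{\text{dis}}$, and every two-mask prediction in this iteration equals $\bar{y}_{\text{dis}}$.

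Next I would invoke the second claim from the earlier proof: under $\cR$-covering and two-mask correctness, at least one second-round mask $\mathbf{m}^* \in \mathcal{M}$ covers the patch, so the corresponding two-mask prediction $\mathbb{F}(\mathbf{x}\odot\mathbf{m}_{\text{dis}}\odot\mathbf{m}^*)$ is evaluated on an effectively clean image and equals $y$ by two-mask correctness. Combined with the previous step, this forces $\bar{y}_{\text{dis}} = y$, contradicting our assumption that $\bar{y}_{\text{dis}} \neq y$. The main (minor) obstacle is articulating the idempotence observation precisely enough that the self-pairing $\mathbf{m}_{\text{dis}}\odot\mathbf{m}_{\text{dis}} = \mathbf{m}_{\text{dis}}$ is recognized as a legitimate element of the two-mask prediction set, which anchors the unanimous second-round label to $\bar{y}_{\text{dis}}$; everything else is a direct application of Claim~2.
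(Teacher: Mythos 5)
Your proof is correct and takes essentially the same route as the paper's: a contradiction argument that pits the unanimity $\cP^\prime=\varnothing$ against Claim~2's guarantee of at least one correct two-mask prediction in every second-round iteration. The one thing you add is the explicit idempotence step ($\mathbf{x}\odot\mathbf{m}_{\text{dis}}\odot\mathbf{m}_{\text{dis}}=\mathbf{x}\odot\mathbf{m}_{\text{dis}}$, legitimate since $\mathbf{m}_{\text{dis}}\in\cM$) that anchors the unanimous second-round label to $\bar{y}_{\text{dis}}$ --- a step the paper uses implicitly when it asserts that all two-mask predictions ``are incorrect as $\bar{y}_{\text{dis}}$,'' so your write-up is, if anything, slightly more careful on this point.
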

\begin{proof}

If Algorithm~\ref{alg-prediction} returns an incorrect label ($\bar{y}_{\text{dis}}\neq y$) via \textit{Case II}, it means that $\cP^\prime=\varnothing$ and all two-mask predictions for the first-round disagreer $(\mathbf{m}_{\text{dis}},\bar{y}_{\text{dis}})$ in the second-round masking are incorrect as $\bar{y}_{\text{dis}}$. This leads to a contradiction because our second claim indicates that at least one two-mask prediction is correct in any iteration of the second-round masking.
\end{proof}

\begin{lemma}\label{lemma-case-iii}
Given the same conditions of Theorem~\ref{thm-pred}, 
Algorithm~\ref{alg-prediction} will never return an incorrect label via \textit{Case III} (Line~\ref{ln-case-3} of Algorithm~\ref{alg-prediction}).
\end{lemma}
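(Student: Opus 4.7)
The plan is to prove Lemma~\ref{lemma-case-iii} by contradiction, in direct analogy with the proofs of Lemmas~\ref{lemma-case-i} and~\ref{lemma-case-ii} but leveraging \emph{all three} claims together rather than just one. Specifically, I would assume that the algorithm reaches \emph{Case III} on adversarial input $\mathbf{x}^\prime \in \cA_\cR(\mathbf{x})$ and returns an incorrect label $\bar{y}_{\text{maj}} \neq y$, and then exhibit an iteration of the second-round loop that would have returned via \emph{Case II}, contradicting the fact that the loop exited without returning.

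The key steps, in order, would be: (i) use the $\cR$-covering property to fix a first-round mask $\mathbf{m}^*\in\cM$ that completely covers the patch on $\mathbf{x}^\prime$, so that $\mathbf{x}^\prime\odot\mathbf{m}^* = \mathbf{x}\odot\mathbf{m}^*$; (ii) invoke the first claim (specializing two-mask correctness to $\mathbf{m}_0=\mathbf{m}_1=\mathbf{m}^*$) to conclude $\mathbb{F}(\mathbf{x}^\prime\odot\mathbf{m}^*) = y$, so the pair $(\mathbf{m}^*,y)$ appears among the first-round mask-prediction pairs; (iii) observe that because $\bar{y}_{\text{maj}}\neq y$ by assumption, this pair is classified as a disagreer, i.e.\ $(\mathbf{m}^*,y)\in\cP_{\text{dis}}$, so the second-round loop on Line~\ref{ln-double-masking-s} does iterate over it; (iv) apply the third claim to the iteration where $\mathbf{m}_{\text{dis}}=\mathbf{m}^*$, concluding that every two-mask prediction $\mathbb{F}(\mathbf{x}^\prime\odot\mathbf{m}^*\odot\mathbf{m})$ equals $y$, so the returned disagreer set $\cP^\prime$ is empty; (v) conclude that the algorithm would have returned $\bar{y}_{\text{dis}}=y$ via \emph{Case II} on this iteration, contradicting the hypothesis that it proceeded to \emph{Case III}.

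The only subtle point — and the one I would take care to make explicit — is step (ii): \emph{two-mask correctness} is stated for the clean image $\mathbf{x}$, but we must reason about the adversarial image $\mathbf{x}^\prime$. The bridge is the identity $\mathbf{x}^\prime\odot\mathbf{m}^* = \mathbf{x}\odot\mathbf{m}^*$, which holds precisely because $\mathbf{m}^*$ zeroes out every pixel in the patch region (by $\cR$-covering) and $\mathbf{x}^\prime$ and $\mathbf{x}$ agree outside that region (by the definition of $\cA_\cR$). The same bridging identity is needed again in step (iv) to turn two-mask correctness on $\mathbf{x}$ into a statement about $\mathbf{x}^\prime\odot\mathbf{m}^*\odot\mathbf{m}$. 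This is essentially a bookkeeping issue rather than a mathematical obstacle; once it is handled, the rest of the argument is a straightforward application of the three claims already proven, so I do not anticipate any genuinely hard step.
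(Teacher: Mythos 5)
Your proposal is correct and follows essentially the same route as the paper's proof: show that the patch-covering first-round mask yields the correct prediction $y$, which must then be a disagreer since $\bar{y}_{\text{maj}}\neq y$, invoke the third claim to get $\cP^\prime=\varnothing$ for that iteration, and derive a contradiction with the algorithm ever reaching \emph{Case III}. Your explicit treatment of the bridging identity $\mathbf{x}^\prime\odot\mathbf{m}^*=\mathbf{x}\odot\mathbf{m}^*$ is a welcome clarification of a step the paper leaves implicit inside its claims, but it does not change the substance of the argument.
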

\begin{proof}

If Algorithm~\ref{alg-prediction} returns  an incorrect label via \textit{Case III}, we have $\bar{y}_{\text{maj}}\neq y$. This implies that the correct label $y$ is a disagreer (recall that at least one first-round mask removes the patch and gives the correct one-mask prediction label $y$). From our third claim, we know that for this one-mask disagreer (whose first-round mask removes the patch), all its two-mask predictions are correct due to two-mask correctness. Therefore, we have $\cP^\prime=\varnothing$, and Algorithm~\ref{alg-prediction} will return this disagreer $\bar{y}_{\text{dis}}=y$ via \textit{Case II}. This contradicts with Algorithm~\ref{alg-prediction} returning a label via \textit{Case III}. 
\end{proof}%

Putting things together, we prove in the above three lemmas that our double-masking algorithm (Algorithm~\ref{alg-prediction}) will never return an incorrect label. Since Algorithm~\ref{alg-prediction} will always return a prediction label, we have proved that our defense will always return a correct label under the conditions of Theorem~\ref{thm-pred}.
\end{proof}

\begin{algorithm}[t]
    \centering
    \caption{Robustness certification for \framework}\label{alg-prediction-provable}
    \begin{algorithmic}[1]
    \renewcommand{\algorithmicrequire}{\textbf{Input:}}
    \renewcommand{\algorithmicensure}{\textbf{Output:}}
    \Require Image $\mathbf{x}$, ground-truth label $y$, vanilla prediction model $\mathbb{F}$, mask set $\cM$, threat model $\cA_\cR$
    \Ensure  Whether $\mathbf{x}$ has certified robustness
    \Procedure{Certification}{$\mathbf{x},y,\mathbb{F},\cM,\cA_\cR$}
    \If{$\cM$ is not $\cR$-covering}\Comment{Insecure mask set} \label{ln-check-lemma}
    \State\Return \texttt{False}
    \EndIf
    \For{every $(\mathbf{m}_0,\mathbf{m}_1) \in \mathcal{M}\times\mathcal{M}$} \label{ln-for-s}
    \State $\bar{y}^\prime \gets \mathbb{F}(\mathbf{x}\odot\mathbf{m}_0\odot\mathbf{m}_1)$\Comment{Two-mask pred.}
    \If{$\bar{y}^\prime \neq y$}
    \State\Return \texttt{False} \Comment{Possibly vulnerable}
    \EndIf
    \EndFor\label{ln-for-e}
    \State\Return \texttt{True}\Comment{Certified robustness!}
    \EndProcedure
    \end{algorithmic}
\end{algorithm}

\textbf{Robustness certification procedure.} From Theorem~\ref{thm-pred}, we can certify the robustness of our defense on a clean/test image by checking if our model has two-mask correctness on that image. The pseudocode for our robust certification is presented in Algorithm~\ref{alg-prediction-provable}. It takes the labeled clean data point $(\mathbf{x},y)$, vanilla prediction model $\mathbb{F}$, mask set $\cM$, and threat model $\cA_\cR$ as inputs, and aims to determine if our defense (Algorithm~\ref{alg-prediction}) can always predict correctly as $y$.%\footnote{We note that we need the ground-truth label $y$ in our robustness certification procedure (Algorithm~\ref{alg-prediction-provable}) to check the prediction correctness but not in our defense (Algorithm~\ref{alg-prediction}).} 
First, the certification procedure checks if mask set $\cM$ is $\cR$-covering (Line~\ref{ln-check-lemma}). Next, it evaluates all possible two-mask predictions (Line~\ref{ln-for-s}-\ref{ln-for-e}). If any of the two-mask predictions is incorrect, the algorithm returns \texttt{False} (possibly vulnerable). On the other hand, if all two-mask predictions match the ground-truth $y$, we have certified robustness for this image, and the algorithm returns \texttt{True}.

In our evaluation (Section~\ref{sec-evaluation}), we will apply Algorithm~\ref{alg-prediction-provable} to labeled datasets and report \textit{certified robust accuracy}, the fraction of labeled test images for which Algorithm~\ref{alg-prediction-provable} returns \texttt{True}. We note that Theorem~\ref{thm-pred} ensures that this certified accuracy is the \textit{lower bound} of model accuracy against any adaptive attacker within the threat model $\cA_\cR$. For example, a certified robust accuracy of 62.1\% on the ImageNet~\cite{deng2009imagenet} dataset implies that \framework can correctly classify 62.1\% of the ImageNet test images, no matter how an adaptive attacker (within the threat model) generates and places the patch.

\subsection{Implementation of the End-to-End \framework Defense}\label{sec-defense-implementation}

In this subsection, we first present an adaptive $\cR$-covering mask set generation technique and then provide a complete view of our end-to-end \framework defense.

\textbf{Adaptive mask set generation.} In practice, a defense needs to operate within the constraints of available computational resources. This imposes a bound on the number of masked model predictions that we can evaluate.\footnote{For example, there are 40k possible locations for a 24$\times$24 mask on a 224$\times$224 image, and it is computationally expensive to evaluate all 40k masked predictions.} Therefore, we need to carefully generate a mask set that meets the computation budget (i.e., the number of masks) while maintaining the security guarantee (i.e., $\cR$-covering). 
%We note that the reduction in computation overhead does not come for free but at the cost of classification accuracy (due to larger masks required by $\cR$-covering); our mask set generation approach allows us to balance this trade-off between efficiency and accuracy in the real-world deployment. 
We first present our approach for 1-D ``images" (Figure~\ref{fig-mask} provides two visual examples) and then generalize it to 2-D. %

\begin{figure}
    \centering
    \includegraphics[width=0.9\linewidth]{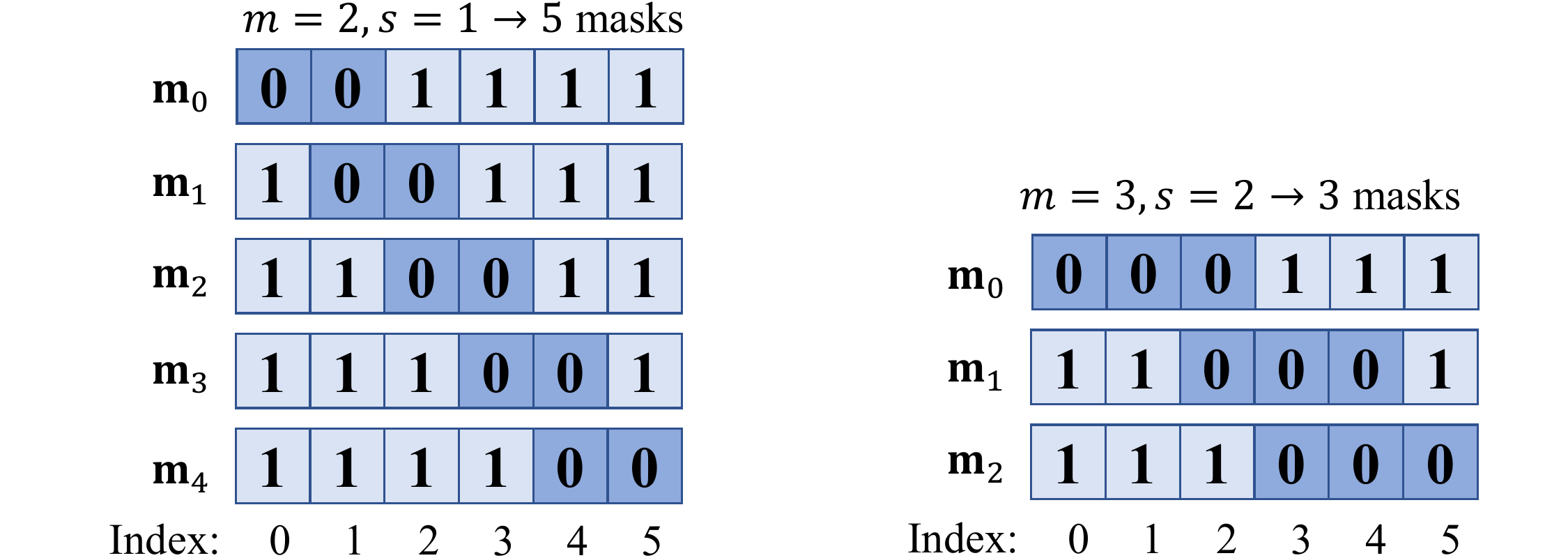}
    %\vspace{-0.5em}
    \caption{Visual examples for 1-D mask set generation. Both mask sets are $\cR$-covering for a patch of estimated size $\bar{p}=2$ on an image of size $n=6$. \textit{Left example:} smaller mask size $m=2$, smaller mask stride $s=1$, and larger mask set $\mathcal{I}=\{0,1,2,3,4\}$. \textit{Right example:} larger mask size $m=3$, larger mask stride $s=2$, and smaller mask set $\mathcal{I}=\{0,2,3\}$.}
    \label{fig-mask}
\end{figure}

\textit{Adjusting the mask set size.} We generate a mask set via moving a mask over the input image. Consider a mask of width $m$ over an image of size $n$, we first place the mask at the coordinate $0$ (so that the mask covers the indices from $0$ to $m-1$). Next, we move the mask with a stride of $s$ across the image and gather a set of mask locations $\{0,s,2s,\cdots,\lfloor \frac{n-m}{s} \rfloor s\}$. Finally, we place the last mask at the index of $n-m$ in case the mask at $\lfloor \frac{n-m}{s} \rfloor s$ cannot cover the last $m$ pixels. We can define a mask set $\cM_{m,s,n}$ as:
\begin{align}
    \cM_{m,s,n}=\{\mathbf{m}\in\{0,1\}^n \ | \ \mathbf{m}[u]=0, u \in [i,i+m);& \nonumber \\        
     \mathbf{m}[u]=1, u \not\in [i,i+m);&\ i\in \mathcal{I}\}\nonumber\\
     \mathcal{I}=\{0,s,2s,\cdots,\lfloor \frac{n-m}{s} \rfloor s\}\ \bigcup \ \{n-m\}\label{eqn-mask-set-generation}
\end{align}
Furthermore, we can compute the mask set size as:
\begin{equation}\label{eqn-mask-size}
  |\cM_{m,s,n}| = |\mathcal{I}| = \lceil\frac{n-m}{s}\rceil+1  
\end{equation}

This equation shows that we can adjust the mask set size via the mask stride $s$. In the example of Figure~\ref{fig-mask}, we can reduce the mask set size from $5$ to $3$ by increasing the mask stride from $1$ to $2$. Next, we discuss the security property (i.e., $\cR$-covering) of the mask set $\cM$.

\textit{Ensuring the security guarantee.} Using a large mask stride might leave ``gaps" between two adjacent masks; therefore, we need to choose a proper mask size to cover these gaps to ensure that the mask set $\cM$ is $\cR$-covering.

\begin{lemma}\label{lemma-maskset}
The mask set $\cM_{m,s,n}$ is $\cR$-covering for a patch that is no larger than $p^*=m-s+1$.
\end{lemma}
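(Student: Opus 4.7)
The plan is to translate $\cR$-covering into a combinatorial statement about integer intervals. A mask placed at left endpoint $j$ is the interval $[j, j+m)$, and a patch of width $p$ at location $i$ is $[i, i+p)$; the mask covers the patch iff $j \leq i$ and $j + m \geq i + p$, equivalently $j$ lies in the target window $[i - (m - p),\, i]$. So it suffices to exhibit, for every patch width $p \leq p^*$ and every admissible patch position $i \in \{0, 1, \ldots, n - p\}$, some $j \in \mathcal{I}$ inside that window. The one inequality I will lean on throughout is $m - p \geq s - 1$, which is a direct rewriting of the hypothesis $p \leq p^* = m - s + 1$.

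Then I would split on where $i$ sits relative to the last strided index $j_{\max} := \lfloor (n - m)/s \rfloor \cdot s$. In case (a), $i \leq j_{\max}$: choose $j = \lfloor i/s \rfloor \cdot s$, which lies in $\mathcal{I}$ and satisfies $0 \leq i - j \leq s - 1 \leq m - p$. In case (b), $j_{\max} < i \leq n - m$: choose $j = j_{\max} \in \mathcal{I}$, and note $0 \leq i - j \leq (n - m) - j_{\max} \leq s - 1 \leq m - p$. In case (c), $i > n - m$: invoke the auxiliary endpoint $n - m \in \mathcal{I}$ adjoined in \EQref{eqn-mask-set-generation}, and bound $0 \leq i - (n - m) \leq (n - p) - (n - m) = m - p$ using admissibility $i \leq n - p$. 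In every case the chosen $j$ lies in the target window, so the associated mask covers the patch.

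The only delicate point, and the one obstacle to a two-line argument, is that the arithmetic progression $\{0, s, 2s, \ldots, j_{\max}\}$ can stop short of the right edge of the image: when $n - m$ is not a multiple of $s$, both cases (b) and (c) are nonempty, and no strided mask alone can cover a patch placed at $i = n - p$. This is precisely why \EQref{eqn-mask-set-generation} adjoins $n - m$ explicitly; the admissibility bound $i \leq n - p$ is what then keeps the overshoot in case (c) within the slack $m - p$. Combining the three cases shows that every patch of width at most $p^*$ is covered by some mask in $\cM_{m,s,n}$, which is the definition of $\cR$-covering.
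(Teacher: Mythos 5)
Your proof is correct, and it is actually tighter than the paper's own argument, though both rest on the same arithmetic fact. The paper argues locally: it considers, ``without loss of generality,'' the first two adjacent masks $[0,m-1]$ and $[s,s+m-1]$, observes that a patch of size $p^*$ evading the first mask must start at index at least $j^*=(m-1)-(p^*-1)+1=s$, and concludes it is caught by the second mask; the extension to an arbitrary patch position (take the nearest strided mask to the left of the patch) and, more importantly, the treatment of the right edge of the image are left implicit. Your proof makes both explicit: the characterization ``mask at $j$ covers patch at $i$ iff $j\in[i-(m-p),\,i]$'' plus the single inequality $m-p\ge s-1$ reduces everything to exhibiting an element of $\mathcal{I}$ in that window, and your three cases do so constructively. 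In particular, your case (c) is the only place in either proof where the appended index $n-m$ in the definition of $\mathcal{I}$ and the admissibility bound $i\le n-p$ are actually invoked; the paper's two-line argument never touches this boundary regime, even though (as you correctly note) when $s\nmid(n-m)$ a patch at $i=n-p$ is covered by \emph{no} strided mask, so the appended mask is genuinely necessary. What the paper's version buys is brevity and the intuitive picture of overlapping adjacent masks; what yours buys is a complete, gap-free verification, including the edge case that justifies the set union in \EQref{eqn-mask-set-generation}.
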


\begin{proof}
Without loss of generality, we consider the first two adjacent masks in the 1-D scenario, whose mask pixel index ranges are $[0,m-1]$ and $[s,s+m-1]$, respectively. Now let us consider an adversarial patch of size $p^*$. In order to avoid being completely masked by the first mask, the smallest index of the patch has to be no smaller than $j^*=(m-1)-(p^*-1)+1=s$. However, we find that the second mask starts from the index $s$, so the patch that evades the first mask will be captured by the second mask.
\end{proof}
\noindent Lemma~\ref{lemma-maskset} indicates that the mask size needs to be no smaller than $m^*=p+s-1$ to ensure $\cR$-covering.

\textit{Mask set generation.} Armed with the ability to adjust the mask set size and to ensure the security guarantee (as discussed above), we now present our complete mask set generation approach. The procedure takes as inputs \textit{the computation budget} $k$ (i.e., the number of masks), the \textit{estimated} patch size $\bar{p}$ (i.e., the security parameter), and the image size $n$, and aims to generate a $\cR$-covering set that satisfies the computational budget $k$. First, based on the the inputs $k,\bar{p},n$, we derive mask stride $s$ and mask size $m$ using Lemma~\ref{lemma-maskset} ($\bar{p}=m-s+1$) and Equation~\ref{eqn-mask-size} ($k=\lceil\frac{n-m}{s}\rceil+1$) as follows: 
\begin{align}
    &s = \lceil\frac{n-\bar{p}+1}{k}\rceil \nonumber\\
    &m = \bar{p}+s-1\label{eqn-setup}
\end{align}
Next, we generate the set $\cM_{m,s,n}$ via Equation~\ref{eqn-mask-set-generation} accordingly. We note that when we have a different estimation for the patch size $\bar{p}$, we only need to adjust the mask stride $s$ and mask size $m$ according to Equation~\ref{eqn-setup} while keeping the number of masks $k$ unchanged.

\textit{Generalizing to 2-D images.} We can easily generalize the 1-D mask set to 2-D by separately applying Equation~\ref{eqn-mask-set-generation} and Equation~\ref{eqn-setup} to each of the two axes of the image. For $n_0\times n_1$ images, $\bar{p}_0\times \bar{p}_1$ patches, $k_0\times k_1$ number of masks, we can calculate $s_0,s_1,m_0,m_1$ with Equation~\ref{eqn-setup} and obtain $\mathcal{I}_0,\mathcal{I}_1$ with Equation~\ref{eqn-mask-set-generation}. The mask set generation becomes $\cM_{(m_0,m_1),(s_0,s_1),(n_0,n_1)}=\{\mathbf{m}\in\{0,1\}^{n_0\times n_1}\ | \ \mathbf{m}[u,v]=0, u\in[i,i+m_0),v\in[j,j+m_1),(i,j)\in\mathcal{I}_0\times\mathcal{I}_1 ;\ \mathbf{m}[u,v]=1, \text{otherwise}\}$.

\textit{Remark: trade-off between efficiency and accuracy.} As shown in Equation~\ref{eqn-setup}, if we want to improve the efficiency (by having a smaller $k$), we will have to use a larger stride $s$ and larger mask size $m$. Intuitively, the model prediction is less accurate for a larger mask; thus, the improvement in efficiency can be at the cost of model accuracy. Our mask set generation approach allows us to balance this trade-off between efficiency and accuracy in the real-world deployment. We will study this trade-off in Section~\ref{sec-detailed-eval}.

\textbf{End-to-end \framework pipeline.} With the mask set generation technique, we can summarize the end-to-end \framework pipeline as follows: 
\begin{enumerate}
\setlength\itemsep{0em}
    \item  First, we perform \textit{adaptive mask set generation} to obtain a secure $\cR$-covering mask set $\cM$ that satisfies a certain computation budget (number of masks $k_0\times k_1$).
    \item  Second, we perform  \textit{double-masking} (Algorithm~\ref{alg-prediction}) with the model $\mathbb{F}$ and the mask set $\cM$ for robust classification.
    \item Third, we can use our certification procedure (Algorithm~\ref{alg-prediction-provable}) to certify the robustness of \framework on given labeled images against any adaptive white-box attacker within the threat model $\cA_\cR$. We will evaluate the fraction of labeled test images that can be certified across multiple datasets in the next section.
\end{enumerate}

\section{Evaluation}\label{sec-evaluation}

We instantiate \framework with three different classification models, and extensively evaluate the defense using three different datasets (we have results for another three datasets in Appendix~\ref{apx-eval}). 
We will demonstrate state-of-the-art clean accuracy and certified robust accuracy of \framework compared with prior works~\cite{chiang2020certified,zhang2020clipped,levine2020randomized,xiang2021patchguard,metzen2021efficient} and provide detailed analysis of our defense under different settings.

In this section, we primarily focus on %the attacker capability of 
a single square patch that can have arbitrary content and that can be anywhere on the image. This setting is currently an open research question in the field, and also allows for a fair comparison with prior works~\cite{chiang2020certified,zhang2020clipped,levine2020randomized,xiang2021patchguard,metzen2021efficient}. We will show the flexibility of \framework by demonstrating its generalization to a set of different patch shapes and multiple patches in Section~\ref{sec-discussion-multiple-shape-patch}.

\subsection{Setup}
In this subsection, we briefly introduce our evaluation setup. We provide additional details in Appendix~\ref{apx-setup}. Our source code is available at \url{https://github.com/inspire-group/PatchCleanser}.

\textbf{Datasets.} We choose three popular image classification benchmark datasets for evaluation: ImageNet~\cite{deng2009imagenet}, ImageNette~\cite{imagenette}, CIFAR-10~\cite{cifar}. In Appendix~\ref{apx-eval}, we also include results for three additional datasets (Flowers-102~\cite{flowers}, CIFAR-100~\cite{cifar}, and SVHN~\cite{svhn}).

\textit{ImageNet and ImageNette.} ImageNet~\cite{deng2009imagenet} is a challenging image classification dataset which has 1.3M training images and 50k validation images from 1000 classes. ImageNette~\cite{imagenette} is a 10-class subset of ImageNet with 9469 training images and 3925 validation images. ImageNet/ImageNette images have a high resolution, and we resize and crop them to 224$\times$224 before feeding them to different models.

\textit{CIFAR-10.} CIFAR-10~\cite{cifar} is a benchmark dataset for low-resolution image classification. CIFAR-10 has 50k training images and 10k test images from 10 classes. Each image is in the resolution of 32$\times$32. We resize them to 224$\times$224 via bicubic interpolation for a better classification performance.

\textbf{Models.} We choose three representative image classification models to build \framework. We provide model training details in Appendix~\ref{apx-setup}.

\textit{ResNet.} ResNet~\cite{resnet} is a classic Convolutional Neural Network (CNN) model. It uses layers of convolution filters and residual blocks to extract features for image classification. We use ResNetV2-50x1 and its publicly available weights trained for ImageNet~\cite{deng2009imagenet}. We finetune the model for other different datasets used in our evaluation.

\textit{Vision Transformer (ViT).} ViT~\cite{vit} is adapted from NLP Transformer~\cite{vaswani2017attention} for the image classification task. It divides an image into disjoint pixel blocks, and uses self-attention architecture to extract features across different pixel blocks for classification. We use ViT-B16-224~\cite{vit} trained for ImageNet and finetune it on other datasets. 

\textit{Multi-layer Perceptron (MLP).} There have been recent advances in leveraging MLP-only architectures for image classification (e.g., MLP-mixer~\cite{mlpmixer}, ResMLP~\cite{resmlp}). These architectures take pixel blocks as input and ``mix" features/pixels across locations and channels for predictions. We choose ResMLP-S24-224~\cite{resmlp} in our evaluation. We take the pre-trained model for ImageNet, and finetune it for other datasets.

\begin{table*}[t]
    \centering
        \caption{Clean accuracy and certified robust accuracy for different defenses and datasets$^\dag$}
        %\vspace{-0.5em}
    \resizebox{\linewidth}{!} {\scriptsize
    %{ \scriptsize
    \begin{threeparttable}
     \begin{tabular}{c|c|c|c|c|c|c|c|c|c|c|c|c|c|c|c|c}
    \toprule
    Dataset & \multicolumn{6}{c|}{ImageNette~\cite{imagenette}} & \multicolumn{6}{c|}{ImageNet~\cite{deng2009imagenet}} & \multicolumn{4}{c}{CIFAR-10~\cite{cifar}}\\
    \midrule
    Patch size  &  \multicolumn{2}{c|}{1\% pixels}   &  \multicolumn{2}{c|}{2\% pixels} & \multicolumn{2}{c|}{3\% pixels} & \multicolumn{2}{c|}{1\% pixels} &         \multicolumn{2}{c|}{2\% pixels} &          \multicolumn{2}{c|}{3\% pixels} & 
    \multicolumn{2}{c|}{0.4\% pixels}& 
    \multicolumn{2}{c}{2.4\% pixels}  \\
         \midrule
       Accuracy (\%)   & clean & robust &clean & robust & clean & robust & clean & robust & clean & robust & clean &  robust&clean& robust &clean& robust \\

    \midrule
    PC-ResNet&\textbf{99.6}&96.4&\textbf{99.6}&94.4&\textbf{99.5}&93.5&81.7&58.4&81.6&53.0&81.4&50.0&98.0&88.5&97.8&78.8\\ 
    PC-ViT&\textbf{99.6}&\textbf{97.5}&\textbf{99.6}&\textbf{96.4}&\textbf{99.5}&\textbf{95.3}&\textbf{84.1}&\textbf{66.4}&\textbf{83.9}&\textbf{62.1}&\textbf{83.8}&\textbf{59.0}&\textbf{99.0}&\textbf{94.3}&\textbf{98.7}&\textbf{89.1}\\
    PC-MLP& 99.4&96.8&99.3&95.3&99.4&94.6&79.6&58.4&79.4&53.8&79.3&50.7&97.4&86.1&97.0&78.0\\
    \midrule
    
    IBP~\cite{chiang2020certified} & \multicolumn{12}{c|}{computationally infeasible}  & 65.8& 51.9& 47.8 & 30.8\\
CBN~\cite{zhang2020clipped}& {94.9} &74.6&{94.9} & 60.9 & \textbf{94.9}& 45.9 &49.5& 13.4&49.5 &7.1  &49.5 &3.1 &{84.2}& 44.2& {84.2}&9.3 \\
    DS~\cite{levine2020randomized}& 92.1& 82.3&92.1&79.1& 92.1& 75.7 &  44.4 & 17.7& 44.4&14.0& 44.4 &11.2 & 83.9&68.9& 83.9&56.2 \\
    PG-BN~\cite{xiang2021patchguard} & \textbf{95.2} &\textbf{89.0}& \textbf{95.0}& \textbf{86.7} & 94.8&\textbf{83.0} & \textbf{55.1} & \textbf{32.3} & \textbf{54.6}  & \textbf{26.0}& \textbf{54.1}& \textbf{19.7} &   84.5 &63.8&83.9& {47.3} \\
    PG-DS~\cite{xiang2021patchguard}&  92.3 & 83.1 &  92.1 & 79.9 & 92.1 & 76.8& 44.1&{19.7}&43.6&{15.7}&43.0&{12.5}  & {84.7} & {69.2} &  {84.6}& {57.7}  \\
    BagCert\tnote{$\ddag$}~\cite{metzen2021efficient}&--&--&--&--&--&--&45.3&27.8&45.3&22.7&45.3&18.0&\textbf{86.0}&\textbf{72.9}&\textbf{86.0}&\textbf{60.0}\\
    \bottomrule
    \end{tabular}
      \begin{tablenotes}
      \item[$\dag$] We mark the best result for \framework models and the best result for prior works in bold.
    \item[$\ddag$]The BagCert numbers are provided by the authors~\cite{metzen2021efficient} through personal communication since the source code is unavailable; results for ImageNette are not provided.
  \end{tablenotes}
    \end{threeparttable}}
    
    \label{tab-huge-provable}
\end{table*}

\textbf{Adversarial patches.} Following prior works~\cite{chiang2020certified,levine2020randomized,xiang2021patchguard,metzen2021efficient}, we report defense performance against a square patch that takes 1\%, 2\%, and 3\% of input image pixels for ImageNet/ImageNette and a square patch with 0.4\% and 2.4\% pixels for CIFAR-10 images. We allow these patches to have \textit{arbitrary} content and be \textit{anywhere} on the image. In Section~\ref{sec-detailed-eval}, we also report results for larger patch sizes (ranging from 2\% to 62\% image pixels) to understand the limit of our defense. In Section~\ref{sec-discussion-multiple-shape-patch}, we quantitatively discuss the implications of using a set of different rectangle patch shapes as well as multiple patches.

\textbf{Defenses.} We build three defense instances PC-ResNet, PC-ViT, PC-MLP using three vanilla models of ResNet, ViT, and MLP. In Section~\ref{sec-detailed-eval}, we will analyze the effect of different defense setups (i.e., the number of masks). In our default setup, we set the number of masks $k_0\times k_1 = k^2= 6\times6$, which has high certified robustness and moderate computational overhead. We then generate the $\cR$-covering mask set $\cM$ as discussed in Section~\ref{sec-defense-implementation}. 

We also report defense performance of prior works Interval Bound Propagation based defense (IBP)~\cite{chiang2020certified}, Clipped BagNet (CBN)~\cite{zhang2020clipped}, De-randomized Smoothing (DS)~\cite{levine2020randomized}, PatchGuard (PG)~\cite{xiang2021patchguard}, and BagCert~\cite{metzen2021efficient} for comparison. We use the optimal defense settings stated in their respective papers.

\textbf{Evaluation Metrics.} We report clean accuracy and certified robust accuracy as our evaluation metrics. 
The \textit{clean accuracy} is defined as the fraction of clean test images that can be correctly classified by our defended model. 
The \textit{certified robust accuracy} is the fraction of test images for which Algorithm~\ref{alg-prediction-provable} returns \texttt{True} (certifies the robustness for this image), i.e., no adaptive white-box attacker can bypass our defense. 
In addition to accuracy metrics, we also use \textit{per-example inference time} to analyze the computational overhead.

\textbf{Remark: no need to implement adaptive attacks.} As discussed in Section~\ref{sec-provable}, certified robust accuracy is the lower bound of model accuracy against any adaptive attacker within the threat model. Therefore, it is not necessary to empirically evaluate robustness using any concrete adaptive attack strategy: empirical robust accuracy is always higher than certified accuracy.

\subsection{State-of-the-art Clean Accuracy and Certified Robust Accuracy across All Datasets}\label{sec-main-eval}

We report our main evaluation results for \framework in Table~\ref{tab-huge-provable} and compare defense performance with prior works~\cite{chiang2020certified,zhang2020clipped,levine2020randomized,xiang2021patchguard,metzen2021efficient}.

\textbf{State-of-the-art clean accuracy.} As shown in Table~\ref{tab-huge-provable}, \framework achieves high clean accuracy. Take PC-ViT as an example, \framework achieves 99.5+\% clean accuracy for 10-class ImageNette, 83.8+\% for 1000-class ImageNet, and 98.7+\% for CIFAR-10. We further report the accuracy of state-of-the-art vanilla classification models in Table~\ref{tab-vanilla}. From these two tables, we can see that the clean accuracy of our defense is very close to the state-of-the-art undefended models (the difference is smaller than 1\%). The high clean accuracy can foster real-world deployment of our defense.\footnote{Similar to prior works like PatchGuard~\cite{xiang2021patchguard}, the clean accuracy for different patch sizes varies slightly due to the use of different mask sets.} %(based on knowledge of patch sizes).% We will show a similarly high performance of \framework when we over-estimate the patch size (Section~\ref{sec-detailed-eval}).}

\begin{table}[t]
    \centering
        \caption{Clean accuracy of vanilla models}
        %\vspace{-0.5em}
   \resizebox{0.85\linewidth}{!}  
   {\scriptsize
     \begin{tabular}{c|c|c|c}
    \toprule
     & {ImageNette} & {ImageNet} & {CIFAR-10}\\
         \midrule
         ResNet~\cite{resnet} &  99.8\% & 82.3\% &  98.3\%\\
         ViT~\cite{vit}&99.8\%&84.8\%&99.0\%\\
         MLP~\cite{resmlp} & 99.5\%&80.2\%&97.8\%\\
    \bottomrule
    \end{tabular}}
    \label{tab-vanilla}
\end{table}

\textbf{High certified robustness.} In addition to state-of-the-art clean accuracy achieved by our defense, we can see from Table~\ref{tab-huge-provable} that \framework also has very high certified robust accuracy. For ImageNette, our PC-ViT has a certified robust accuracy of 97.5\% against a 1\% square patch. \textit{That is, for 97.5\% of the test images, no strong adaptive white-box attacker who uses a 1\%-pixel square patch anywhere on the image can induce misclassification of our defended model.} Furthermore, we can also see high certified robust accuracy for ImageNet and CIFAR-10, e.g., 66.4\% certified robust accuracy for a 1\%-pixel patch on ImageNet and  94.3\% certified robust accuracy for a 0.4\%-pixel patch on CIFAR-10.

\textbf{Significant improvements in clean accuracy and certified robust accuracy from prior works.} We compare our defense performance with all prior certifiably robust defenses. From Table~\ref{tab-huge-provable}, we can see that all our defense instances (i.e., PC-ResNet, PC-ViT, and PC-MLP) significantly outperform all prior works in terms of both clean accuracy and certified robust accuracy.
Notably, for a 2\%-pixel patch on ImageNet, PC-ViT improves the clean accuracy from 54.6\% to 83.9\% (29.3\% gain in top-1 accuracy) and boosts the certified robust accuracy from 26.0\% to 62.1\% (the accuracy gain is 36.1\%; the improvement is more than 2 times). Moreover, we can see that \textit{the certified robust accuracy of PC-ViT is even higher than the clean accuracy of all prior works}. These significant improvements are due to \framework's compatibility with state-of-the-art classification models, while previous works are fundamentally incompatible with them (recall that PG~\cite{xiang2021patchguard}, DS~\cite{levine2020randomized}, BagCert~\cite{metzen2021efficient}, CBN~\cite{zhang2020clipped} are all limited to models with a small receptive field). 

We can also see large improvements across datasets including ImageNette and CIFAR-10. For a 2\%-pixel patch on ImageNette, PC-ViT improves clean accuracy from 95.0\% to 99.6\% and certified robust accuracy from 86.7\% to 96.4\%.  For a 2.4\%-pixel patch on CIFAR-10, PC-ViT improves clean accuracy from 86.0\% to 98.7\% (12.7\% gain) and certified robust accuracy from 60.0\% to 89.1\% (29.1\% gain). 

\textbf{Takeaways.} In this subsection, we demonstrate that \framework has similar clean accuracy as vanilla state-of-the-art models, as well as high certified robust accuracy. In comparison with prior certifiably robust defenses, we demonstrate significant improvements in both clean accuracy and certified robust accuracy. These improvements showcase the strength of defenses that are compatible with any state-of-the-art model.

\begin{figure*}
\centering
\begin{minipage}[b]{0.34\linewidth}
\includegraphics[width=\linewidth]{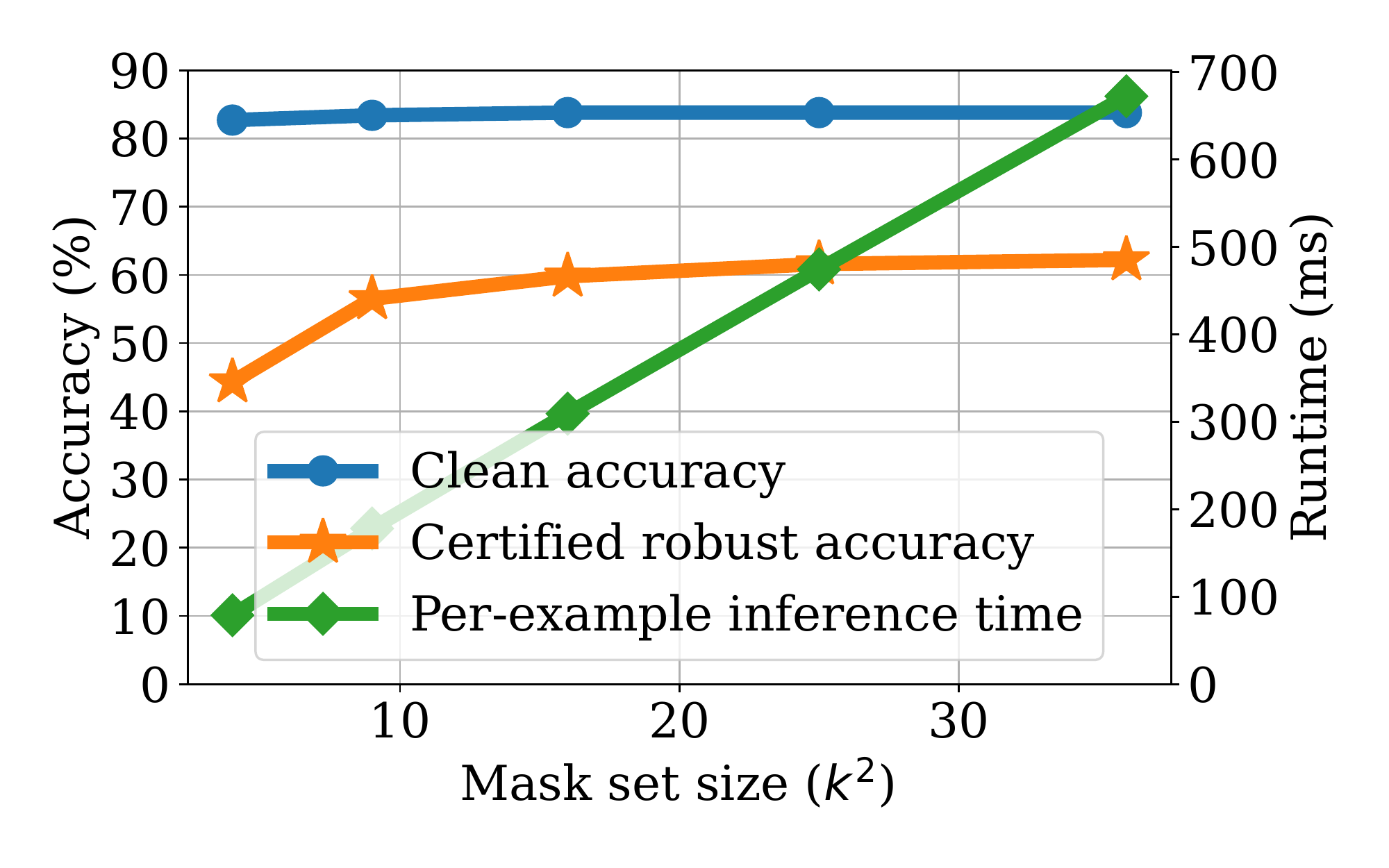}
\vspace{-3em}
\caption{The effect of mask set size on defense performance (ImageNet)}\label{fig-stride-net}
\end{minipage}%
\quad
\quad
\begin{minipage}[b]{0.55\linewidth}
\begin{minipage}[b]{0.5\linewidth}
\includegraphics[width=\linewidth]{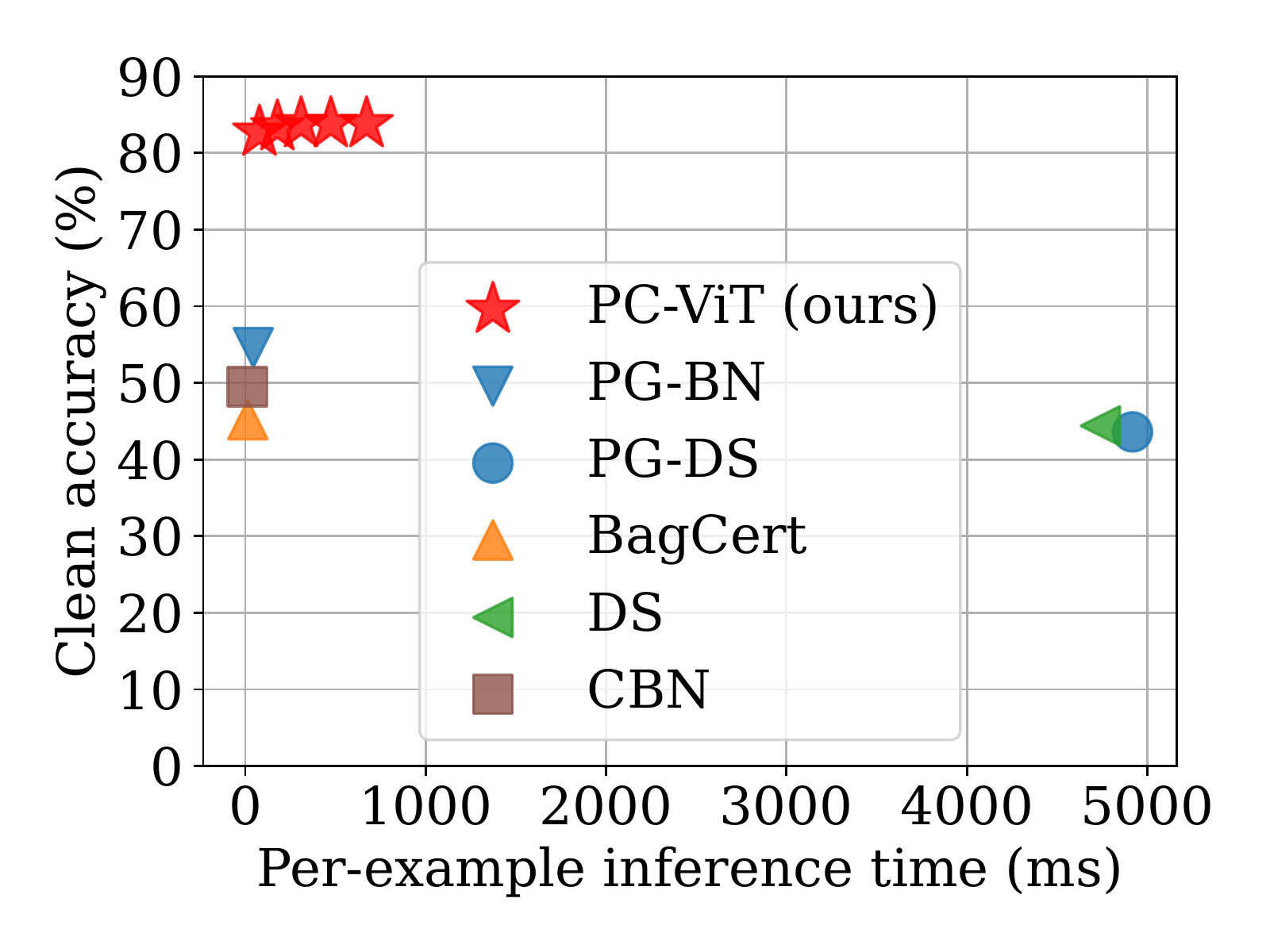}\\
\end{minipage}%
\begin{minipage}[b]{0.5\linewidth}
\includegraphics[width=\linewidth]{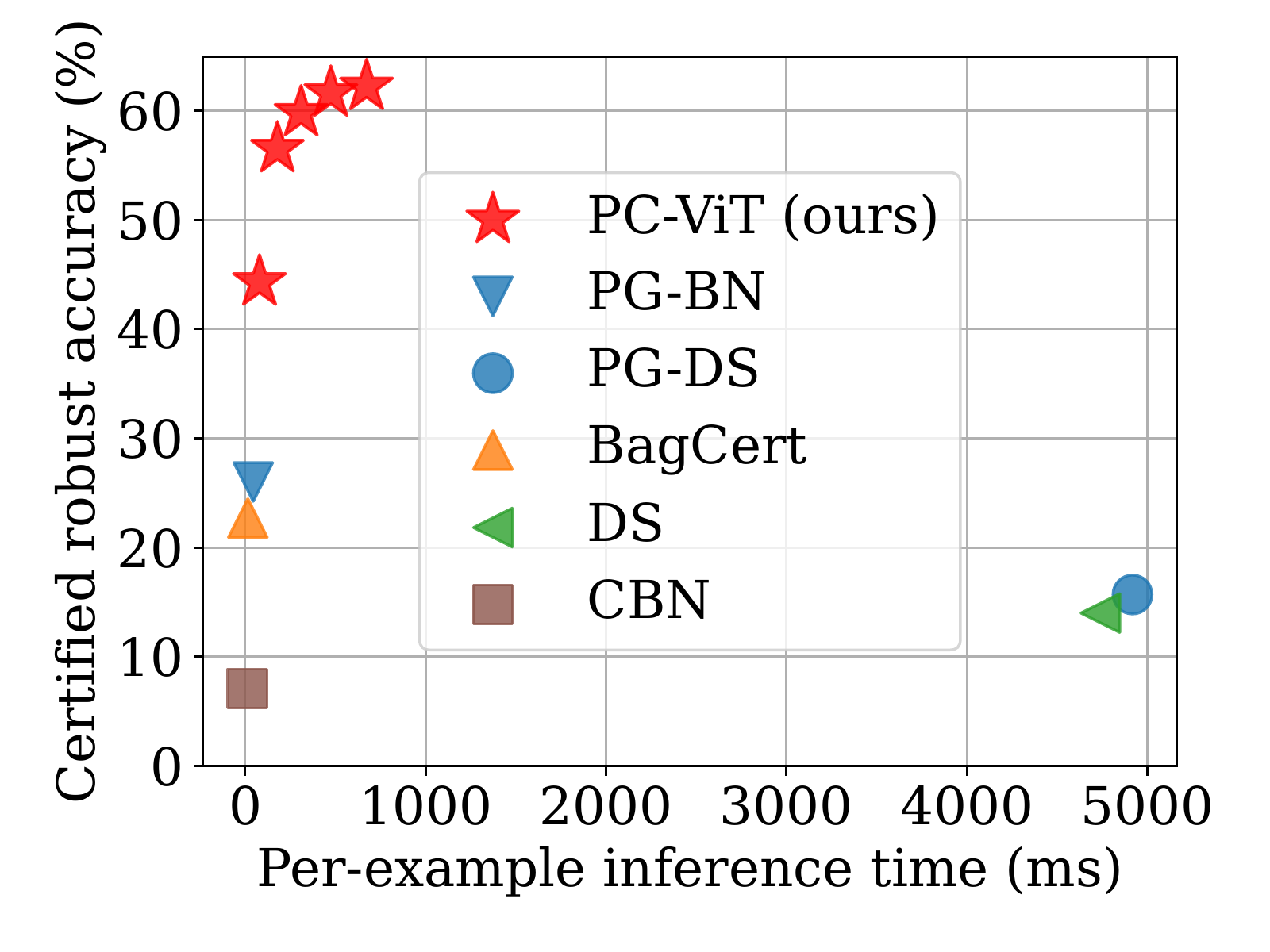}\\
\end{minipage}%
\vspace{-4em}
\caption{Trade-off between overhead and accuracy on ImageNet (left: clean accuracy; right: certified robust accuracy) }\label{fig-efficiency}
\end{minipage}%

\end{figure*}

\begin{figure*}
\centering
\begin{minipage}[b]{0.3\linewidth}
\includegraphics[width=\linewidth]{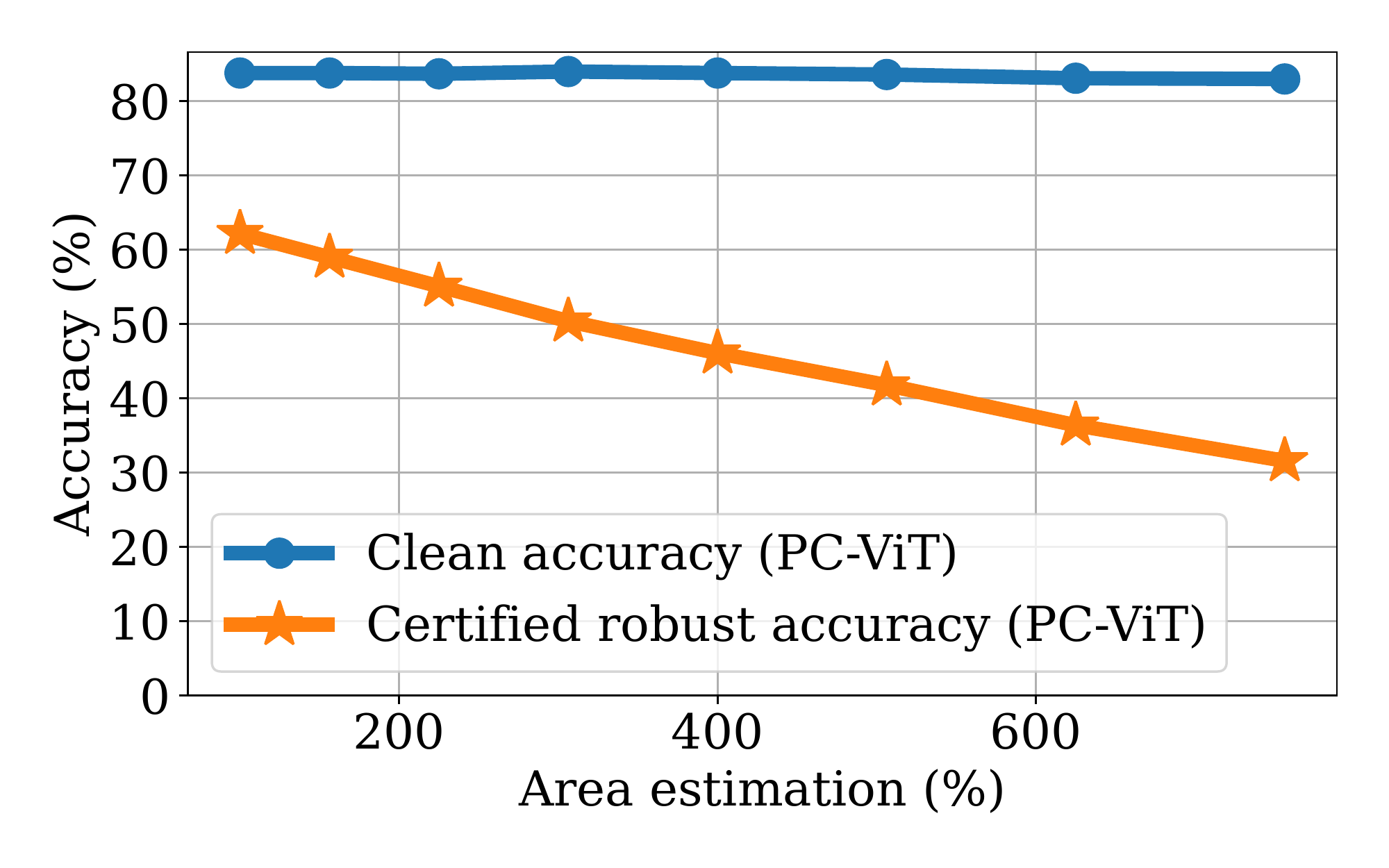}
\vspace{-3em}
\caption{Effect of over-estimated patch size for a 32$\times$32 patch (ImageNet)}\label{fig-over-mask-net}
\end{minipage}%
\quad
\begin{minipage}[b]{0.3\linewidth}
\includegraphics[width=\linewidth]{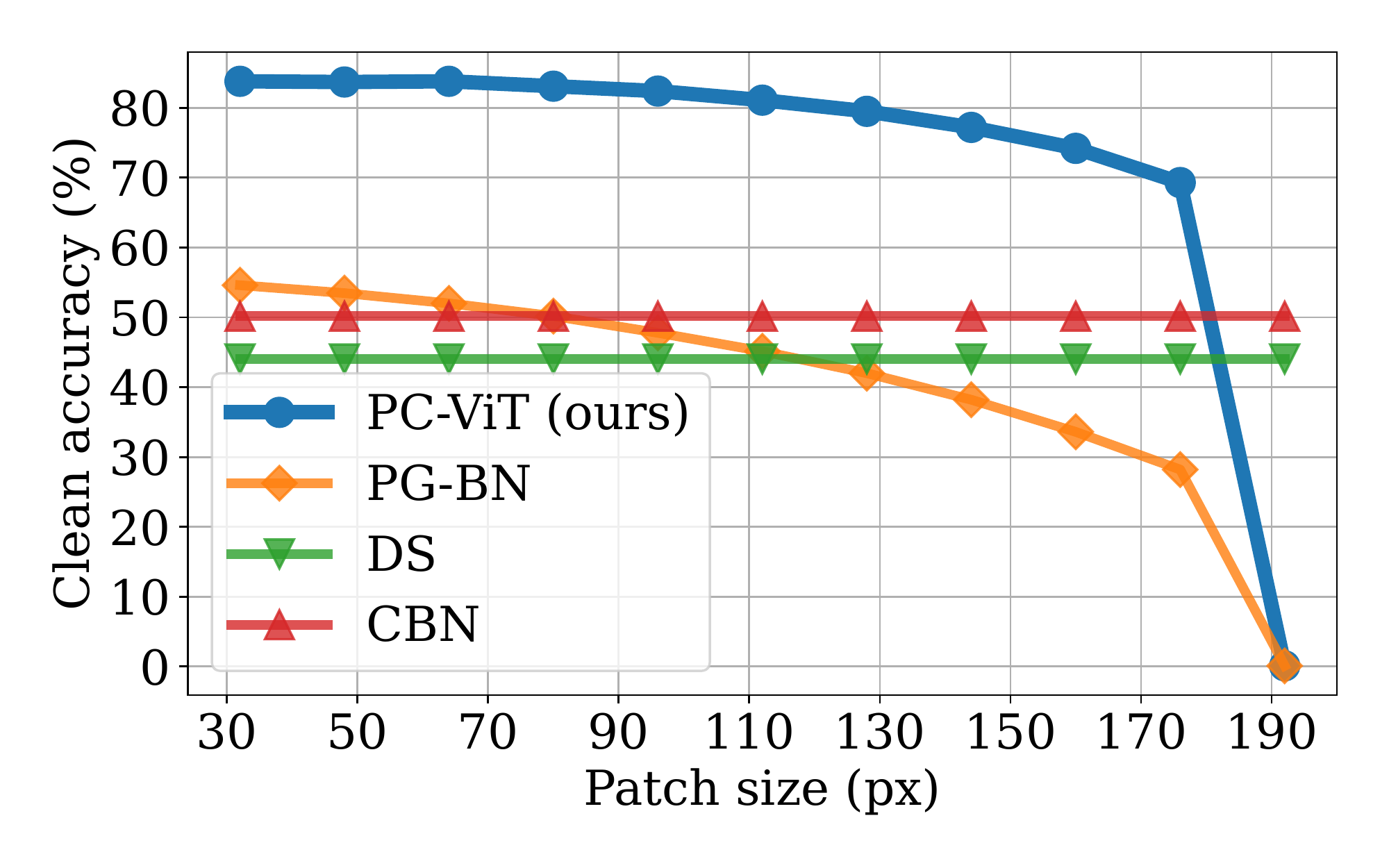}
\vspace{-3em}
\caption{Clean accuracy for defense setups against different patch sizes (ImageNet)} \label{fig-patch-size-clean-net}
\end{minipage}%
\quad
\begin{minipage}[b]{0.3\linewidth}
\includegraphics[width=\linewidth]{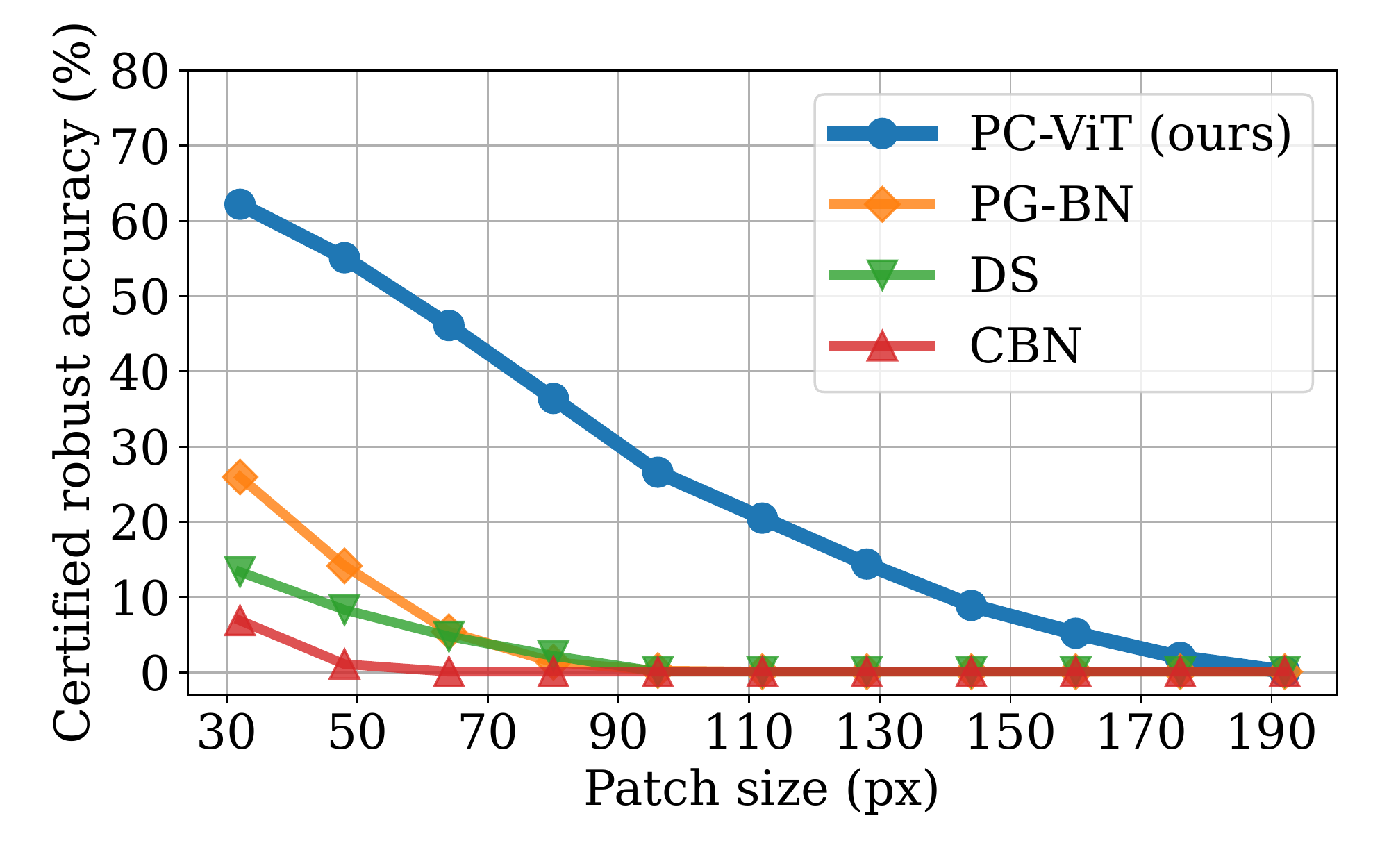}
\vspace{-3em}
\caption{Certified robust accuracy against different patch sizes (ImageNet)} \label{fig-patch-size-robust-net}
\end{minipage}%
\end{figure*}

\subsection{Detailed Analysis of \framework}\label{sec-detailed-eval}

In this subsection, we provide a detailed analysis of \framework models. We will discuss the trade-off between defense performance and defense overhead, study the implications of over-estimated patch sizes, and finally explore the limit of \framework against larger patches.  In Appendix~\ref{apx-dissect}, we analyze the defense performance on images with different object sizes and classes.% and discuss the impact of model architectures and model training.

\textbf{There is a trade-off between defense performance and defense overhead (balanced by the number of masks).} In this analysis, we use PC-ViT against a 2\%-pixel patch on 5000 randomly selected ImageNet test images to study the trade-off between defense performance and defense overhead (similar analyses for ImageNette and CIFAR-10 are available in Appendix~\ref{apx-eval}). 
In Figure~\ref{fig-stride-net}, we report the clean accuracy, certified robust accuracy, and per-example inference time (evaluated using a batch size of one) for PC-ViT configured with different computation budgets (i.e., number of masks $k^2$). As shown in the figure, as we increase the number of masks, the certified robust accuracy first significantly improves and then gradually saturates. This is because a larger $k^2$ gives a smaller mask stride and leads to a smaller mask size, which enhances the robustness certification. However, we also observe that the per-example inference time greatly increases as we are using a larger number of masks. Therefore, we need to carefully choose a proper mask set size to balance the trade-off between defense performance and defense overhead. In our default setting, we prioritize the defense performance and use a mask set size of $6^2=36$.

We further visualize the defense overhead and defense performance (in terms of clean accuracy and certified robust accuracy) for different defenses in Figure~\ref{fig-efficiency}. As shown in the figure, CBN~\cite{zhang2020clipped} (12.0ms), PG-BN~\cite{xiang2021patchguard} (44.2ms), and BagCert~\cite{metzen2021efficient} (14.0ms) have a very small runtime since they only requires one-time model feed-forward inference. For PC-ViT, we report the performance trade-off under different mask set sizes ranging from 4 to 36 (we omit PC-ResNet and PC-MLP for simplicity; additional results for them are available in Appendix~\ref{apx-eval}). 
We can see that when PC-ViT is optimized for classification accuracy, we have 83.8\% clean accuracy and 62.2\% certified robust accuracy with a moderate defense overhead (672.4ms). On the other hand, when PC-ViT is optimized for defense efficiency, we achieve a small per-example inference time (78.8ms) while still significantly outperforming prior works in terms of clean accuracy (82.7\%) and certified robust accuracy (44.3\%). Furthermore, we note that prior works such as DS~\cite{levine2020randomized} and PG-DS~\cite{xiang2021patchguard} have a much larger defense overhead on ImageNet (4740.0 ms and 4918.0ms, respectively).%, and IBP~\cite{chiang2020certified} training is too expansive to scale to large models and high-resolution images. 

From this analysis, we demonstrate that there is a trade-off between defense strength and defense efficiency. In \framework, we can tune mask set size to balance this trade-off. In contrast, while prior works like PG-BN~\cite{xiang2021patchguard} and BagCert~\cite{metzen2021efficient} have a smaller inference time, they cannot further improve their defense performance regardless of additionally available computation resources. Finally, we argue that our defense can be applied to time-sensitive applications like video analysis by performing the defense on a subset of frames. We also note that we can significantly reduce the empirical inference time by running the masked prediction evaluation (i.e., $\textsc{MaskPred}(\cdot)$ in Algorithm~\ref{alg-prediction}) in parallel when multiple GPUs are available. With the improvement in computation resources and the development of high-performance lightweight models, we expect the computational cost to be mitigated in the future.

\textbf{Over-estimation of patch sizes has a small impact on the defense performance.} \framework requires a conservative estimation of the patch size  (as a security parameter) to generate a proper mask set (the dependence on patch size estimation is similar to several prior works~\cite{mccoyd2020minority,xiang2021patchguard,xiang2021patchguardpp}). In this analysis, we aim to study the defense performance when we over-estimate the patch size. In Figure~\ref{fig-over-mask-net}, we plot the defense performance as a function of an estimated patch area (i.e., the number of pixels) on the ImageNet dataset (the actual patch has 32$\times$32 pixels on the 224$\times$224 image; results for other datasets are in Appendix~\ref{apx-eval}).
The x-axis denotes the ratio of the estimated patch area to the actual patch area; 100\% implies no over-estimation. As shown in the figure, as the over-estimation becomes greater, the clean accuracy of PC-ViT is barely affected while the certified robust accuracy gradually drops. We note that even when the estimation of the patch area is conservatively set to 4 times the actual area of the patch, PC-ViT still significantly outperforms all prior works in terms of clean accuracy and certified robust accuracy.

\textbf{Understanding the limit of our defense with larger patch sizes.} In Figure \ref{fig-patch-size-clean-net} and \ref{fig-patch-size-robust-net}, we report the defense performance of PC-ViT against different patch sizes on the 224$\times$224 ImageNet test images (results for other datasets are in Appendix~\ref{apx-eval}).
This analysis helps us to understand the limit of \framework when facing extremely large adversarial patches. Figure~\ref{fig-patch-size-clean-net} shows that, as we increase the patch size, the clean accuracy of PC-ViT slowly decreases. For example, even when the patch size is 112$\times$112 (on the 224$\times$224 image), the clean accuracy is still above 80\%. The clean accuracy finally deteriorates to 0.1\% (random guess) when the patch is extremely large as 192$\times$192. Figure~\ref{fig-patch-size-robust-net} shows that the certified robust accuracy also decreases when a larger patch is used. When a large patch of 64$\times$64 is used, we have 46.1\% certified robust accuracy; when the patch is as large as 112$\times$112 (half of the image size), we still have a non-trivial top-1 certified robust accuracy of 20.5\% for 1000-class classification. We note that we use a fixed number of masks ($k^2=36$) for this analysis% (recall Section~\ref{sec-defense-implementation} where we discussed adjusting mask set to meet certain computational budgets)
; this shows that \framework performs well across different patch sizes when having a fixed computational budget.

We further plot the clean accuracy and certified accuracy of prior defenses~\cite{xiang2021patchguard,levine2020randomized,zhang2020clipped} in Figure~\ref{fig-patch-size-clean-net} and \ref{fig-patch-size-robust-net}. We can see that the certified robust accuracy of prior works drops quickly to zero when we consider a larger patch, while \framework achieves a much higher robust accuracy across all patch sizes. We note that the clean accuracy of CBN~\cite{zhang2020clipped} and DS~\cite{levine2020randomized} does not change due to their fixed defense parameters. When the certified robust accuracy of DS and CBN reduces to zero (at a patch size of 96px), \framework still has a much higher clean accuracy and certified robust accuracy.

\section{Discussion}\label{sec-discussion}
In this section, we quantitatively discuss the implications of
multiple patch shapes and multiple patches, the Minority Reports defense~\cite{mccoyd2020minority},  limitations and future work directions of \framework.

\subsection{\framework against Multiple Patch Shapes and Multiple Patches}\label{sec-discussion-multiple-shape-patch}

In Section~\ref{sec-evaluation}, we primarily focus on the scenario of one \textit{square} patch. In this subsection, we further demonstrate the compatibility of our defense with (1) a set of different patch shapes as well as (2) multiple patches.

\textbf{Intuition.} The key requirement of \framework is using an $\cR$-covering mask set $\cM$. Therefore, to counter an attacker who can use a patch from a set of different patch shapes or who can use multiple patches, we only need to consider a mask set that includes masks of different shapes or multiple masks to ensure $\cR$-covering and plug the mask set into our double-masking algorithm.

\textbf{Different patch shapes.} First, we consider a scenario where an attacker can use \textit{any} rectangle shape that covers at most 1\% pixels of the 224$\times$224 image (502 pixels), which includes thousands of shapes ranging from 1$\times$224 to 22$\times$22. To counter this strong attacker, we consider a shape set $\cS=\{5\times224, 12\times83, 23\times38, 39\times20, 84\times12, 224\times5\}$; we claim that 6 shapes in $\cS$ together can \textit{cover any 1\%-pixel rectangle shape}.\footnote{There are other valid shape sets $\cS$. Here, we only provide one example as proof of concept.} To prove this covering property, we let $a$ and $b$ be the height and width of the rectangle patch, respectively. We know that $a\cdot b < 502$ (1\% image pixels). If $a\leq 5$, then the patch is covered by the $5\times 224$ rectangle. If $ 5< a \leq  12$, then $b< 502/6 < 84$, and the patch is covered by the $12\times 83$ rectangle. If $12<a\leq23$, then $b< 502/13 < 39$, and the patch is covered by the $23\times 38$ rectangle. If $23<a\leq39$, then $b< 502/24 < 21$, and the patch is covered by the $39\times 20$ rectangle. If $39<a\leq84$, then $b<502/40<13$, and the patch is covered by the $84\times 12$ rectangle. Finally, if $84<a\leq224$, then $b<502/85<6$, and the patch is covered by $224\times 5$ rectangle. Now we have considered all 1\%-pixel rectangles and proved the covering property. We then generate masks for 6 different rectangles and use these masks in the double-masking algorithm.

We implement our strategy and report the defense performance for 500 randomly selected test images in the upper half of Table~\ref{tab-multiple-shape-patch}. We can see that our defense has high clean performance and certified robust accuracy. We note that the reported certified robust accuracy accounts for a much stronger attacker that can use any rectangle shape covering at most 1\% pixels, explaining a drop in certified robustness from the baseline. Nevertheless, our defense performance (while considering a  stronger adversary) is still much better than those of prior works  against a 1\%-pixel square patch (recall Table~\ref{tab-huge-provable}).\footnote{In some cases, we can see a higher clean accuracy for \framework against all rectangle shapes, compared to \framework against the square patch. This is because we are using a larger number of masks, and \framework could become less likely to output an incorrect disagreer label in the clean setting.}

\textbf{Multiple patches.} To handle multiple ($K$) patches, we can generate a mask set with all possible $K$-mask combinations, at least one of which can remove all patches. In order to certify the robustness of a given image, we need to check if the image predictions are correct for all $2K$-mask combinations. 

In the lower part of Table~\ref{tab-multiple-shape-patch}, we report defense performance for 500 random test images against two 1\%-pixel patches. Our defense achieves good defense performance against two patches (e.g., 98.8\% clean accuracy and 89.2\% certified robust accuracy for two 1\% square patches anywhere on the ImageNette images). Moreover, we compare the defense performance against a 2\%-pixel square patch (which has the same number of adversarial pixels). We can see that our defense performance for two patches is reduced compared to that for one-patch; however, the numbers are still much higher than prior works against a 2\%-pixel square patch in Table~\ref{tab-huge-provable}.

\begin{table}[t]
    \centering
        \caption{Defense against different patch shapes (\textit{all possible rectangle shapes that consist of 1\% image pixels}) and multiple patches (\textit{two 1\%-pixel square patches)}. }\label{tab-multiple-shape-patch}%}, ranging from 3$\times$224 to 23$\times$23 to 224$\times$3. 
    %    \vspace{-0.5em}
    \resizebox{\linewidth}{!}
    {%\scriptsize
   {% \begin{threeparttable}
    \begin{tabular}{c|c|c|c|c|c|c}
    \toprule
          Dataset &  \multicolumn{2}{c|}{ImageNette} &\multicolumn{2}{c|}{ImageNet} & \multicolumn{2}{c}{CIFAR-10}  \\
         \midrule
     Accuracy (\%)   & clean &robust& clean &robust& clean &robust\\
         \midrule
    Any 1\% rectangle &99.2&91.8&85.4&49.8&99.4&82.6\\
    1\% square (baseline) &99.6&96.6&84.2&68.2&99.0&92.6\\
    \midrule
         Two 1\% square patches& 98.8&89.2&83.8&45.8&98.6&76.6 \\
     One 2\% square patch & 99.2&95.6 &83.8&63.2&99.0&91.2\\
     \bottomrule
    \end{tabular}   
         }}
 
\end{table}

\subsection{\framework and Minority Reports}\label{sec-discussion-mr}

In this paper, we propose \framework for certifiably robust prediction without abstention. In contrast, another pixel-masking defense, Minority Reports (MR)~\cite{mccoyd2020minority}, only achieves a weaker robustness notion for attack \emph{detection}: an attacker can force MR to always abstain from prediction. Though the certified robust accuracy for these two defenses with different robustness notions are not directly comparable, we implement MR and \framework using the same mask set and report their defense performance against a 2\%-pixel patch on ImageNet in Figure~\ref{fig-mr-net}.

First, we observe that MR can balance the trade-off between clean accuracy and certified robust accuracy. Second, MR achieves the highest certified robust accuracy (74.3\%) due to the easier certification for a weaker robustness notion. Third, \framework achieves a similar value of certified robust accuracy ($\sim$60\%) to MR when their clean accuracy is around 84\%. This is remarkable given that \framework  eliminates the issue of abstentions/alerts.

\begin{figure}[t]
    \centering
    \includegraphics[width=0.6\linewidth]{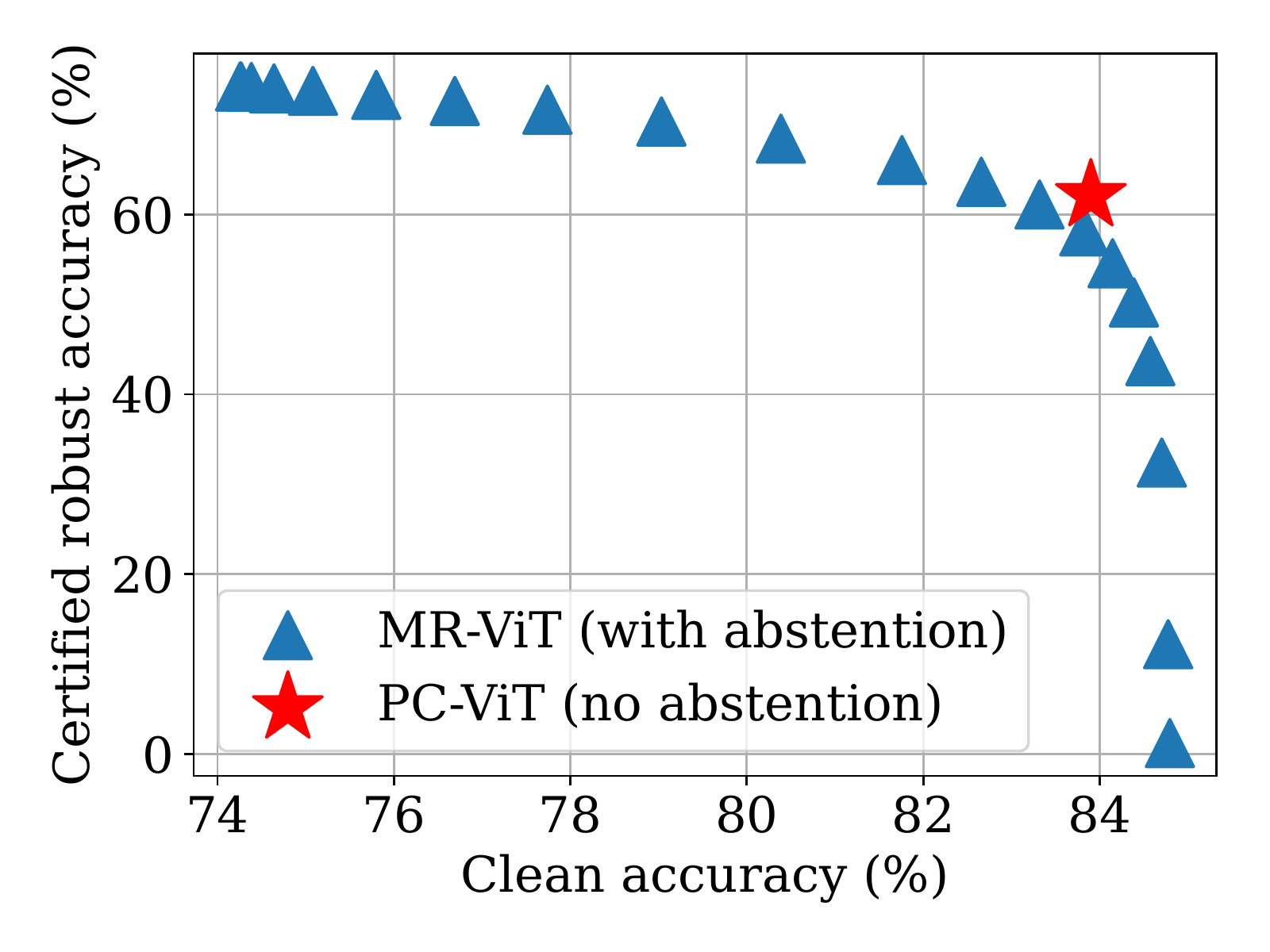}
    \vspace{-1.5em}
    \caption{Defense performance of PC-ViT and MR-ViT on ImageNet; note that robustness notions are different (robust prediction for \framework vs. attack detection for MR).}
    \label{fig-mr-net}
\end{figure}

\subsection{Limitation and Future Work}\label{sec-discussion-future}
In this subsection, we discuss the limitations and future work directions of our defense.

\textbf{Improving defense efficiency.} Compared to some prior works~\cite{zhang2020clipped,xiang2021patchguard,metzen2021efficient}, \framework achieves better performance at the cost of efficiency (recall Figure~\ref{fig-efficiency}). In Section~\ref{sec-detailed-eval} (Figure~\ref{fig-stride-net}), we also see a trade-off between defense performance and efficiency. How to improve the efficiency of the underlying model (e.g., EfficientNet~\cite{efficientnet}) and the algorithm (e.g., our alternative inference algorithm in Appendix~\ref{apx-new-inference}) is interesting to study. We note that \framework's runtime can be improved by evaluating masked predictions in parallel with multiple GPUs.

\textbf{Relaxing the prior estimation of the patch shape and patch size.} \framework requires a conservative estimation of the patch shape/size as the security parameters to generate the mask set. This dependence on the prior knowledge is similar to that of verifiably robust network training~\cite{gowal2018effectiveness,mirman2018differentiable} and empirical adversarial training~\cite{goodfellow2014explaining,madry2017towards} against global perturbations~\cite{szegedy2013intriguing,goodfellow2014explaining,carlini2017towards}, which need to know the norm and magnitude of the perturbations. This limitation is also shared by masking-based defenses~\cite{mccoyd2020minority,xiang2021patchguard,xiang2021patchguardpp}; an underestimated patch size/shape will undermine the robustness. How to relax the dependence on this prior knowledge is important to study. In Section~\ref{sec-discussion-multiple-shape-patch}, we demonstrate how to mitigate the dependence on prior knowledge of patch shape by considering all possible patch shapes and using a union of different mask shapes.

\textbf{Improving classification models' prediction invariance to pixel masking.} \framework is compatible with any classification models. Therefore, any improvement in image classification can benefit \framework. Notably, improving model prediction invariance to masking is crucial to model's robustness against adversarial patches. Relevant research questions include: (1) How to train a model to be robust to pixel masking (e.g., Cutout~\cite{cutout}, CutMix~\cite{yun2019cutmix})? (2) How to design model architecture that is inherently robust to pixel masking (e.g., ViT~\cite{vit}, CompNet~\cite{kortylewski2020compositional})? 

\textbf{Handling potential semantic changes caused by masks.} \framework uses masks to achieve substantial robustness against adversarial patches, and we have demonstrated its effectiveness on common image datasets. However, the masking operation might lead to semantic changes for special classification tasks (e.g., a classifier trained to recognize masks). In these special cases, we could use colored masks for \framework and further train the classifier to distinguish between vanilla masks and \framework masks. We leave further explorations for future work.

%\textbf{Extending to other defense tasks.} \framework demonstrates that we can achieve high certifiable robustness for any image classifier. Our defense insights can be helpful for other vision tasks such as object detection. For example, we can plug \framework into DetectorGuard~\cite{xiang2021detectorguard}, a certifiably robust object detection algorithm that uses robust image classifiers for robust object detection, to improve its defense performance. Moreover, \framework can also have applicability to training-time model poisoning attacks~\cite{gu2017badnets}. For example, \framework can be used to filter out training images that are poisoned with localized triggers.

\section{Related Work}\label{sec-related-work}

\subsection{Adversarial Patch Attacks}
The adversarial patch attack was first introduced by Brown et
al. \cite{brown2017adversarial}; this attack focused on generating universal adversarial patches to induce model misclassification. Brown et al.~\cite{brown2017adversarial} demonstrated that the patch attacker can realize a physical-world attack by printing and attaching the patch to the victim objects. A concurrent paper on the Localized and Visible Adversarial Noise (LaVAN) attack~\cite{karmon2018lavan} aimed at inducing misclassification in the digital domain. Both of these papers operated in the white-box threat model, with access to the internals of the classifier under attack. PatchAttack~\cite{yang2020patchattack}, on the other hand, proposed a reinforcement learning based attack for generating adversarial patches in the black-box setting.

There have been adversarial patch attacks proposed in other domains such as object detection~\cite{liu2018dpatch}, semantic segmentation~\cite{sehwag2018not}, and network traffic analysis~\cite{shan2021real}. In this paper, we focus on test-time attacks against image classification models.

\subsection{Adversarial Patch Defenses}\label{sec-related-work-defense}

To counter the threat of adversarial patches,  heuristic-based empirical defenses, Digital Watermark (DW)~\cite{hayes2018visible} and Local Gradient Smoothing (LGS)~\cite{naseer2019local}, were first proposed. However, Chiang et al.~\cite{chiang2020certified} had shown that these defenses were ineffective against an adaptive attacker with the knowledge of the defense algorithm and model parameters~\cite{chiang2020certified}.

The ineffectiveness of empirical defenses has inspired many certifiably robust defenses. Chiang et al.~\cite{chiang2020certified} proposed the first certifiably robust defense against adversarial patches via Interval Bound Propagation (IBP)~\cite{gowal2018effectiveness,mirman2018differentiable}, which conservatively bounded the activation values of neurons to derive a robustness certificate. This defense requires expensive training and does not scale to large models and high-resolution images. Zhang et al.~\cite{zhang2020clipped} proposed Clipped BagNet (CBN) to clip features of BagNet (a classification model with small receptive fields) for certified robustness. Levine et al.~\cite{levine2020randomized} proposed De-randomized Smoothing, which fed small image regions to a classification model and performed majority voting for the final prediction. Xiang et al.~\cite{xiang2021patchguard} proposed PatchGuard as a general defense framework with two key ideas: the use of small receptive fields and secure feature aggregation. %Notably, PatchGuard subsumes a number of recent works~\cite{zhang2020clipped,levine2020randomized,metzen2021efficient,cropping}.
Metzen et al.~\cite{metzen2021efficient} proposed BagCert, a variant of BagNet with majority voting, for certified robustness. %Lin et al.~\cite{cropping} proposed Randomized Cropping in which model predictions on different randomly cropped images performed majority voting for the final robust prediction. %This approach only offers probabilistic certified robustness, so we did not compare its performance with other approaches that offer deterministic certified robustness in Section~\ref{sec-evaluation}. 

A key takeaway from our paper is that the dependence of prior works on specific model architectures (e.g., small receptive fields~\cite{zhang2020clipped,levine2020randomized,xiang2021patchguard,metzen2021efficient,cropping}) greatly limits the defense performance; in contrast, the compatibility of \framework with any model architecture leads to state-of-the-art clean accuracy and certified robust accuracy.%, outperforming all prior works by a large margin.

Another line of certifiably robust research focuses on attack detection. Minority Reports (MR)~\cite{mccoyd2020minority} places a mask at all image locations and uses the inconsistency in masked prediction voting grids as an attack indicator. PatchGuard++~\cite{xiang2021patchguardpp} performed a similar defense in the feature space. %proposed to use models with small receptive fields to extract features and do efficient feature-space masking for attack detection. 
We note that the first-round masking of \framework is is similar to the masking operation of MR; we provided a detailed comparison in Section~\ref{sec-discussion-mr}. A concurrent work ScaleCert~\cite{han2021scalecert} uses superficial important neurons to detect a patch attack; we omit its detailed discussion due to different defense objectives.  %However, performing one-round masking of MR can only achieve robustness for attack detection, and an attacker can exploit this weaker notion of robustness to force the model to abstain from making a prediction. In contrast, \framework's double-masking algorithm focuses on the harder task of recovering correct predictions without abstention.%\footnote{One concurrent work~\cite{tramer2021detecting} discusses a \textit{theoretical but computationally impractical} way to build robust classification defenses using robust attack detection techniques for global $L_p$ attacks. Studying its implications on patch defense is an interesting direction of future work.}

Some other recent defenses focus on adversarial training and robust model architecture~\cite{wu2019defending,rao2020adversarial,Mu2021defending,cosgrove2020robustness}, but they lack certifiable robustness guarantees. In other domains like object detection, empirical defenses~\cite{saha2020role,ji2021adversarial,liang2021we} and certifiably robust defenses~\cite{xiang2021detectorguard} have also been proposed. We omit a detailed discussion since \framework focuses on certifiably robust image classification. 

\subsection{Other Adversarial Example Attacks}

In addition to adversarial patch attacks and defenses, there is a significant body of work on adversarial examples. 
Conventional adversarial attacks~\cite{barreno2010security,szegedy2013intriguing,biggio2013evasion,goodfellow2014explaining,papernot2016limitations,carlini2017towards,madry2017towards} aim to introduce a small global $L_p$ perturbation to the image for model misclassification. Empirical defenses~\cite{papernot2016distillation,xu2017feature,meng2017magnet,metzen2017detecting} were first proposed to mitigate the threat of adversarial examples, but were later found vulnerable to a strong adaptive attacker with the knowledge of the defense setup~\cite{carlini2017adversarial,athalye2018obfuscated,tramer2020adaptive}. The fragility of these heuristic-based defenses inspired a new research thread on developing certifiably robust defenses~\cite{raghunathan2018certified,wong2017provable,lecuyer2019certified,cohen2019certified,salman2019provably,gowal2018effectiveness,mirman2018differentiable}. In contrast, we focus on adversarial patch attacks, whose perturbations are localized and thus are realizable in the physical world.%, and we refer interested readers to survey papers~\cite{papernot2018sok,yuan2019adversarial} for a more detailed background on adversarial examples. 
\section{Conclusion}
In this paper, we propose \framework for certifiably robust image classification against adversarial patch attacks. Notably, \framework is compatible with any state-of-the-art classification model (including ones with large receptive fields). \framework uses a double-masking algorithm to remove all adversarial pixels and recover the correct prediction without any abstention. Our evaluation shows that \framework outperforms all prior works by a large margin: it is the first certifiably robust defense that achieves clean accuracy comparable to state-of-the-art vanilla models while simultaneously achieving high certified robust accuracy. \framework thus represents a promising new direction in our quest for secure computer vision systems.

\section*{Acknowledgements}%
We are grateful to David Wagner for shepherding the paper and anonymous reviewers at USENIX Security for their valuable feedback. We are also grateful to Ashwinee Panda, Sihui Dai, Alexander Valtchanov, Xiangyu Qi, and Tong Wu for their insightful comments on the paper draft.
This work was supported in part by the National Science Foundation under grants CNS-1553437 and CNS-1704105, the ARL’s Army Artificial Intelligence Innovation Institute (A2I2), the Office of Naval Research Young Investigator Award, the Army Research Office Young Investigator Prize, Schmidt DataX award, and Princeton E-ffiliates Award.

%We would like to thank Jan Hendrik Metzen for providing additional evaluation results of the BagCert defense~\cite{metzen2021efficient}. We are also grateful to Ashwinee Panda, Sihui Dai, Alexander Valtchanov, Xiangyu Qi, and Tong Wu for their valuable feedback on the paper.

%\newpage
\bibliographystyle{plain}
\bibliography{reference-short.bib}
%\bibliography{reference.bib}

%\newpage
\appendix

\section{Details of Experiment Setup}\label{apx-setup}
In this section, we provide additional details of our experiment setup. We release our source code at \url{https://github.com/inspire-group/PatchCleanser}.

\textbf{Additional Datasets.} In Appendix~\ref{apx-eval}, we will include evaluation results for three additional datasets (Flowers-102~\cite{flowers}, CIFAR-100~\cite{cifar}, and SVHN~\cite{svhn}) to demonstrate the general applicability of \framework.

\textit{Flowers-102.} The Flowers-102~\cite{flowers} dataset has labels for 102 different flower categories. Images are in high-resolution; we resize and crop them into 224$\times$224 pixels.

\textit{CIFAR-100.} CIFAR-100~\cite{cifar} is a low resolution image dataset with 100 classes from common objects. The original resolution of CIFAR-100 images is 32$\times$32; we resize them to 224$\times$224 for better classification performance. 

\textit{SVHN.} The Street View House Numbers (SVHN)~\cite{svhn} dataset is a low-resolution image dataset for digit recognition (10 different digits). Each image has a resolution of 32$\times$32  and is centered around a single digit. We resize all images to 224$\times$224.

\textbf{Details of Model training.} For all high-resolution images (i.e., ImageNet~\cite{deng2009imagenet}, ImageNette~\cite{imagenette}, Flowers-102~\cite{flowers}), we resize and crop them into 224$\times$224. For low-resolution images (i.e., CIFAR-10~\cite{cifar}, CIFAR-100~\cite{cifar}, SVHN~\cite{svhn}), we resize them to 224$\times$224 (via bicubic interpolation) without cropping. We use \texttt{timm} library~\cite{rw2019timm} to  build all vanilla models and load weights trained for ImageNet~\cite{deng2009imagenet}. In order to train a model for other datasets, we set the batch size to 64 and use an SGD optimizer with a momentum of 0.9 and an initial learning rate of 0.001 (for ViT and ResMLP) or 0.01 (for ResNet). We train each model for 10 epochs and reduce the learning rate by a factor of 10 every 5 epochs. 

In our default setting, we use Cutout data augmentation~\cite{cutout} for the model training. Specifically, we apply 2 masks of size 128$\times$128 at random locations to the $224\times224$ training images; this training-time data augmentation can improve model prediction invariance to pixel masking. We note that the Cutout training is only an optional step in \framework pipeline. In Appendix~\ref{apx-eval}, we will report the defense performance with and without Cutout training to analyze its effect. Finally, we reiterate that the Cutout training is a data augmentation technique; therefore, it is agnostic to model architectures and only incurs a small training overhead.

\textbf{Details of software and hardware.} Our implementation is based on PyTorch~\cite{pytorch}, and we use \texttt{timm}~\cite{rw2019timm} to build models and load pretrained weights. The runtime evaluation is done on a workstation with 48 Intel Xeon Silver 4214 CPU cores, 384 GB RAM, and 8 GeForce RTX 2080 Ti GPUs. We use a batch size of 1 on one GPU to obtain the average per-example runtime in Section~\ref{sec-detailed-eval} and Appendix~\ref{apx-eval}.

\section{Challenger Masking: Improving Inference Complexity}\label{apx-new-inference}

Our double-masking defense (Algorithm~\ref{alg-prediction} in Section~\ref{sec-two-round}) has inference complexity of $O(|\cM|^2)$ in the worst case (doing all two-mask predictions). In this subsection, we introduce a new inference algorithm named challenger masking, which has better worst-case complexity of $O(|\cM|)$, the same certified robust accuracy, but slightly lower clean accuracy. Similar to our double-masking algorithm (Algorithm~\ref{alg-prediction} in Section~\ref{sec-two-round}), the challenger masking involves two rounds of masking: if the first-round masking reaches a unanimous agreement on masked predictions, we return the agreed prediction label; otherwise, we play a challenger game (in the second-round masking) to settle the disagreement.  

\textbf{Challenger game.} The high-level idea of the challenger game is to let different masked predictions challenge each other and output the game-winner as the final prediction. For two masks $\mathbf{m}_0,\mathbf{m}_1$ that give different masked predictions (i.e., $\bar{y}_0\neq \bar{y}_1, \bar{y}_0 = \mathbb{F}(\mathbf{x}\odot\mathbf{m}_0),\bar{y}_1=\mathbb{F}(\mathbf{x}\odot\mathbf{m}_1)$), we apply both two masks to the image and evaluate the two-mask prediction as $\hat{y}=\mathbb{F}(\mathbf{x}\odot\mathbf{m}_0\odot\mathbf{m}_1)$. If the two-mask prediction $\hat{y}$ agrees with any of the one-mask prediction $\bar{y}_0$ or $\bar{y}_1$, we consider the agreed prediction as the winner of this challenger game. Our algorithm will discard a mask once it loses any challenger game and continue to play the game until there is only one label left (i.e., no challenger exists). Finally, we output the winner label as the robust prediction. Intuitively, if the first-round mask removes the patch, then adding a second mask is unlikely to give a different prediction (since the second mask is applied to a benign image). Therefore, the mask that removes the patch has a great chance to win this challenger game.

\begin{algorithm}[t]
    \centering
    \caption{Challenger masking algorithm}\label{alg-prediction-new}
    \begin{algorithmic}[1]
    \renewcommand{\algorithmicrequire}{\textbf{Input:}}
    \renewcommand{\algorithmicensure}{\textbf{Output:}}
    \Require Image $\mathbf{x}$, vanilla prediction model $\mathbb{F}$, mask set $\mathcal{M}$
    \Ensure  Robust prediction $\bar{y}$ 
    \Procedure{ChallngerMasking}{$\mathbf{x},\mathbb{F},\mathcal{M}$}

  %  \LeftCommenta{First-round masking}
    \State $\mathcal{P}\gets\varnothing$
    \For{$\mathbf{m}\in\mathcal{M}$} \Comment{Enumerate every mask $\mathbf{m}$}
    \State $\bar{y}\gets \mathbb{F}(\mathbf{x}\odot\mathbf{m})$ \Comment{Do masked prediction}
    %\If{$\bar{c}>\tau$} \Comment{Check prediction confidence}
    \State $\mathcal{P}\gets\mathcal{P}\ \bigcup \ \{(\mathbf{m},\bar{y})\}$ \Comment{Update set $\mathcal{P}$}
    %\EndIf
    \EndFor
%    \LeftCommenta{Second-round masking (the challenger game)}
    \LeftCommenta{Sample a \textit{winner candidate} $(\mathbf{m}^*,\bar{y}^*)$}
    \State $(\mathbf{m}^*,\bar{y}^*)\gets\text{a random sample from }\mathcal{P}$
    \State $\mathcal{L}\gets \{\bar{y} \in \mathcal{Y}\ | \ \exists\  \mathbf{m}\in\mathcal{M},(\mathbf{m},\bar{y})\in\mathcal{P} \}$ \Comment{Label set $\mathcal{L}$}
    %\State $y^*\gets \mathcal{L}[0]$
    %\State$\mathbf{m}^* \gets \mathcal{P}[y^*][0]$    
    \While{$|\mathcal{L}| > 1$}\Comment{Disagreement in predictions}
    \LeftCommentb{Sample a \textit{challenger} $(\mathbf{m},\bar{y})$}
    \State $(\mathbf{m},\bar{y})\gets\text{a random sample from }\mathcal{P}$ s.t. $\bar{y}\neq \bar{y}^*$
    %\State $\mathbf{m}\gets \mathcal{P}[y][0]$
    \State $\hat{y}\gets \mathbb{F}(x\odot\mathbf{m}\odot \mathbf{m^*})$\Comment{Two-mask prediction}
    \If{$\hat{y}=\bar{y}$}\Comment{The challenger wins}
    \State $\mathcal{P}\gets\mathcal{P}-\{(\mathbf{m}^*,\bar{y}^*)\}$ \Comment{Remove the candidate}
    %\State $\mathcal{P}[y^*] \gets \mathcal{P}[y^*] - \{\mathbf{m^*}\}$
  %  \If{$\mathcal{P}[y^*]=\varnothing$}
  %  \State{$\mathcal{L} = \mathcal{L}- \{y^*\}$}
  %  \EndIf
    \State $(\mathbf{m^*},\bar{y}^*)\gets (\mathbf{m},\bar{y})$ \Comment{Update the candidate}%and $\gets \mathbf{m}$
    \Else\Comment{The challenger loses}
    \State $\mathcal{P}\gets\mathcal{P}-\{(\mathbf{m},\bar{y})\}$ \Comment{Remove the challenger}
    %\State $\mathcal{P}[y] \gets \mathcal{P}[y] - \{\mathbf{m}\}$
  %  \If{$\mathcal{P}[y]=\varnothing$}
  %  \State{$\mathcal{L} = \mathcal{L}- \{y\}$}
   % \EndIf
    \EndIf
    \State $\mathcal{L}\gets \{\bar{y} \in \mathcal{Y}\ | \ \exists\  \mathbf{m}\in\mathcal{M},(\mathbf{m},\bar{y})\in\mathcal{P} \}$ \Comment{New $\mathcal{L}$}
    \EndWhile
 
    \State\Return $\bar{y}^*$\Comment{Return the candidate prediction}
    \EndProcedure
    \end{algorithmic}
\end{algorithm}

\textbf{Algorithm details.} The pseudocode of our challenger masking algorithm is presented in Algorithm~\ref{alg-prediction-new}. To start with, we do first-round masking to gather the prediction set $\cP$. Next, we randomly sample a mask-prediction pair as our (temporary) winner candidate (i.e., $\mathbf{m}^*,\bar{y}^*$). Furthermore, we generate a label set $\mathcal{L}$ that consists of all prediction labels in the prediction set $\cP$. If the size of the label set $\cL$ equals one, it means that the first-round masking has a unanimous agreement, and we will skip the challenge game (i.e., the while loop) and return the agreed prediction $\bar{y}^*$. On the other hand, when $|\cL|>1$, i.e., the first-round masking has a disagreement, we go into the while loop for our challenger game.

\begin{figure}[t]
    \centering
    \includegraphics[width=0.9\linewidth]{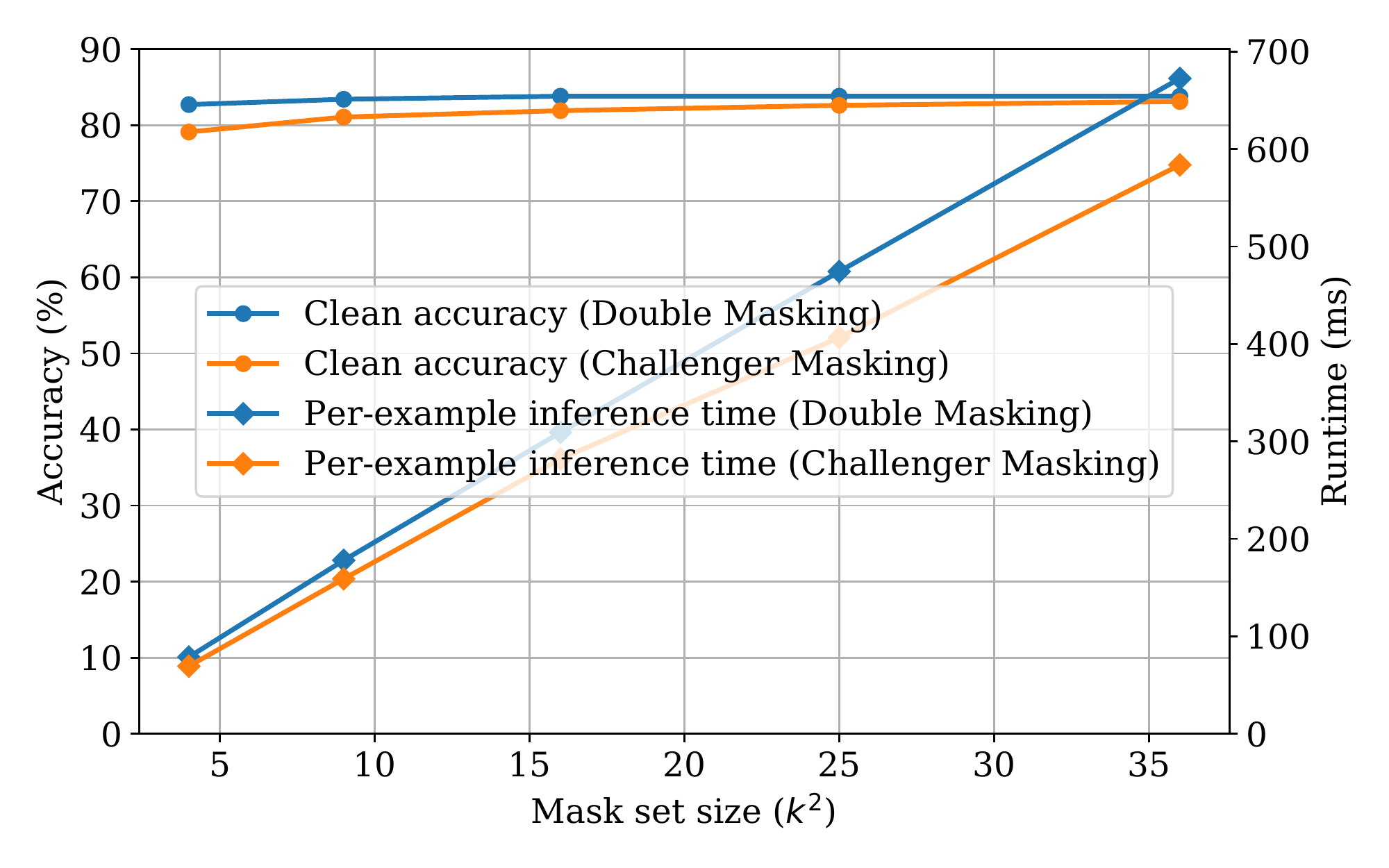}
       \vspace{-1em}
    \caption{Comparison of clean accuracy and per-example inference time between two inference algorithms}
    \label{fig-pc2}
\end{figure}

In the challenger game, we first sample a random challenger $\mathbf{m},\bar{y}$ whose prediction disagrees with the winner candidate (i.e., $\bar{y}\neq\bar{y}^*$). Then, we evaluate two-mask prediction $\hat{y}=\mathbb{F}(\mathbf{x}\odot\mathbf{m}\odot\mathbf{m}^*)$. If $\hat{y}=\bar{y}$, the challenger wins the game, we remove the loser $\mathbf{m}^*,y^*$ from the prediction set $\cP$ and update the winner candidate as $\mathbf{m},\bar{y}$. If $\hat{y}\neq\bar{y}$, we consider the challenge is unsuccessful, we remove the challenger $\mathbf{m},\bar{y}$ from the set $\cP$. After the removal of the loser, we update the label set $\cL$. If there is still more than one prediction labels, we move to the next iteration of the challenger game. Finally, we will have a winner and we return $\bar{y}^*$ as the final prediction.

\begin{figure*}
\centering
\begin{minipage}[b]{0.32\linewidth}
\includegraphics[width=\linewidth]{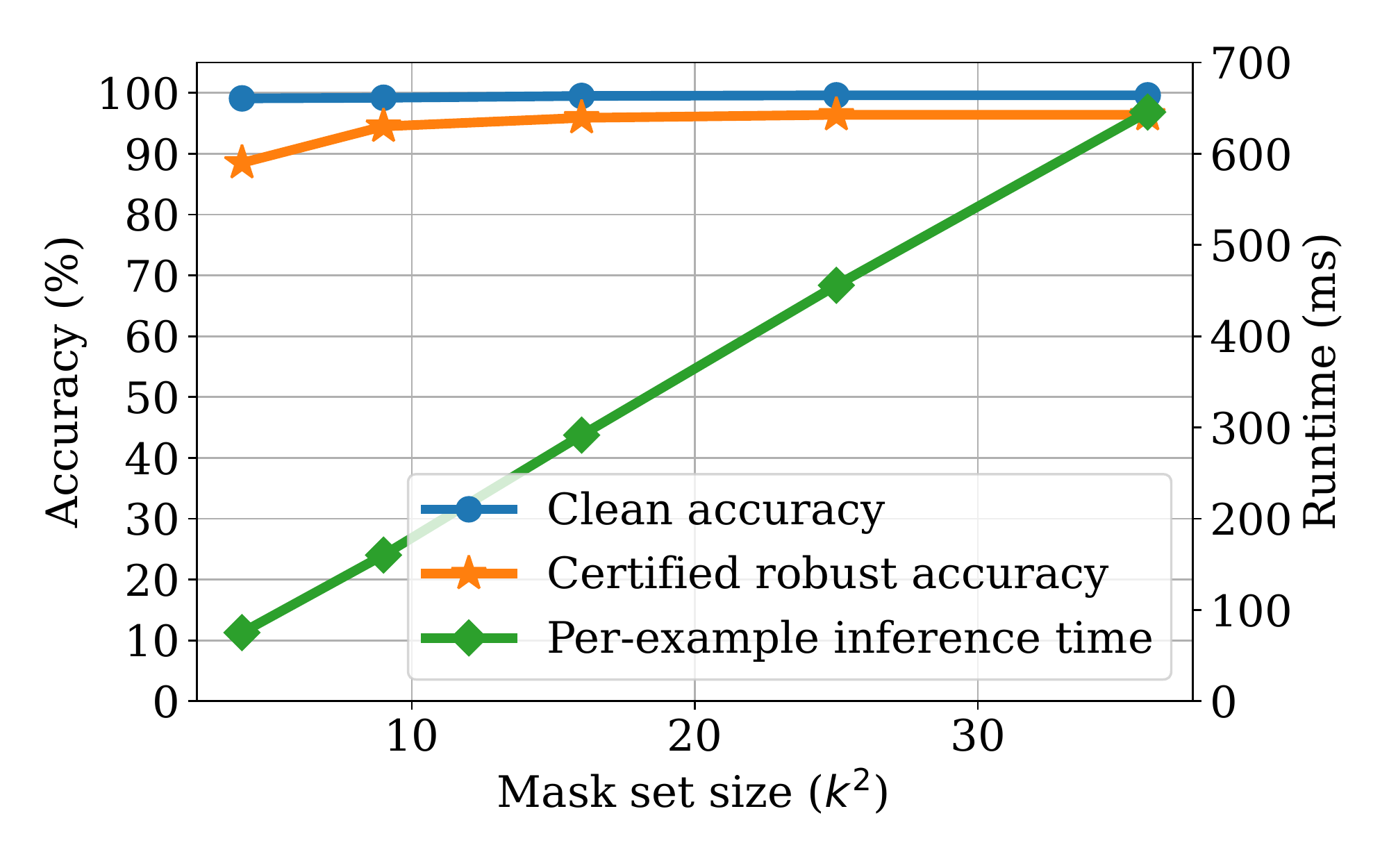}
\vspace{-2em}
\caption{The effect of mask set size on defense performance (ImageNette)}\label{fig-stride-nette}
\end{minipage}%
\quad
\begin{minipage}[b]{0.32\linewidth}
\includegraphics[width=\linewidth]{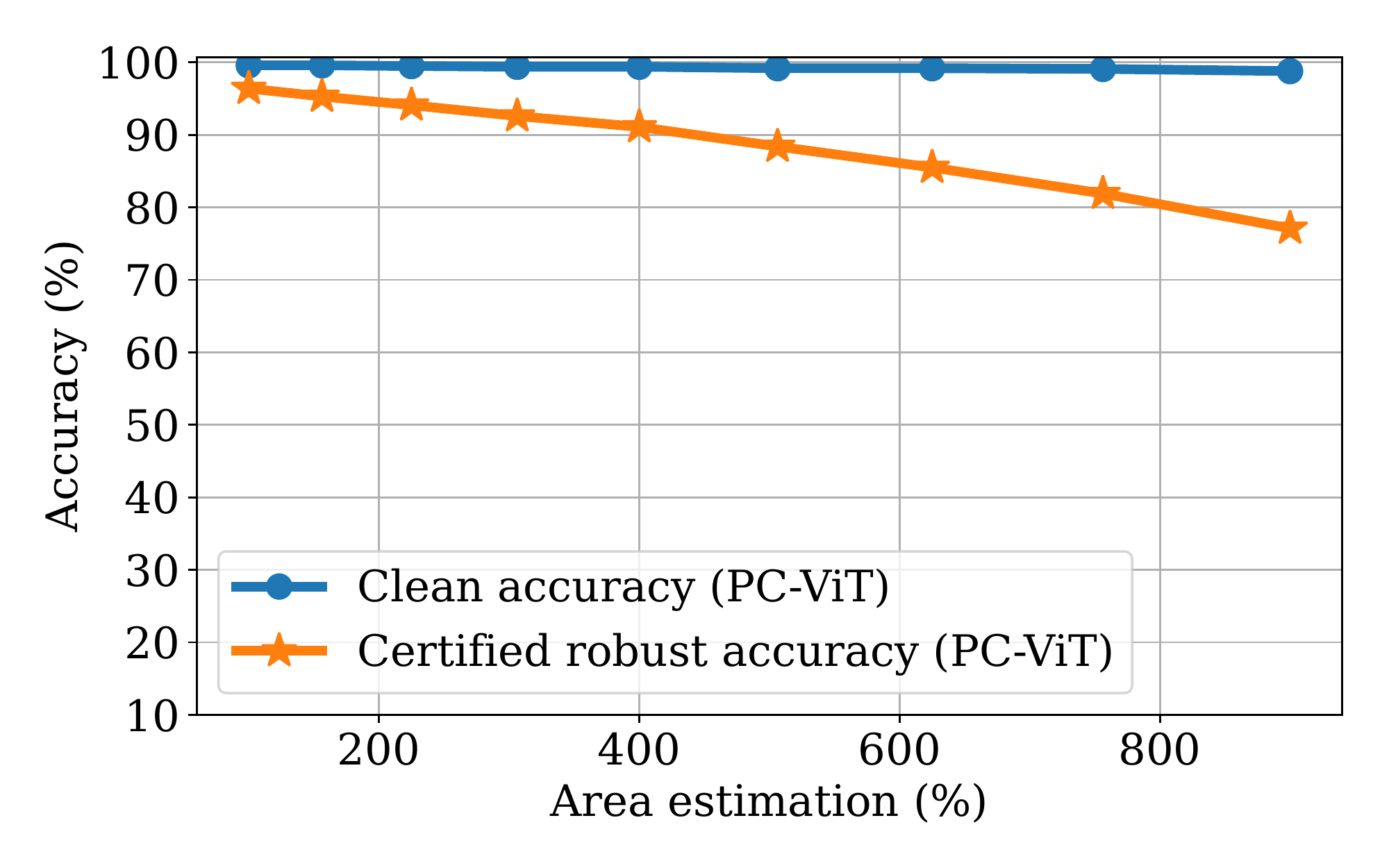}
\vspace{-2em}
\caption{Effect of over-estimated patch size for a 32$\times$32 patch (ImageNette)}\label{fig-over-mask-nette}
\end{minipage}%
\quad
\begin{minipage}[b]{0.32\linewidth}
\includegraphics[width=\linewidth]{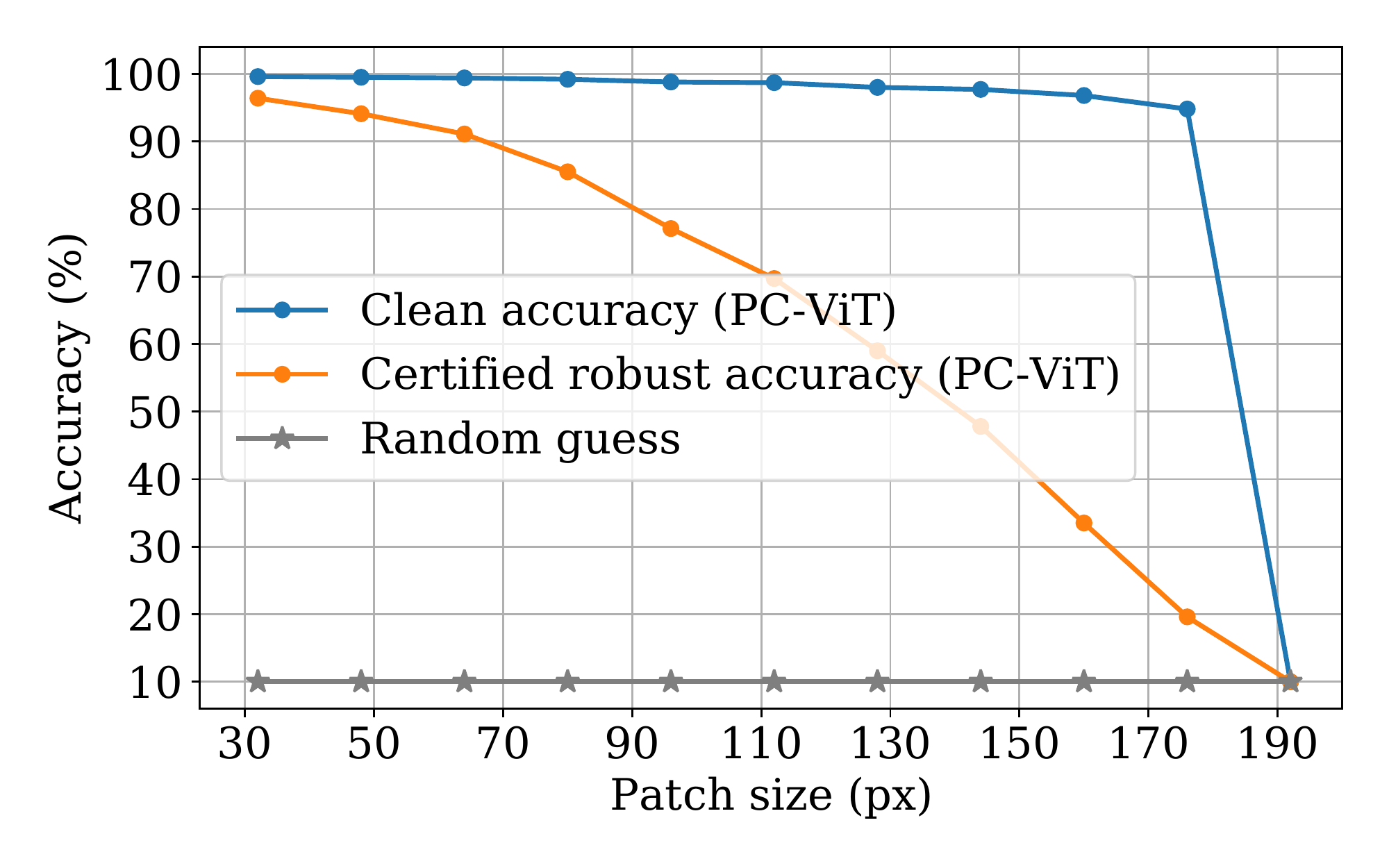}
\vspace{-2em}
\caption{Defense performance against different patch sizes on ImageNette} \label{fig-patch-size-nette}
\end{minipage}%
\end{figure*}
\begin{figure*}
\centering
\begin{minipage}[b]{0.32\linewidth}
\includegraphics[width=\linewidth]{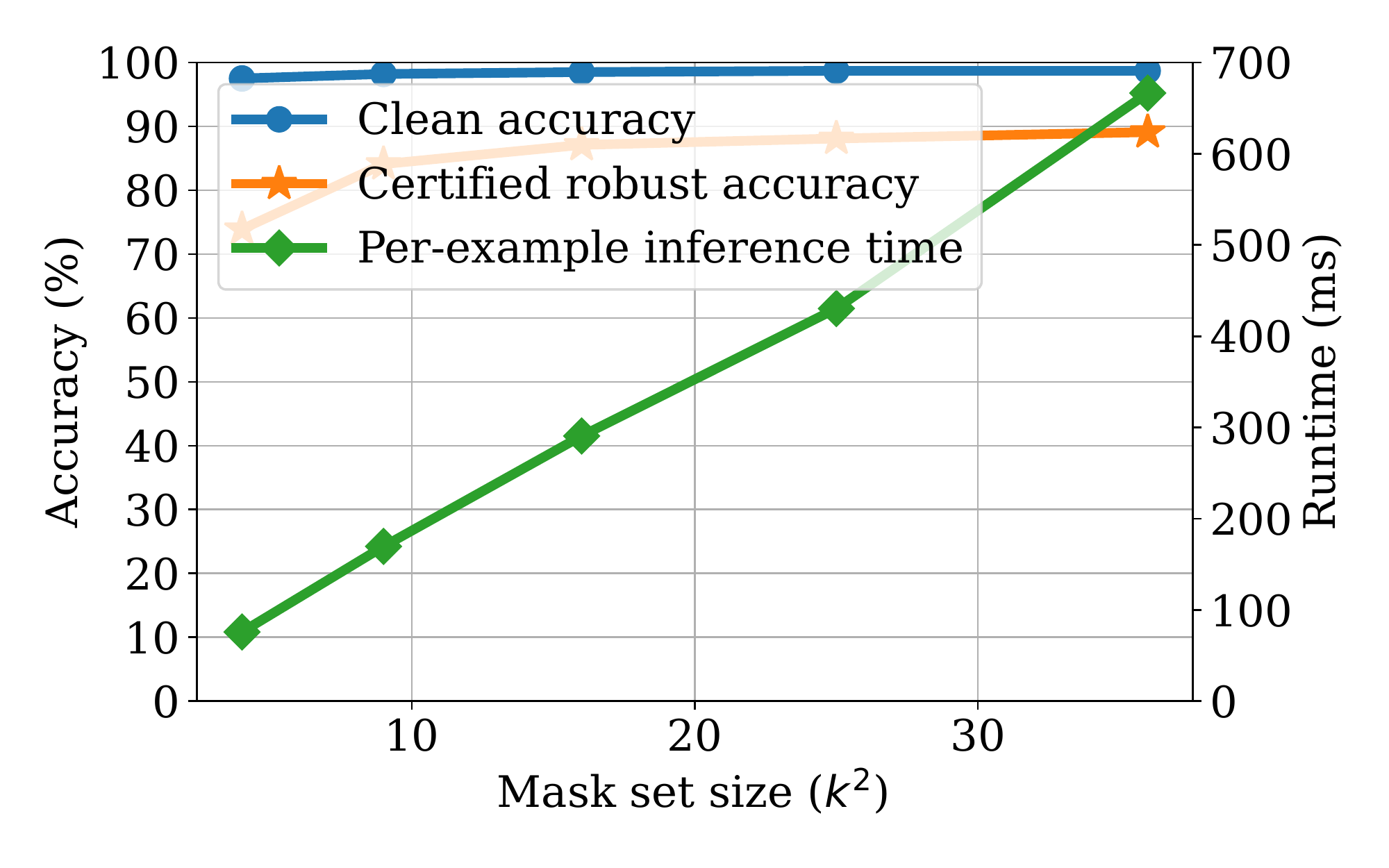}
\vspace{-2em}
\caption{The effect of mask set size on defense performance (CIFAR-10)}\label{fig-stride-cifar}
\end{minipage}%
\quad
\begin{minipage}[b]{0.32\linewidth}
\includegraphics[width=\linewidth]{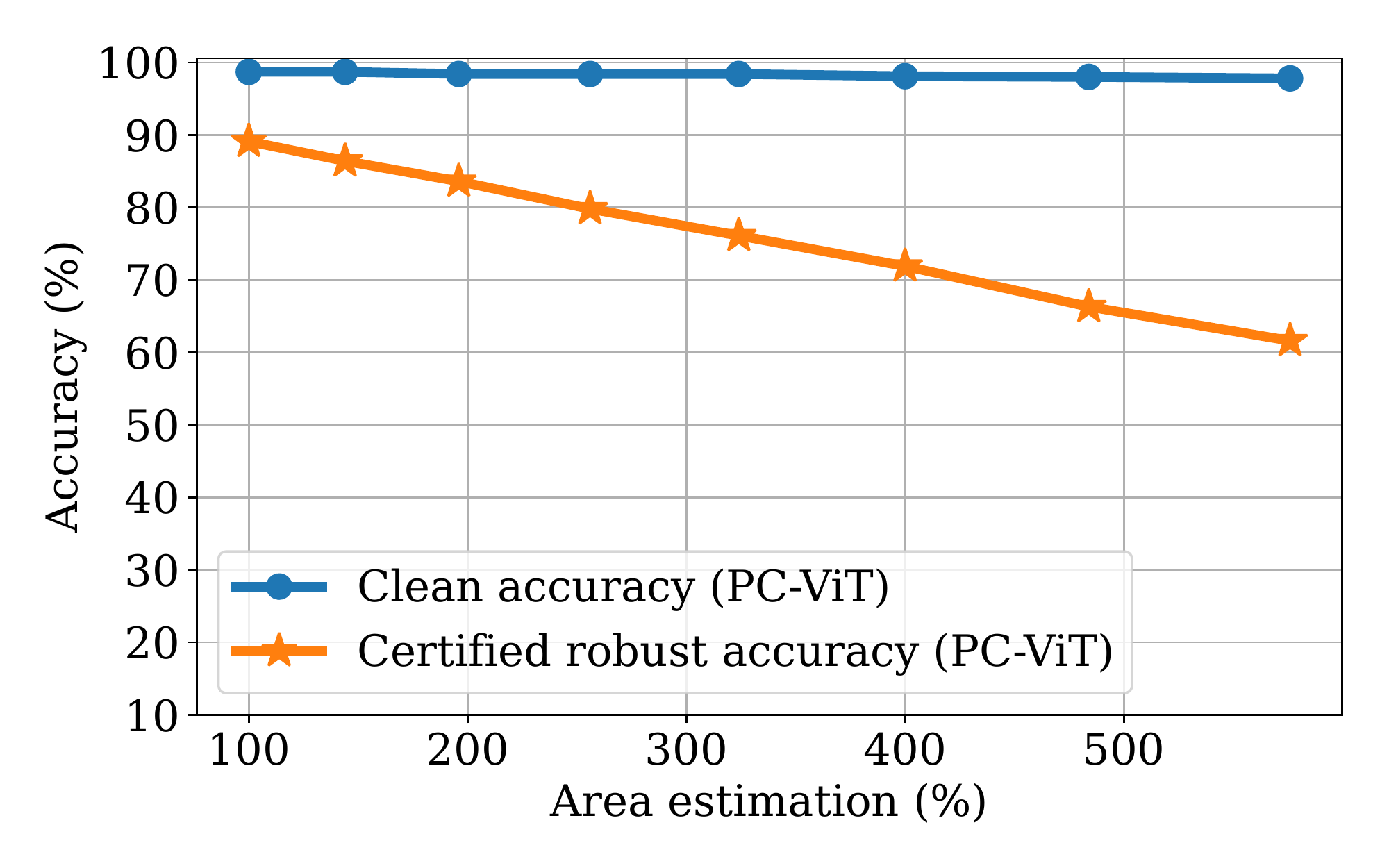}
\vspace{-2em}
\caption{Effect of over-estimated patch size for a 32$\times$32 patch (CIFAR-10)}\label{fig-over-mask-cifar}
\end{minipage}%
\quad
\begin{minipage}[b]{0.32\linewidth}
\includegraphics[width=\linewidth]{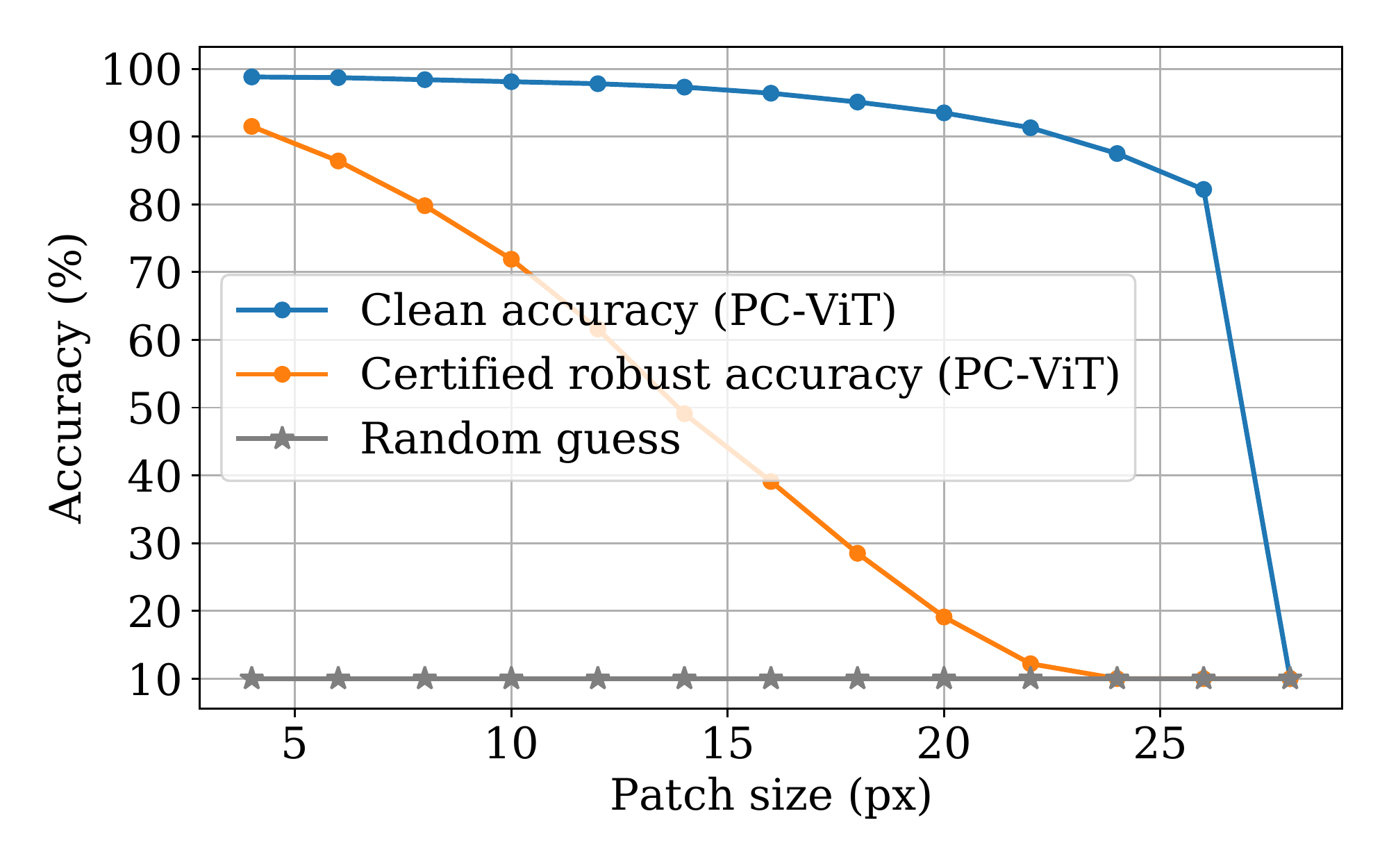}
\vspace{-2em}
\caption{Defense performance against different patch sizes on CIFAR-10} \label{fig-patch-size-cifar}
\end{minipage}%
\end{figure*}

\begin{figure}[t]
\begin{minipage}[b]{0.5\linewidth}
\includegraphics[width=\linewidth]{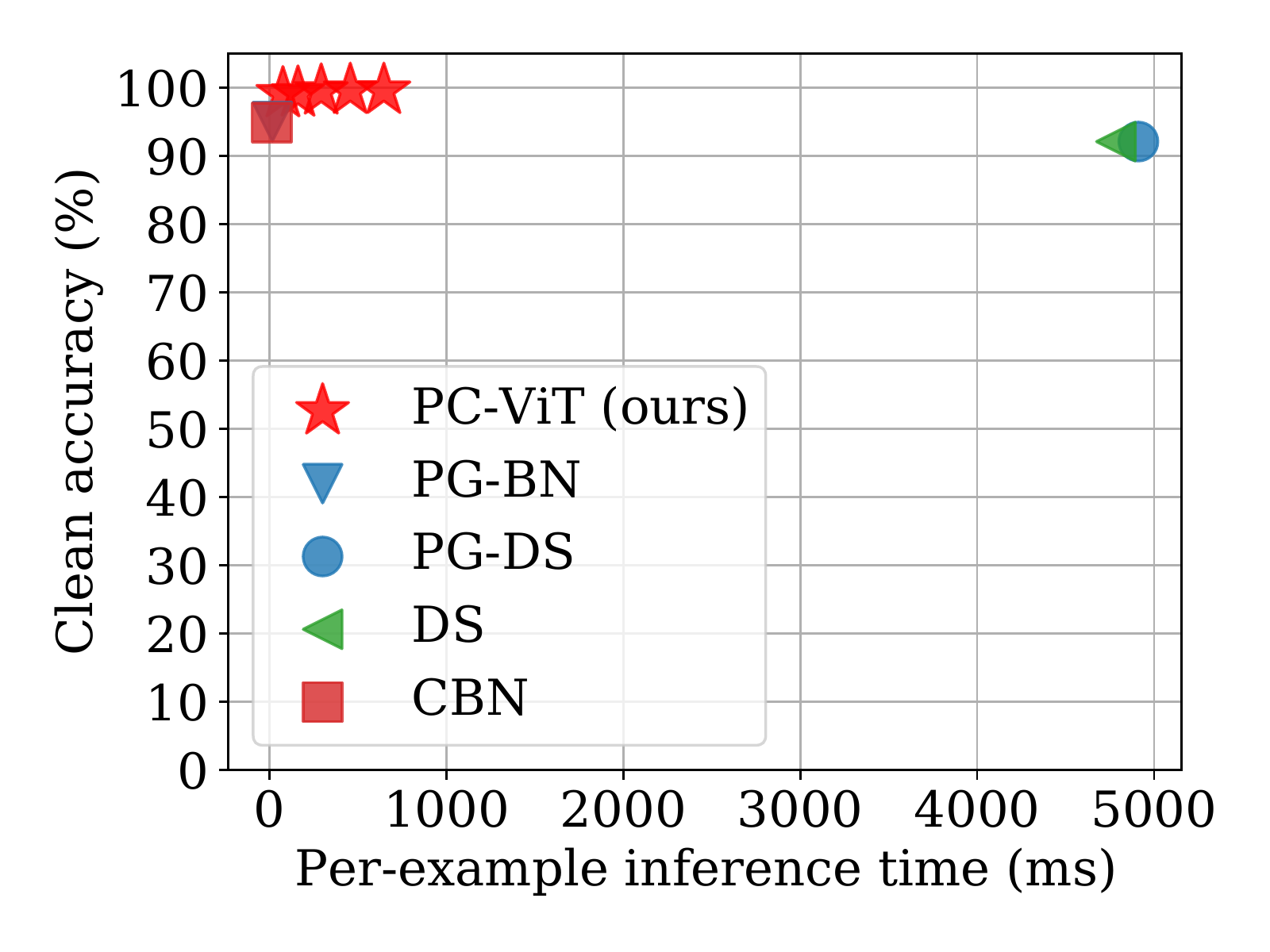}\\
\end{minipage}%
\begin{minipage}[b]{0.5\linewidth}
\includegraphics[width=\linewidth]{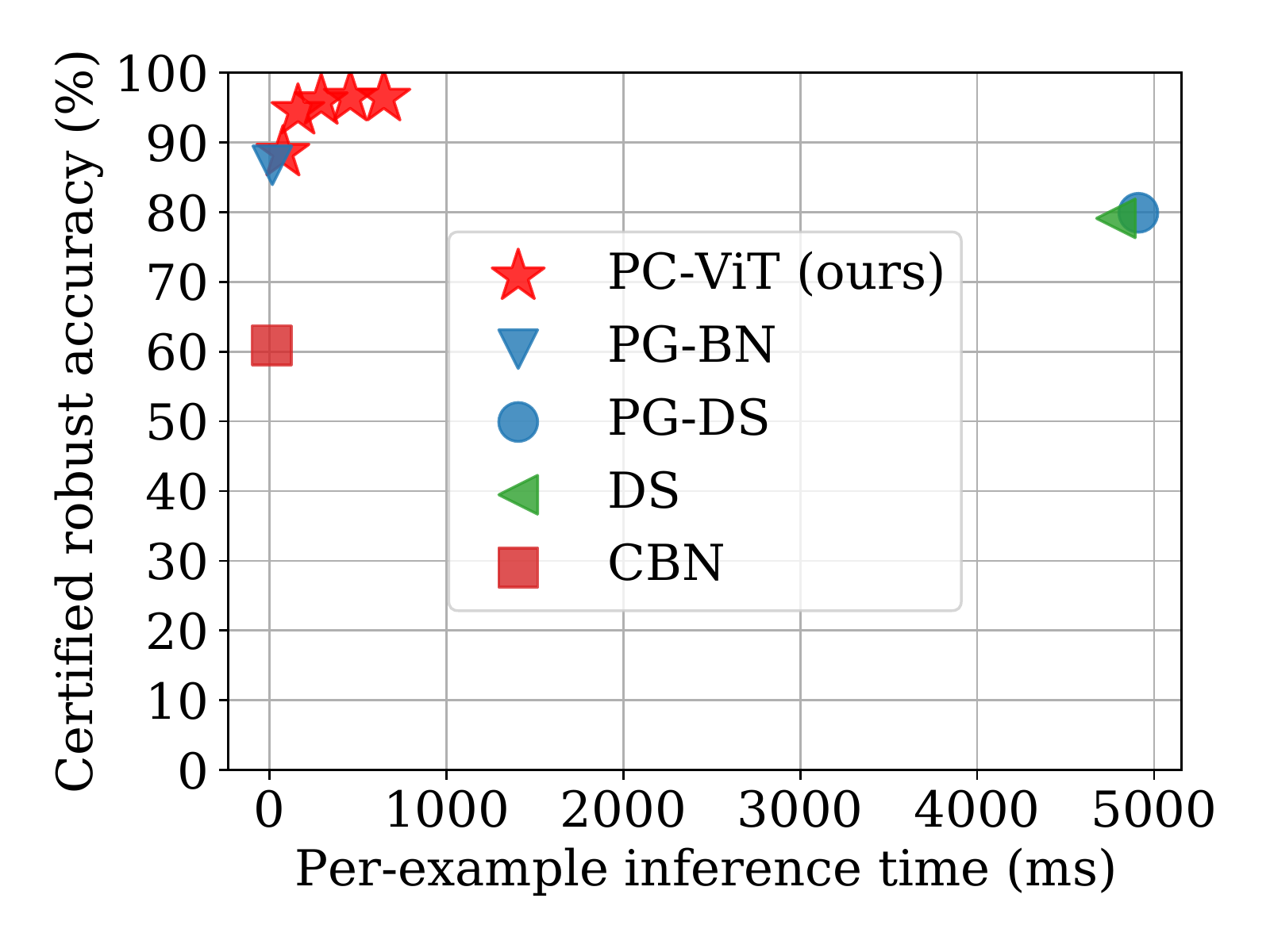}\\
\end{minipage}%
\vspace{-1em}
\caption{Trade-off between defense overhead and defense accuracy on ImageNette (left: clean accuracy; right: certified robust accuracy) }\label{fig-efficiency-nette}
\end{figure}

\begin{figure}[t]
\begin{minipage}[b]{0.5\linewidth}
\includegraphics[width=\linewidth]{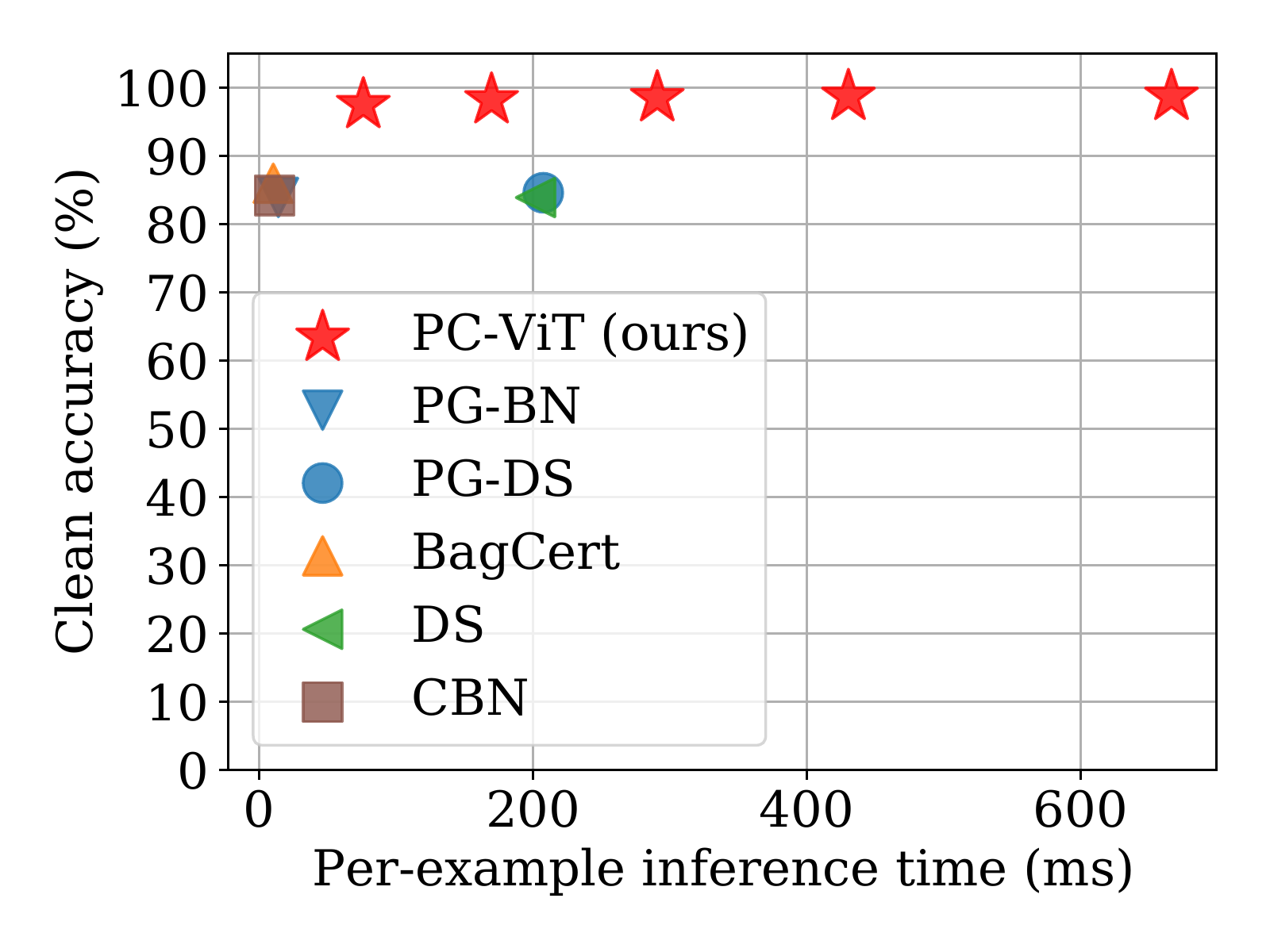}\\
\end{minipage}%
\begin{minipage}[b]{0.5\linewidth}
\includegraphics[width=\linewidth]{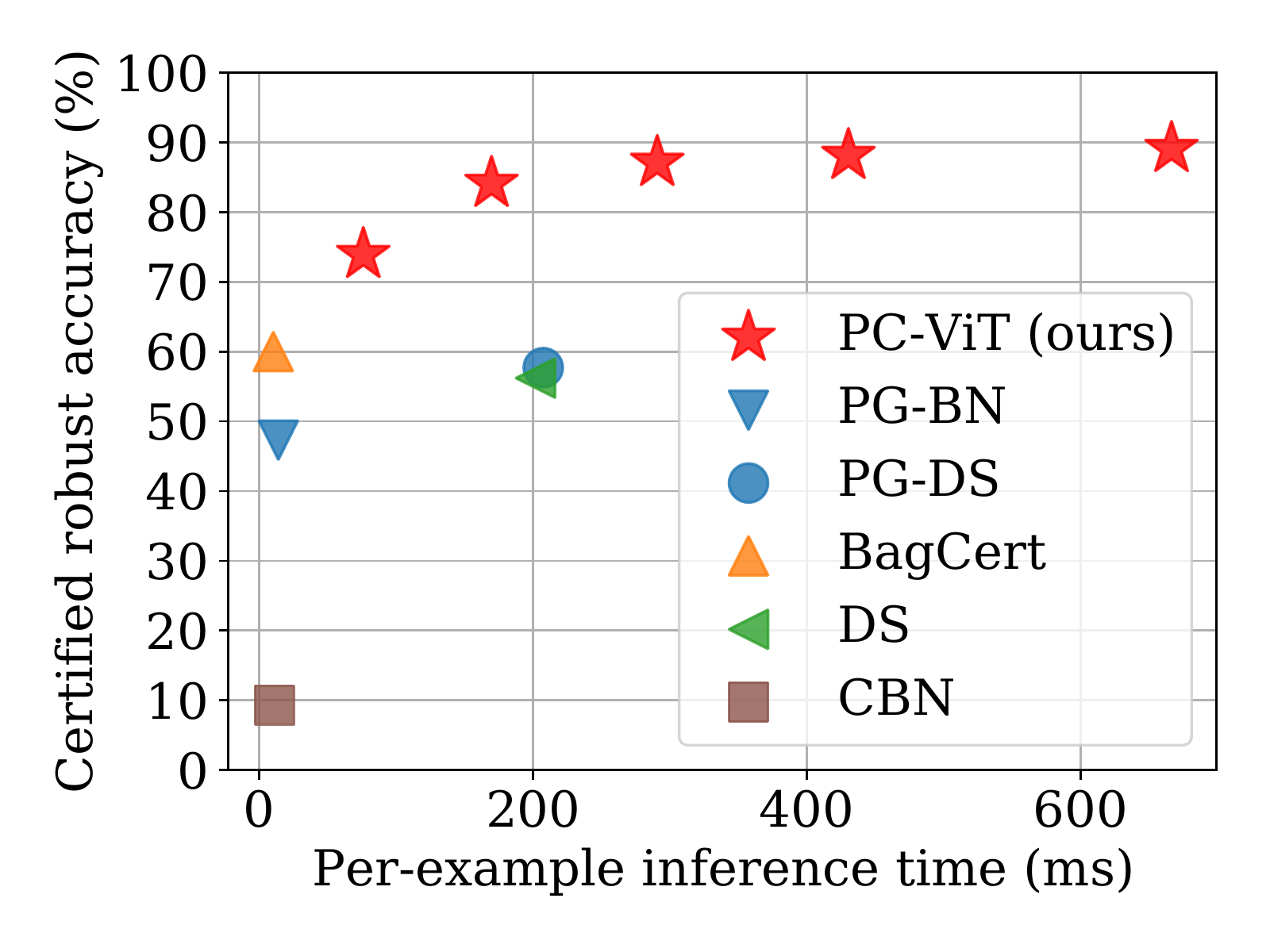}\\
\end{minipage}%
\vspace{-1em}
\caption{Trade-off between defense overhead and defense accuracy on CIFAR-10 (left: clean accuracy; right: certified robust accuracy) }\label{fig-efficiency-cifar}
\end{figure}

\textbf{Robustness certification.} The robustness certification condition for this challenger game is the same as our double-masking defense: two-mask correctness. This is because if a model has two-mask correctness, the first-round mask that removes the patch will never lose the challenger game. 

\textbf{Remark: defense complexity.} The first-round masking needs $O(|\cM|)$ masking operations evaluation. In the challenger game, every first-round mask will be used as a challenger for at most one time. Therefore, the complexity for the challenger game is also $O(|\cM|)$. In summary, the algorithm has a complexity of $O(|\cM|)$, in contrast to $O(|\cM|^2)$ of our double-masking algorithm (Algorithm~\ref{alg-prediction}). 

\textbf{Performance evaluation.} We note that challenger masking (Algorithm~\ref{alg-prediction-new}) and double-masking (Algorithm~\ref{alg-prediction}) have the same certified robust accuracy (certified via two-mask correctness), but different clean accuracy and inference efficiency. In Figure~\ref{fig-pc2}, we plot the per-example runtime (on clean images) and clean accuracy of two algorithms on the ImageNet dataset. As shown in the figure, the challenger masking algorithm has better defense efficiency but lower clean accuracy. We note that the double-masking algorithm has higher clean accuracy because it is more conservative in trusting a one-mask disagreer: double-masking requires all two-mask predictions in the second-round masking to give the same prediction label while challenger masking does not require this. As a result, the double-masking algorithm is less likely to return an incorrect disagreer prediction for clean images whose robustness cannot be certified. We prioritize the defense accuracy and choose the double-masking algorithm in the main body of the paper.

\section{Additional Evaluation Results}\label{apx-eval}
In this section, we analyze the effect of model architectures and the Cutout training~\cite{cutout}; we also provide evaluation results for three additional datasets, as well as evaluation results that were omitted in the main body (Section~\ref{sec-evaluation}). 

\begin{table}[t]
    \centering
        \caption{Effect of different models and masked model training}
    %    \vspace{-0.5em}
    \resizebox{\linewidth}{!}
    {%\scriptsize
    \begin{tabular}{c|c|c|c|c|c|c}
    \toprule
          Dataset &  \multicolumn{2}{c|}{ImageNette} &\multicolumn{2}{c|}{ImageNet} & \multicolumn{2}{c}{CIFAR-10}  \\
         \midrule
     Accuracy (\%)   & clean &robust& clean &robust& clean &robust\\
         \midrule
     PC-ResNet-vanilla & 99.4&88.1&81.1&41.6&94.3&39.4 \\
     PC-ResNet-cutout & 99.6&94.4&81.6&53.0&97.8&78.8\\
     \midrule
     PC-ViT-vanilla &  99.3&94.2&83.6&59.4&97.9&77.5\\
     PC-ViT-cutout & \textbf{99.6}&\textbf{96.4}&\textbf{83.9}&\textbf{62.1}&\textbf{98.7}&\textbf{89.1}\\
     \midrule
     PC-MLP-vanilla & 98.6&91.3&79.6&53.1&95.6&62.0\\
    PC-MLP-cutout & 99.3&95.3&79.4&53.8&97.0&78.0\\

     \bottomrule
    \end{tabular}}
    \label{tab-cutout}
\end{table}

\begin{table}[t]
    \centering
        \caption{Defense performance on additional datasets}
        %\vspace{-0.5em}
    \resizebox{\linewidth}{!}
    {%\scriptsize
    \begin{tabular}{c|c|c|c|c|c|c}
    \toprule
          Dataset &   \multicolumn{2}{c|}{Flowers-102~\cite{flowers}} &\multicolumn{2}{c|}{CIFAR-100~\cite{cifar}} & \multicolumn{2}{c}{SVHN~\cite{svhn}}  \\
         \midrule
    Patch size    &\multicolumn{2}{c|}{2\% pixels} & \multicolumn{2}{c|}{2.4\% pixels} & \multicolumn{2}{c}{2.4\% pixels} \\
         \midrule  
     Accuracy (\%)    & clean &robust&  clean &robust&  clean &robust\\
         \midrule
     PC-ResNet-vanilla & 93.9&61.0& 85.1&18.8&96.6&0.9\\
     PC-ResNet-cutout & 96.9&84.8&88.2&53.6&96.9&65.3  \\
     \midrule
     PC-ViT-vanilla & 99.6&96.3&91.5&54.2&96.3&18.4\\
     PC-ViT-cutout & 99.6&98.0& 92.7&69.2&97.8&71.7\\
    \midrule
    PC-MLP-vanilla &92.4&60.3& 83.0&32.6&94.6&8.8\\
     PC-MLP-cutout & 95.1 &79.2&85.6&50.6& 95.5&57.0\\

     \bottomrule
    \end{tabular}}
    \label{tab-more-ds}
\end{table}

\textbf{The Vision Transformer (ViT) based defense has the best performance and the Cutout training augmentation can significantly improve model accuracy.} In this analysis, we study the defense performance of different models (ResNet~\cite{resnet}, ViT~\cite{vit}, and MLP~\cite{resmlp}), and analyze the effect of Cutout training augmentation (discussed in Appendix~\ref{apx-setup}). We report the clean accuracy and certified robust accuracy of different models for different datasets in Table~\ref{tab-cutout}. For ImageNette/ImageNet, we report certified robust accuracy for a 2\%-pixel patch; for CIFAR-10, we report for a 2.4\%-pixel patch. First, we can see that despite the similar clean accuracy of vanilla models (recall Table~\ref{tab-vanilla}), PC-ViT has significantly higher clean accuracy and certified robust accuracy than PC-ResNet and PC-MLP. We believe that this is because ViT has been found more robust to pixel masking~\cite{paul2021vision}.  

Second, we can see that the Cutout augmentation~\cite{cutout} can have a large impact on the defense performance. On ImageNette and CIFAR-10 datasets, the Cutout training (denoted by the suffix ``cutout") significantly improves both clean accuracy and certified robust accuracy. For example, the clean accuracy of PC-ResNet on CIFAR-10 improves from 94.3\% to 97.8\% and the robust accuracy improves from 39.4\% to 78.8\%. However, we find that the Cutout training exhibits a different behavior on ImageNet. For ResNet, the Cutout training significantly improves the certified robust accuracy. However, for PC-ViT and PC-MLP, the Cutout training on ImageNet only has a small improvement. We partially attribute this to the challenging nature of the ImageNet model training.

\textbf{Evaluation results on additional datasets.} In addition to the three datasets used in the Section~\ref{sec-evaluation}, we provide evaluation results for another three datasets: the 102-class Oxford Flowers-102~\cite{flowers} dataset, the 100-class CIFAR-100~\cite{cifar} and the 10-class SVHN~\cite{svhn}. In Table~\ref{tab-more-ds}, we report defense performance against a 2\%-pixel patch for Flowers-102 (i.e., a 32$\times$32 patch on the 224$\times$224 image) and 2.4\%-pixel patch for CIFAR-100 and SVHN (i.e., a 5$\times$5 patch on the 32$\times$32 image). As shown in the table, our defense works well on these three additional datasets and this further demonstrates the general applicability of \framework.

\begin{table}[t]
    \centering
        \caption{Per-example inference time of different models}   
        \label{tab-time}
       % \vspace{-0.5em}
\resizebox{\linewidth}{!}
{
    \begin{tabular}{c|c|c|c|c|c}
    \toprule
 Vanilla Models & ResNet-50~\cite{bit} &  ViT~\cite{vit}& ResMLP~\cite{resmlp} &\multicolumn{2}{c}{--}\\
  \midrule
Time (ms)  &12.9 &16.7&21.7&\multicolumn{2}{c}{--}\\
\midrule
   Prior defenses &  CBN~\cite{zhang2020clipped}& DS~\cite{levine2020randomized} &PG-BN~\cite{xiang2021patchguard} & PG-DS~\cite{xiang2021patchguard} &BagCert~\cite{metzen2021efficient} \\
   \midrule
 Time (ms)&12.0&4740.0&44.2&4918.0&$\sim$14.0\\
 \midrule
  Our defenses &PC-ResNet&PC-ViT & PC-MLP&\multicolumn{2}{c}{--}\\
  \midrule
Time (ms)& 60.2 - 569.5&78.8 - 672.4&91.8 - 834.8 &\multicolumn{2}{c}{--}\\
     \bottomrule
    \end{tabular}}
\end{table}
\begin{figure*}
\centering
\begin{minipage}[b]{0.32\linewidth}
\includegraphics[width=\linewidth]{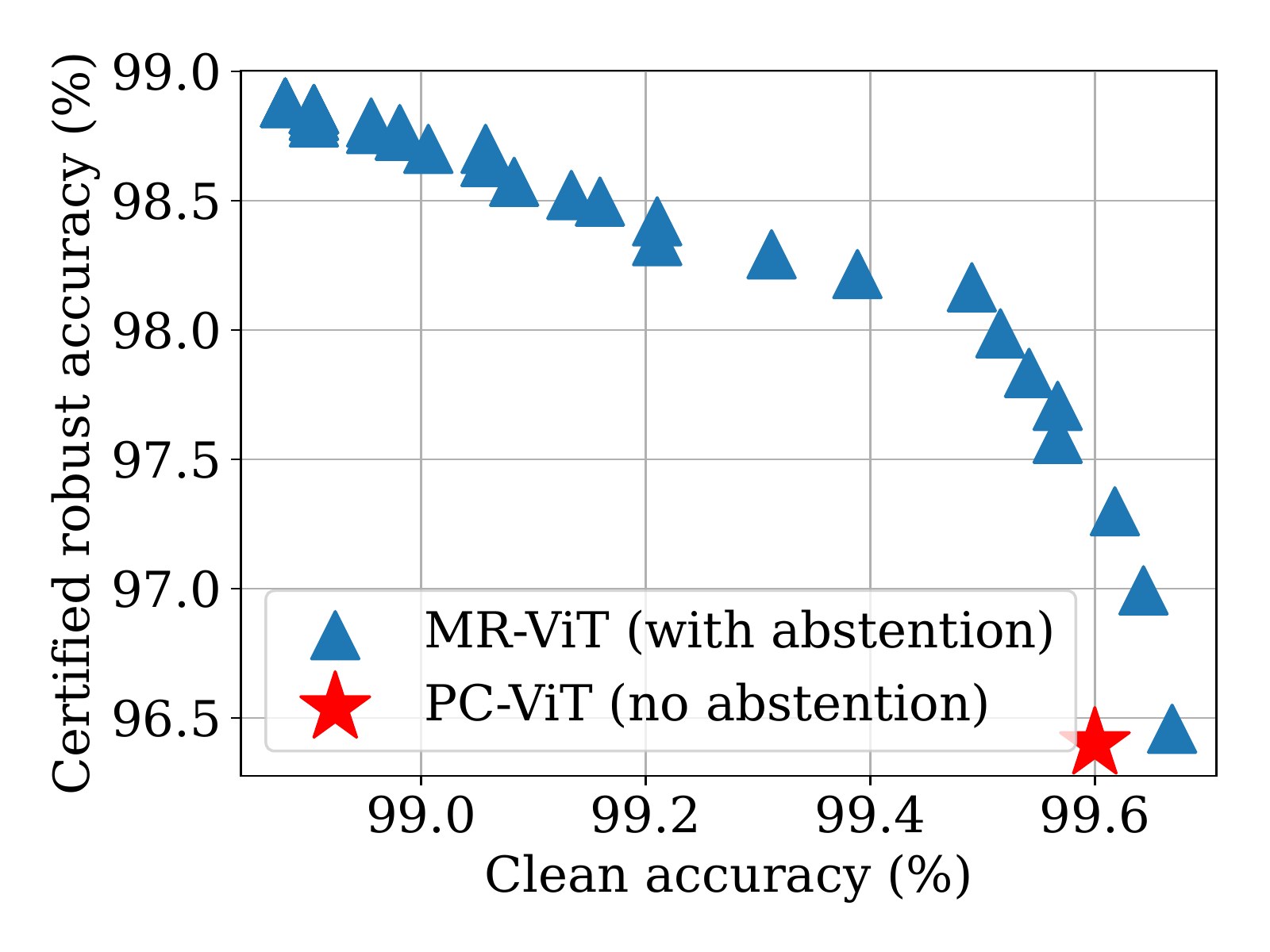}
\vspace{-2em}
%\caption{The effect of mask set size on defense performance (ImageNet)}\label{fig-mr-nette}
\end{minipage}%
\quad
\begin{minipage}[b]{0.32\linewidth}
\includegraphics[width=\linewidth]{img/mr_net.pdf}
\vspace{-2em}
%\caption{Effect of over-estimated patch size for a 32$\times$32 patch (ImageNet)}\label{fig-mr-net}
\end{minipage}%
\quad
\begin{minipage}[b]{0.32\linewidth}
\includegraphics[width=\linewidth]{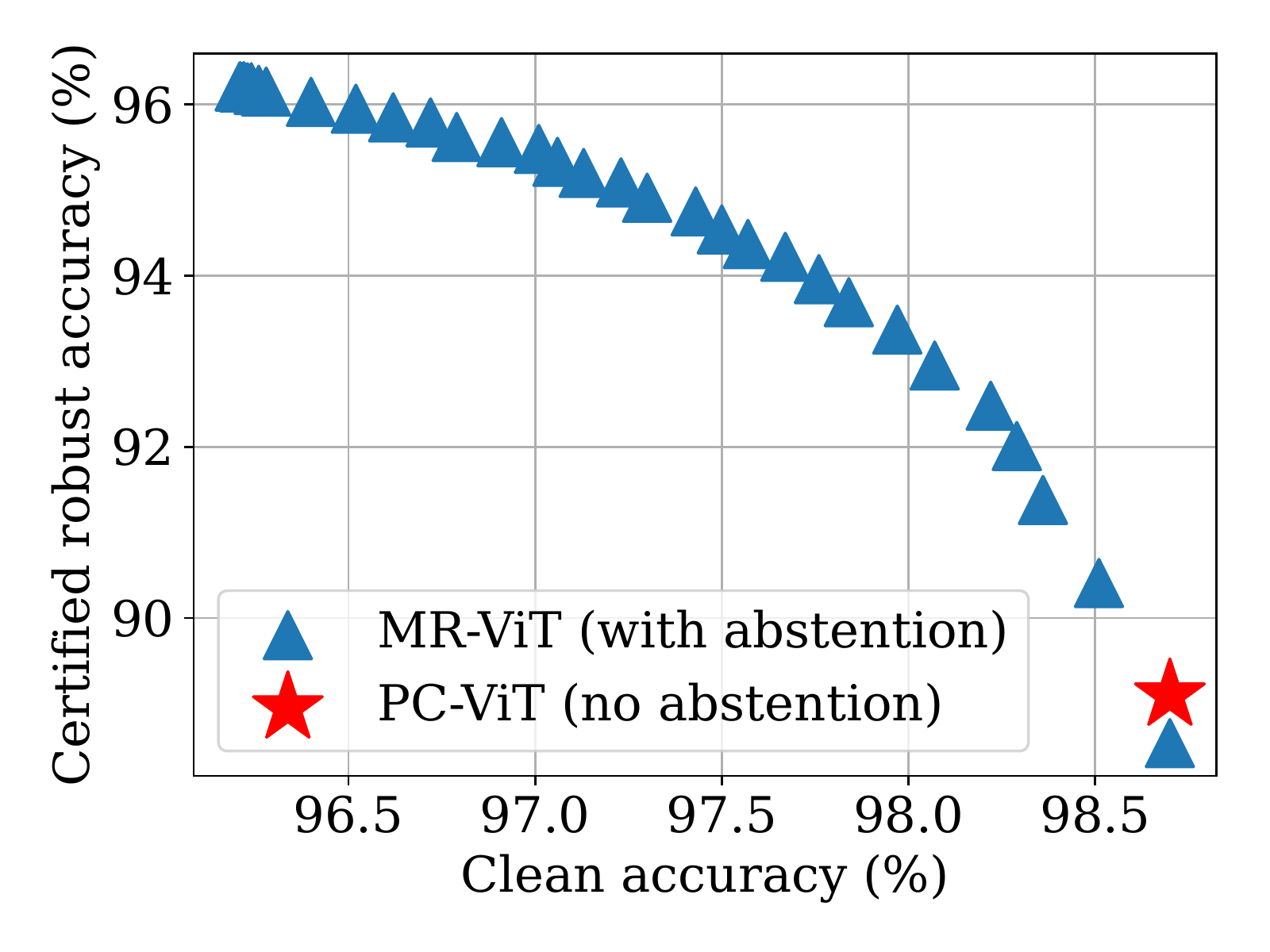}
\vspace{-2em}
\end{minipage}%
\caption{Defense performance for PC-ViT and PC-MR-ViT on ImageNette (left), ImageNet (middle), and CIFAR-10 (right)} \label{fig-mr}
\end{figure*}

\textbf{Trade-off between efficiency and performance (ImageNette and CIFAR-10).} In Figure~\ref{fig-stride-nette} and Figure~\ref{fig-stride-cifar}, we plot the clean accuracy, certified robust accuracy, and per-example inference time with different mask set sizes for ImageNette and CIFAR-10. We use ViT as the underlying model for this evaluation. The accuracy is evaluated against a 2\%-pixel patch for ImageNette, and a 2.4\%-pixel patch for CIFAR-10. These two figures exhibit a similar observation as that from Figure~\ref{fig-stride-net} in Section~\ref{sec-detailed-eval}: a larger mask set improves clean accuracy and certified robust accuracy but incurs a moderate rise in per-example inference time. In Figure~\ref{fig-efficiency-nette} and Figure~\ref{fig-efficiency-cifar}, we analyze the trade-off between defense accuracy and defense efficiency for ImageNette and CIFAR-10. The observations are similar to the ones we made in Figure~\ref{fig-efficiency}. As a side note, different defenses use different input image resolutions for CIFAR-10 images (\framework uses 224$\times$224; CBN and PG-BN use 192$\times$192; DS, PG-DS, and BagCert use 32$\times$32), which has a huge impact on the per-example inference time. 

\textbf{Effect of patch size over-estimation (ImageNette and CIFAR-10).} In Figure~\ref{fig-over-mask-nette} and Figure~\ref{fig-over-mask-cifar}, we report the defense performance when we over-estimate the patch size for ImageNette and CIFAR-10. The actual patch size is 32$\times$32 for 224$\times$224 ImageNette images and 5$\times$5 for 32$\times$32 CIFAR images. The observation is similar to the one in Section~\ref{sec-detailed-eval}.

\textbf{Performance against large patch sizes (ImageNette and CIFAR-10).} In Figure~\ref{fig-patch-size-nette} and Figure~\ref{fig-patch-size-cifar}, we report the defense performance against different patch sizes for ImageNette and CIFAR-10. The observation is similar to that in Section~\ref{sec-detailed-eval}.

\textbf{Additional runtime results for ImageNet.} In Table~\ref{tab-time}, we report per-example runtime of different defenses on ImageNet. For \framework, we report a range of per-example runtime when we use different mask set sizes ($k^2$ from $2^2$ to $6^2$). The same results are plotted in Figure~\ref{fig-efficiency} in Section~\ref{sec-detailed-eval}. 

\textbf{Additional results for Minority Reports~\cite{mccoyd2020minority}.} In Section~\ref{sec-discussion-mr}, we analyze the performance of Minority Reports (MR)~\cite{mccoyd2020minority}. In Figure~\ref{fig-mr}, we additionally report results for ImageNette (2\%-pixel patch) and CIFAR-10 (2.4\%-pixel patch). The observation is similar to what we have in Section~\ref{sec-discussion-mr}.

\begin{table*}[t]
    \centering
        \caption{Empirical and certified robust accuracy of undefended vanilla models and \framework models}
    \label{tab-empirical}
    \resizebox{\linewidth}{!}
{\scriptsize
%\begin{threeparttable}
    \begin{tabular}{c|c|c|c|c|c|c|c|c|c}
    \toprule
      Dataset   & \multicolumn{3}{c|}{ImageNette} & \multicolumn{3}{c|}{ImageNet} & \multicolumn{3}{c}{CIFAR-10}\\
      \midrule
      \multirow{2}{*}{Robust accuracy}  & empirical & empirical & \multirow{2}{*}{certified}&empirical & empirical &certified&empirical & empirical & \multirow{2}{*}{certified}\\
      & (100 iters; 1 loc)& (200 iters; 10 locs)& & (100 iters; 1 loc)& (200 iters; 10 locs)& & (100 iters; 1 loc)& (200 iters; 10 locs)&\\
         \midrule
Undefended &7.0\%&0\%&0\%&1.7\%&0\%&0\%&4.2\%&0\%&0\%\\
\framework&98.5\%&97.3\%&95.4\%&77.2\%&69.9\%&63.6\%&96.4\%&93.7\%&90.2\%\\
\bottomrule
\end{tabular}
  %    \begin{tablenotes}
  %    \item[$\dag$] Empirical robust accuracy I is evaluated using 100 attack iterations and 1 random patch location.
  %  \item[$\ddag$]Empirical robust accuracy II is evaluated using 200 attack iterations and 10 random patch locations.
  %\end{tablenotes}
 %   \end{threeparttable}
    }

\end{table*}

\section{Case Study: Certifiable Robustness vs. Empirical Robustness}

In this paper, we focus on certifiable robustness that accounts for all possible attackers within the threat model; the certified robust accuracy is a lower bound on model's accuracy in presence of any adaptive attackers (recall Section~\ref{sec-provable}). Since we are able obtain such lower bounds, it is not necessary to empirically implement any concrete adaptive attack for robustness evaluation in Section~\ref{sec-evaluation}. In this section, we provide a case study to better understand the difference between empirical robustness and certifiable robustness.

\textbf{A stronger attacker can downgrade the empirical robust accuracy but not certified robust accuracy.} In Table~\ref{tab-empirical}, we report empirical and certified robust accuracy of undefended models and \framework models (using ViT) for 1000 randomly selected test images from different datasets. We perform a PatchAutoPGD attack~\cite{Mu2021defending,croce2020reliable} on vanilla ViT models to generate adversarial images. We report two types of empirical robust accuracy: the first one is evaluated using 100 attack iterations and 1 random patch location; the second one considers a stronger attacker using 200 attack iterations and 10 random patch locations. First, we can see that the empirical robust accuracy is always higher than the certified robust accuracy, as expected. Second, we can see that the empirical robust accuracy of both undefended models and \framework models drops significantly when we consider a stronger attacker (200 iters and 10 locs). This demonstrates the biggest limitation of empirical robustness evaluation: there might exist a stronger attacker (e.g., who has more computational resources and more sophisticated attack strategies) that can further downgrade model accuracy. In contrast, the certified robust accuracy studied in this paper is a provable lower bound on model's accuracy against any adaptive attacker within the threat model. In other words, the attacker can never push the empirical robust accuracy below the certified robust accuracy.

\section{\framework Robustness for Images with Different Object Sizes and Object Classes}\label{apx-dissect}
In this subsection, we study how the certified robustness of PC-ViT is affected by object sizes and object classes on the ImageNet~\cite{deng2009imagenet} dataset. 

\textbf{Object size.} We take the annotations of object bounding boxes from the ImageNet~\cite{deng2009imagenet} dataset to study how the object size affects the certified robustness. For each image, we count the number of pixels of the union of all bounding boxes as our measure of the salient object size. We plot the certified robust accuracy (against a 2\%-pixel patch) of images with different salient object sizes (in the percentage of image pixels) in Figure~\ref{fig-area-analysis}. As shown in the figure, we can see that \framework generally has higher certified robust accuracy for larger objects. This is an expected result since small objects might be completely occluded by the adversarial patch (see Figure~\ref{fig-small-obj} for visual examples).

\textbf{Object class.} In Figure~\ref{fig-class-analysis}, we plot the distribution of certified robust accuracy for different image object classes. We can see that most classes have high certified robustness, but the certified robust accuracy can vary greatly across different object classes. For example, we can achieve 100\% certified robust accuracy for some classes (e.g., classes ``n02116738: African hunting dog", ``n02342885: hamster", ``n11879895: rapeseed", and ``n12057211: yellow lady's slipper") while we only have 4\% certified robust accuracy for some classes (e.g., classes ``n02107908: Appenzeller" and ``n04152593: screen"). Moreover, we plot the average object size (in the percentage of the image pixels) for each class. The result further demonstrates that the certified robust accuracy is affected by not only the object size (Figure~\ref{fig-area-analysis}), but also the object class. Further analysis of the classes with poor performance is one of our future work directions.

\begin{figure}[t]
    \centering
    \includegraphics[width=0.8\linewidth]{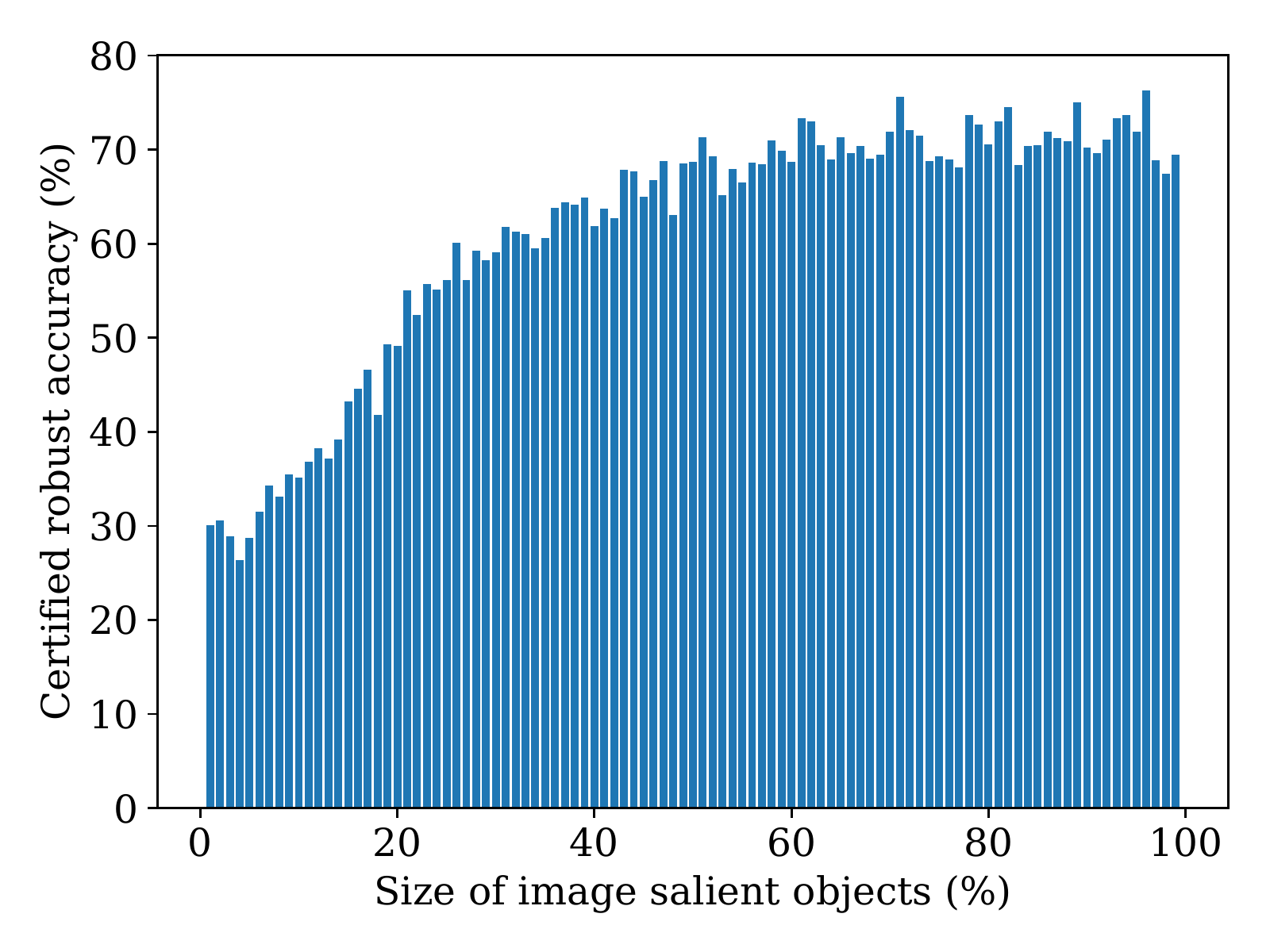}
       \vspace{-1em}
    \caption{Certified robust accuracy for images with different salient object sizes (PC-ViT on ImageNet)}
    \label{fig-area-analysis}
\end{figure}
\begin{figure}[t]
    \centering
    \includegraphics[width=0.9\linewidth]{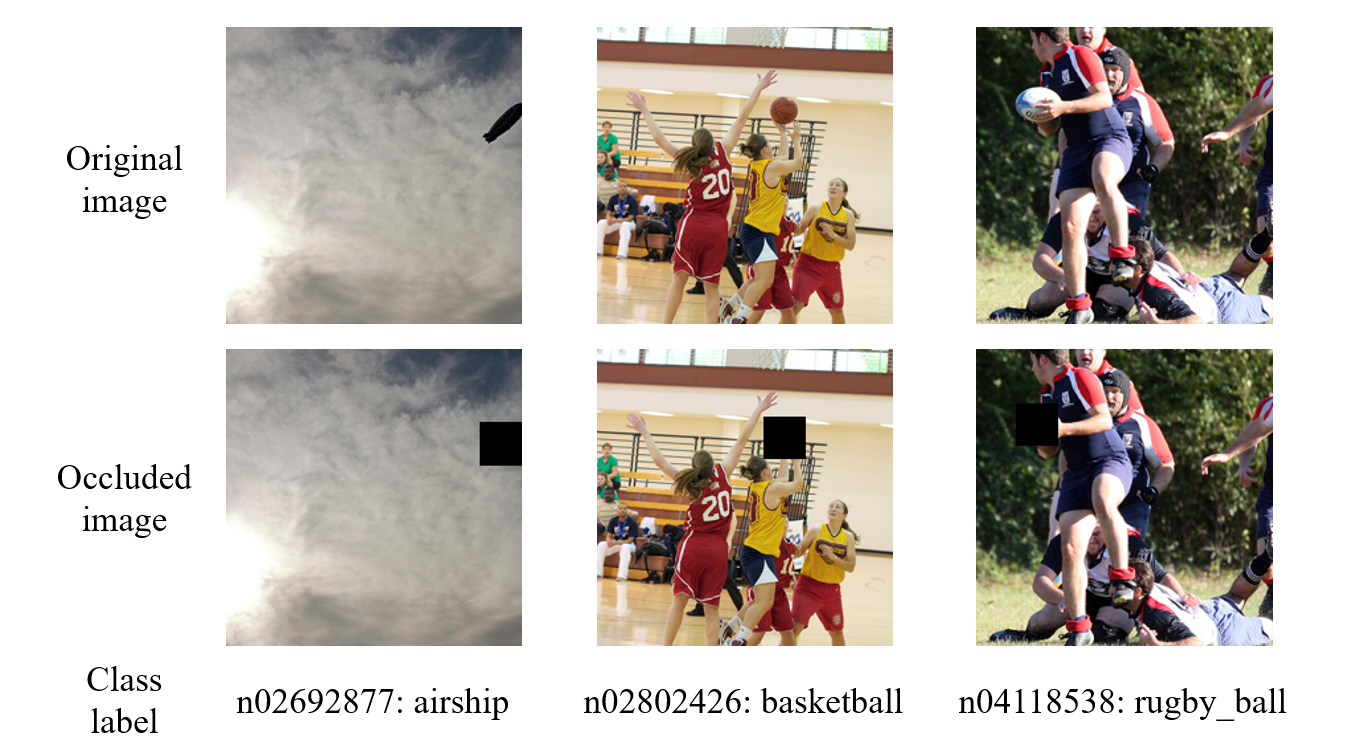}
       \vspace{-1em}
    \caption{Visualization of 32$\times$32 occlusion on 224$\times$224 ImageNet images with small objects}
    \label{fig-small-obj}
\end{figure}
\begin{figure}[b]
    \centering
    \includegraphics[width=0.8\linewidth]{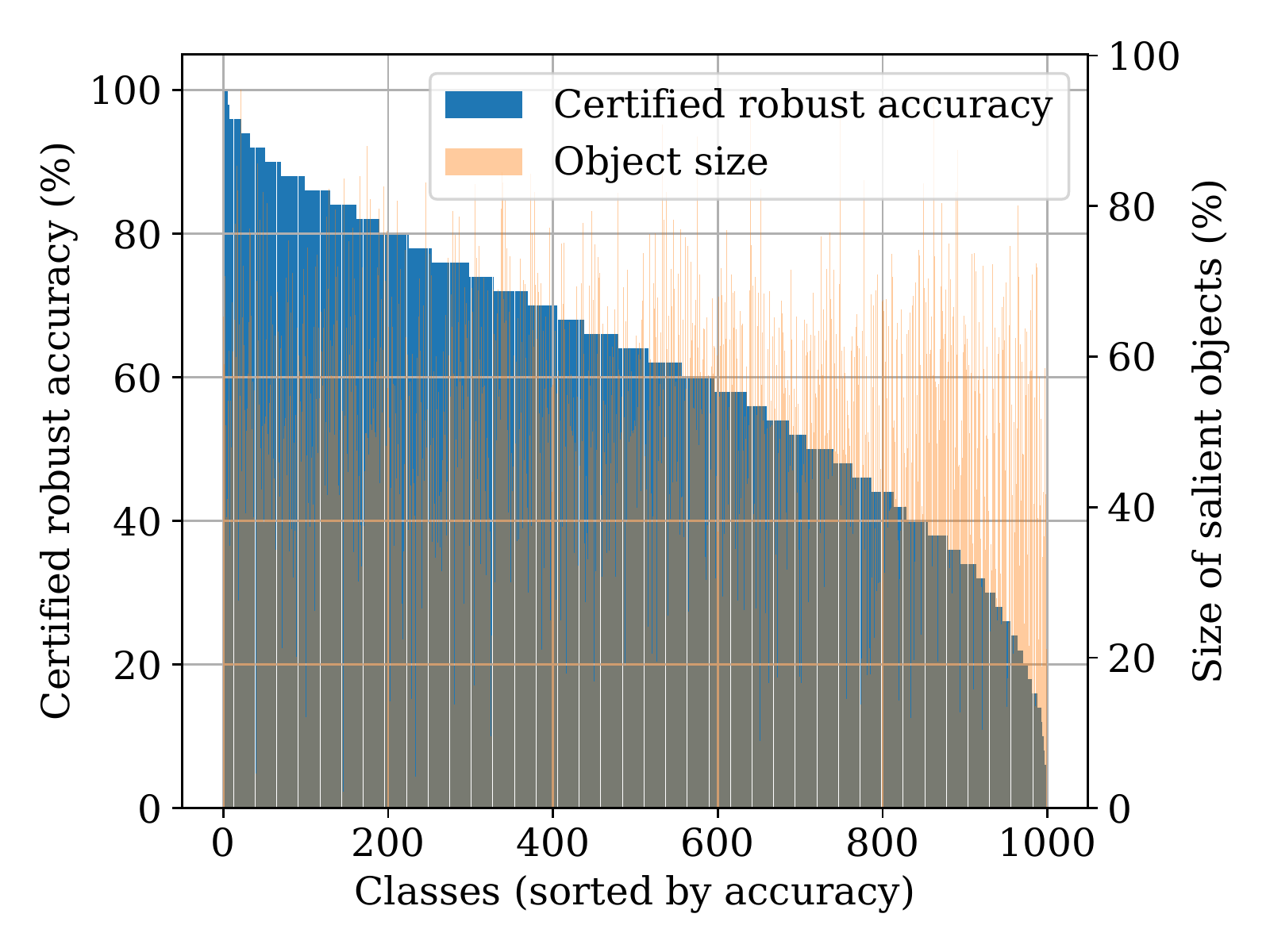}
       \vspace{-1em}
    \caption{Certified robust accuracy and average object size across different classes (PC-ViT on ImageNet; the class indices are sorted based on the certified robust accuracy)}
    \label{fig-class-analysis}
\end{figure}

%\section{Towards Understanding the Limit of Certifiably Robust Image Classification against Adversarial Patches}\label{sec-dicussion-limit}

%In this subsection, we try to provide a preliminary analysis of the limit of certifiably robust image classification: given a vanilla model, an image, a patch threat model, \textit{what is the optimal robustness we can achieve using this undefended model?} To answer this question, we assume that we can use a mask with the shape/size as the patch and evaluate all possible masked predictions with unlimited computation resources. These assumptions can give us a necessary condition as well as a sufficient condition for certified classification robustness. 

%\textit{Necessary condition: prediction invariance to all single masks.} If there is one location causing the masked prediction to change, the attacker can simply place a black patch (i.e., a mask) there to induce model misclassification; there is little we can do to recover the correct prediction.

%\textit{Sufficient condition: prediction invariance to all two-mask combinations.} As shown in Theorem~\ref{thm-pred}, this condition implies that our double-mask defense can always make correct predictions. 

%Therefore, we can see that the optimal robustness for a given model lies in between model invariance to one-masks and two-masks. This implies that improving prediction invariance to pixel masks could be an interesting research direction for robustness against adversarial patches.

\end{document}